\documentclass[11pt]{article}
\usepackage{amssymb}

\usepackage{fullpage}
\usepackage{authblk}
\usepackage{MnSymbol}
\usepackage{complexity}
\usepackage{nicefrac}

\usepackage{enumitem}
\usepackage{microtype}
\usepackage{graphicx}
\usepackage{subfigure}
\usepackage{booktabs} 
\usepackage{pifont}

\usepackage{bbm}

\usepackage{natbib}

\usepackage{url}
\usepackage{amsmath}
\usepackage{amssymb}
\usepackage{amsthm}
\usepackage{bbm}
\usepackage{algorithm}
\usepackage[noend]{algorithmic}
\usepackage[algo2e,vlined,ruled,linesnumbered]{algorithm2e}
\usepackage{footmisc}
\usepackage[table]{xcolor}
\makeatletter
\newcommand\footnoteref[1]{\protected@xdef\@thefnmark{\ref{#1}}\@footnotemark}
\makeatother
\usepackage{arydshln}

\SetAlFnt{\small}
\SetAlCapFnt{\small}
\SetAlCapNameFnt{\small}
\SetAlCapHSkip{0pt}
\IncMargin{-\parindent}

\usepackage[hidelinks, colorlinks = true, linkcolor=blue, citecolor=green]{hyperref}
\usepackage{cleveref}
\newcommand{\declarecolor}[2]{\definecolor{#1}{RGB}{#2}\expandafter\newcommand\csname #1\endcsname[1]{\textcolor{#1}{##1}}}
\declarecolor{White}{255, 255, 255}
\declarecolor{Black}{0, 0, 0}
\declarecolor{Maroon}{128, 0, 0}
\declarecolor{Coral}{255, 127, 80}
\declarecolor{Red}{182, 21, 21}
\declarecolor{LimeGreen}{50, 205, 50}
\declarecolor{DarkGreen}{0, 80, 0}
\declarecolor{Purple}{146, 42, 158}
\declarecolor{Navy}{0, 0, 128}
\declarecolor{LightBlue}{84, 101, 202}
\definecolor{mydarkblue}{rgb}{0,0.08,0.45}
\usepackage{tcolorbox}

\usepackage{xcolor}	

\definecolor{CardinalRed}{HTML}{C41E3A}
\definecolor{Dartmouth}{HTML}{00693E}
\definecolor{SapphireBlue}{HTML}{0F52BA}

\colorlet{MyRed}{CardinalRed}
\colorlet{MyGreen}{Dartmouth}

\colorlet{MyLightRed}{MyRed!25}
\colorlet{MyLightGreen}{MyGreen!25}

\colorlet{AlertColor}{MyRed}	
\colorlet{BadColor}{MyRed}	
\colorlet{FocusColor}{MyRed}	
\colorlet{GoodColor}{MyGreen}	
\colorlet{MacroColor}{MyRed}	
\newcommand{\ComBand}{\textsc{ComBand}\xspace}
\newcommand{\ExpTwo}{\textsc{Exp2}\xspace}

\newcommand{\GeoHedge}{\textsc{GeometricHedge}\xspace}

\DeclareMathOperator{\reg}{Reg}	
\DeclareMathOperator{\swap}{SwapReg}	

\hypersetup{ %
    pdftitle={},
    pdfkeywords={},
    pdfborder=0 0 0,
    pdfpagemode=UseNone,
    colorlinks=true,
    linkcolor=Navy,
    citecolor=DarkGreen,
    filecolor=Purple,
    urlcolor=Black,
}

\usepackage[%
linewidth=2pt,
linecolor=gray,
middlelinecolor= black,
middlelinewidth=0.4pt,
roundcorner=1pt,
topline = false,
rightline = false,
bottomline = false,
rightmargin=0pt,
skipabove=0pt,
skipbelow=0pt,
leftmargin=0pt,
innerleftmargin=4pt,
innerrightmargin=0pt,
innertopmargin=0pt,
innerbottommargin=0pt,
]{mdframed}

\let\oldnl\nl
\newcommand{\nonl}{\renewcommand{\nl}{\let\nl\oldnl}}

\newcounter{protocol}


\DeclareMathOperator*{\argmax}{arg\,max}
\DeclareMathOperator*{\argmin}{arg\,min}

\usepackage{hyperref}
\usepackage{array}
\usepackage[utf8]{inputenc} 
\usepackage[T1]{fontenc}    
\usepackage{hyperref}       
\usepackage{url}            
\usepackage{booktabs}       
\usepackage{amsfonts}       
\usepackage{nicefrac}       
\usepackage{microtype}      
\usepackage{xcolor}         
\usepackage{amsmath}
\usepackage{amssymb}
\usepackage{mathtools}
\usepackage{amsthm}
\usepackage{enumitem}

\usepackage{multirow}

\theoremstyle{plain}
\newtheorem{theorem}{Theorem}[section]
\newtheorem{proposition}[theorem]{Proposition}
\newtheorem{lemma}[theorem]{Lemma}
\newtheorem{corollary}[theorem]{Corollary}
\newtheorem*{corollary*}{Corollary}
\theoremstyle{definition}
\newtheorem{definition}[theorem]{Definition}
\newtheorem{remark}[theorem]{Remark}
\theoremstyle{observation}

\usepackage[textsize=tiny]{todonotes}

\DeclarePairedDelimiter\ceil{\lceil}{\rceil}
\DeclarePairedDelimiter\floor{\lfloor}{\rfloor}

\usepackage{dsfont}

\title{Efficient Swap Regret Minimization in Combinatorial Bandits}

\author[1,3]{Andreas Kontogiannis\textsuperscript{*}}
\author[2,3]{Vasilis Pollatos\textsuperscript{*}}
\author[3,4]{Panayotis Mertikopoulos}
\author[3,5]{Ioannis Panageas}

\affil[1]{National Technical University of Athens, School of Electrical and Computer Engineering}
\affil[2]{National and Kapodistrian University of Athens, Department of Mathematics}
\affil[3]{Archimedes, Athena Research Center, Greece}
\affil[4]{Univ. Grenoble Alpes, CNRS, Inria, Grenoble INP, LIG, 38000 Grenoble, France}
\affil[5]{University of California, Irvine}

\begin{document}

\maketitle
\renewcommand{\thefootnote}{\fnsymbol{footnote}}
\footnotetext[1]{Equal contribution. Corresponding authors andreaskontogiannis@mail.ntua.gr, vaspoll@math.uoa.gr.}

\setcounter{footnote}{0}
\renewcommand{\thefootnote}{\arabic{footnote}}

\begin{abstract}
This paper addresses the problem of designing efficient no-swap regret algorithms for combinatorial bandits, where the number of actions $N$ is exponentially large in the dimensionality of the problem. 
In this setting, designing efficient no-swap regret translates to sublinear -- in horizon $T$ -- swap regret with polylogarithmic dependence on $N$. 
In contrast to the weaker notion of external regret minimization \textendash\ a problem which is fairly well understood in the literature \textendash\ achieving no-swap regret with a polylogarithmic dependence on $N$ has remained elusive in combinatorial bandits.
Our paper resolves this challenge, by introducing a no-swap-regret learning algorithm with regret that scales polylogarithmically in $N$ and is tight for the class of combinatorial bandits. To ground our results, we also demonstrate how to implement the proposed algorithm efficiently -- that is, with a per-iteration complexity that also scales polylogarithmically in $N$ -- across a wide range of well-studied applications.
\end{abstract}

\newpage

\section{Introduction} \label{sec: introduction}

\paragraph{Background on adversarial combinatorial bandits.}
Throughout this paper, we consider a general class of combinatorial online learning problems that unfold as follows:
\begin{enumerate}
\item
At each day (or round) $t = 1, 2, \dots, T$, of a repeated decision process, the learner selects a specific combination of up to $m$ elements from some ground set with $d$ elements in total.
\item
This choice triggers a reward for each element in the ground set; subsequently the learner receives as payoff the sum of the rewards of the chosen elements, and the process repeats.
\end{enumerate}
If the sequence of reward vectors is arbitrary rather than drawn from a stationary distribution, the problem is known as the \emph{adversarial combinatorial bandit problem} \cite{cesa2012combinatorial,LS20}.
Owing to its flexibility, this learning paradigm models a wide array of applications, from path planning \citep{KV05,BEL06,gyorgy2007line}, to resource allocation, recommender systems, and beyond \cite{kale2010non,audibert2014regret,WAEK15}.
Crucially, the total number of actions $N$ available to the learner typically grows \textit{exponentially} in the number of elements of the ground set; i.e., $N = \mathcal{O}(d^m)$.

This ``curse of dimensionality'' has led to extensive research, resulting in several algorithms with optimal (or near-optimal) guarantees for \emph{external} regret \textendash\ the cumulative gap between the learner's reward and the best fixed choice in hindsight.
In general, this literature revolves around two main axes: (a) the optimality of the regret guarantees provided in terms of $d$, $m$, and $T$, and (b) the per-iteration complexity of the methods proposed to achieve said guarantees.


To give an idea of the guarantees and techniques employed in this setting,
the original \ComBand algorithm of \cite{cesa2012combinatorial} \textendash\ also known in slightly different contexts as \GeoHedge \cite{dani2007price} or \ExpTwo \cite{bubeck2012towards} \textendash\ enjoys an external regret bound of $\widetilde{\mathcal{O}}(\sqrt{mdT})$ in specific combinatorial bandit problems, such as $m$-sets, path planning, spanning trees, perfect matchings, etc.
In general, the per-iteration complexity of \ComBand can be exponential, but it can be implemented efficiently in many settings with an amenable combinatorial structure \textendash\ using for example the kernelization approach of \cite{kontogiannis2025efficient}.
In a similar vein, to resolve implementation issues having to do with sampling from a low-support distribution,  \cite{kale2010non,combes2015combinatorial} proposed an approach based on Carathéodory decomposition techniques, in order to perform efficient updates in $d$ dimensions instead of $N$.

\textbf{From external to swap regret: motivation and challenges.}
Driven in no small measure by applications to game theory and reinforcement learning, there is a strong push in the literature toward \textit{more powerful} notions of regret \textendash\ such as internal and, more importantly for our purposes, \emph{swap regret}.
There are two main reasons for this:
First, from an online learning standpoint, external regret is not a particularly informative notion in environments where a fixed action policy severely underperforms relative to more reactive benchmarks. 
Second, from a multi-agent perspective, no-external-regret learning is not rationalizable because of well-known examples of negative external regret policies in which every player plays a strictly dominated strategy at all times \cite{VZ13}.

Both concerns above are mitigated by the notion of \emph{swap regret}, so dubbed by \cite{blum2007externalMansour}, and tracing its roots to the internal regret considerations already present in the original work of \cite{FV97} on calibrated forecasting.%
\footnote{The main difference between internal and swap regret is that the latter allows multiple strategy swaps at the same time.
We focus on the latter because of the added generality.}
Swap regret quantifies the additional utility a learner could have achieved, in hindsight, by retroactively modifying her played strategies according to the \textit{best fixed swap function} as opposed to the best fixed action in hindsight of the external regret.



On the downside, swap regret minimization is a significantly more challenging problem than vanilla, external regret minimization.
In his PhD thesis, \cite{stoltz2005thesis} proposed an algorithm with a swap regret bound of $\mathcal{O}(N \sqrt{T \log N})$, albeit with a per-iteration complexity of $\exp (\mathcal{O}(N))$.
At around the same period, \cite{blum2007externalMansour} proposed a $\poly(N)$-time algorithm for the full information setting which achieves a swap regret bound of $\mathcal{O}(\sqrt{NT\log N})$, using a swap-to-external regret reduction argument.
\cite{blum2007externalMansour} also extended their algorithm to the bandit setting, achieving a slightly worse regret bound of $ \mathcal{O}(N \sqrt{NT \log N})$.
This bound was later improved by \cite{ito2020tight} in the bandit setting, leading to an algorithm with a $\poly(N)$ per-iteration complexity that exhibits $\mathcal{O}(N \sqrt{T})$ swap regret bound.

One of the most challenging questions in the online learning community is how to design computationally efficient, no-swap-regret algorithms for which both the swap regret and the per-iteration complexity depend polylogarithmically on $N$. This was highlighted as an open question in \cite{blum2007externalMansour}. In the {full information setting}, the question was resolved very recently by \cite{daskalakis_swap, aviad_swap_regret}, with both approaches obtaining a swap regret bound of ${\mathcal{O}}(T/\log T)$. 
In the bandit setting, without assuming any structure, existing lower bounds for the bandit setting (e.g., see the lower bound for external regret in \cite{auer2002nonstochastic}, which carry over to swap regret) preclude polylogarithmic dependence on $N$.
The approach of \cite{daskalakis_swap} was also applied to the unstructured bandit setting, but, similarly to previous works, the resulting algorithm is impractical for large action spaces, as both the regret bound and the per-iteration complexity are polynomial in $N$. 

Although the aforementioned approaches are applicable to the combinatorial bandit setting, they fail to achieve the desired polylogarithmic dependence of the swap regret and the per-iteration complexity on $N$.
In this paper, we seek to examine whether the \textit{extra structure present in combinatorial bandits} can be exploited to yield a positive answer to the following question:

\begin{quote}
    \textit{Can we design efficient no-swap-regret learning algorithms for combinatorial bandits, achieving swap regret guarantees and per-iteration complexity that scale  polylogarithmically in the exponentially large number of actions $N=\mathcal{O}(d^m)$?}
\end{quote}

\vspace{-4.4pt}

\begin{table*}[t]
\colorlet{hgh}{green!20}
\centering
\renewcommand{\arraystretch}{1.5} 
\setlength{\tabcolsep}{4.4pt}
\begin{tabular}{lcccl}

\hline
\centering\textbf{Algorithm} & \centering \textbf{Swap Regret} & \textbf{Practical Bound} & \textbf{Per-Iteration} \\
\hline
\centering \cite{stoltz2005thesis} & \centering $\mathcal{O}\left(d^m\sqrt{mT \log d}\right)$ & $\text{poly}(d^m)$ & $\text{exp}(d^m)$ \\

\centering\cite{blum2007externalMansour} & \centering $\mathcal{O}\left(d^m\sqrt{md^m T \log d}\right)$ & $\text{poly}(d^m)$ & $\text{poly}(d^m)$ \\

\centering \cite{ito2020tight} & \centering $\mathcal{O}\left(d^m\sqrt{T}\right)$ & $\text{poly}(d^m)$ & $\text{poly}(d^m)$ \\

\centering \cite{daskalakis_swap} & \centering$\mathcal{O}\left(\frac{T\log(m\log T)}{\log(T/d^m)} +\sqrt{d^mT\log{(d^mT)}}\right){\color{blue}^*}$ & $\text{poly}(d^m)$ & $\text{poly}(d^m,\log T)$ \\

\centering\color{FocusColor}{\textbf{This work}} & \centering \cellcolor{hgh} $\mathcal{O}\left(\frac{T\log(d\log T)}{ \log(T)}\right)$ & \cellcolor{hgh} $\text{poly}(d,m)$ &  $\text{poly}(m,d,\log T)$ \cellcolor{hgh} \\
\hline
\centering{{Lower Bound}}\color{FocusColor}{$^{\dagger}$} & \centering  $\qquad$ ${\Omega\left(\frac{T}{\log^{6} T}\right)}$,$\ $ for $T \leq \exp(\Omega(d^{1/14}))$ &  \\
\hline
\end{tabular}
\caption{Comparative analysis of results on swap regret for combinatorial bandits and algorithms' per-iteration complexity. Practical bound denotes the swap regret in the practical regime where $T = \text{poly}(d,m)$. ${\color{blue}*}$:  \cite{daskalakis_swap} considers a stronger notion of swap regret, where the $\max$ operator is inside expectation, but it is also inefficient as it has $\mathcal{O}(N)$-per-iteration complexity. {\color{FocusColor}{${\dagger}$}}: The lower bound is a direct application of Theorem 4.1 in \cite{daskalakis2024lower} (see Corollary \ref{cor: lower_bound}). It holds for $m\geq d^{12/13}$.} 
\label{table: results}
\end{table*}

\textbf{Our contributions.} 
Our paper answers the above question affirmatively.
Specifically, we provide the first algorithm that achieves no-swap-regret with polylogarithmic dependence on the large action space of combinatorial bandits. Our main contribution is summarized in the following theorem.


\textbf{Theorem} (Abridged; Formally stated in Theorem \ref{thm: swap_lazy_comband_combcp})\textbf{.}
\textit{There exists a combinatorial bandit algorithm whose swap regret is at most}    $\mathcal{O}\left({T\log(d\log T)}/{\log T}\right).$

A detailed comparison with prior works is presented in Table \ref{table: results}. 
Importantly, only our swap regret upper bound remains meaningful in the realistic regime where $T =\text{poly}(d,m)$.
Remarkably, via an application (see Corollary \ref{cor: lower_bound}) of the lower bound result of \cite{daskalakis2024lower}, in the above regime of interest, our upper bound is tight, in the sense that there is no $T^p$-no-swap-regret scheme (where $p$ is a constant less than 1) with polylogarithmic dependence on $d^m$. 
In addition, using well-established techniques from combinatorial bandits, our algorithm can be implemented efficiently requiring time $\text{poly}(d,m)$ in a wide array of well-known applications, including online shortest paths, m-sets, spanning trees and permutations.

\paragraph{Technical Overview.}
Our approach is inspired by the results of \cite{daskalakis_swap,aviad_swap_regret}, which provide a reduction from swap to external regret for the full information setting that avoids introducing a polynomial dependence of the regret on the number of actions. 
Our algorithmic framework (Algorithm \ref{alg:multiscale_algo}) utilizes a master learner (that serves as the online learner of the combinatorial bandit setting) which is a uniform mixture over parallel learners, dubbed ScaleLearners.
Each ScaleLearner deploys an abstract combinatorial bandit algorithm, dubbed \textsc{Lazy-CombAlg}, and is parameterized by a different scale of laziness (that is, the learners only update their policies periodically and not at each day as is typical).
In our framework, \textsc{Lazy-CombAlg} is effectively a lazy version of a no-external-regret combinatorial bandit algorithm.
We propose an implementation of Algorithm \ref{alg:multiscale_algo} via an efficient implementation for \textsc{Lazy-CombAlg}, namely \textsc{Lazy-ComBCP} (Algorithm \ref{alg:lazy_combcp}), which is based on barycentric spanners and Carathéodory decomposition.

In contrast to the approaches of \cite{daskalakis_swap, aviad_swap_regret} in the full information setting, where the learning process is deterministic, we must contend with the stochasticity inherent in the partial information setting. 
In our framework, the master only observes bandit reward feedback. 
Based on this limited feedback, the master constructs an unbiased reward estimator. 
This same estimator is broadcasted to the ScaleLearners in order to be manipulated as their own reward estimator. 
The learning process follows an \textit{idiomatic bandit online learning protocol} where each ScaleLearner updates its policy using the received reward estimators, coming from the master, which are \textit{unbiased} w.r.t the master's policy but \textit{biased} w.r.t. the ScaleLearner's policy. 
Crucially, the above protocol allows us to decompose the swap regret of the master learner into a sum of external regrets of the deployed \textsc{Lazy-CombAlg} instances of the ScaleLearners (see Lemma \ref{lem: swap_regret_}). 
The main challenge in bounding the external regret of each learner is that the stochasticity of the expectation of the regret is w.r.t. the master's policy, rather than the learner's as would be the standard.  
In the standard analysis of no-external-regret combinatorial bandit algorithms (e.g., \cite{cesa2012combinatorial}), one often needs to control a term of the form $\mathbb{E}[ M^T \Sigma^{+} M]$ \footnote{$M$ denotes the combinatorial action vector and $\Sigma^{+}$ denotes the pseudo-inverse of the \textit{co-occurrence matrix} $\mathbb{E}[MM^T]$, see also Algorithm \ref{alg:multiscale_algo}.} in order to bound the variance of the reward estimator. 
Interestingly, in contrast to the above, to show no external regret of each learner (Theorems \ref{thm:combcp_no_regret_}) in our framework, it turns out that we need to control $\mathbb{E}[ M^T \Sigma^{+2} M]$ instead (see Lemma \ref{lem:combcp_bounded_variance}), which requires to carefully control the inverse of the minimum eigenvalue of the induced co-occurrence matrix $\mathbb{E}[MM^T]$ (see Lemma \ref{lem:key_}). 
Finally, we show that a regret upper bound of $\mathcal{O}(T\log(d\log T)/\log T)$ can yield anytime guarantees (Lemma \ref{lem:anytime_}) for any online learning algorithm achieving this bound -- including the proposed algorithms.

\section{Preliminaries} \label{sec: prelims}

\textbf{Combinatorial Bandits.}
We consider the classic adversarial combinatorial bandit setting, proposed by \cite{cesa2012combinatorial}. In this online learning setting, the decision maker has access to a finite action set $\mathcal{A} \subset \{0,1\}^d$, that is a subset of the $d$-dimensional hypecube. Specifically, for a given $m \leq d$, any action $M \in \mathcal{A}$ is a binary vector with at most $m$ ones, that is, $\|M\|_1 \leq m$. 

The online learning setting is as follows: The decision maker participates in a repeated process over days, where at each day $t = 1,2,\ldots,T$, she has a distribution (denoted by policy) $p_t$ over $\mathcal{A}$ and samples from it an action $M_t$. Then, she receives from the environment a reward $r_t = R_t \cdot M_t \in [0,1]$, where $R_t$ is the reward vector for that day, with $R_t(j)$, for all $j \in [d]$, being the reward for the $j$-th coordinate. In this paper, we assume that the rewards are selected by an oblivious adversary, that is a type of opponent or environment that chooses the entire sequence of rewards in advance, before the learning algorithm starts. This means the adversary does not adapt to the actions or internal randomness of the learning algorithm.

\textbf{Notions of Regret.}
The most standard and well-studied objective for the combinatorial bandit setting is the \textit{external regret} which measures the difference between the algorithm’s cumulative reward and the best fixed action in hindsight. Formally, for a given horizon of $T$ days, the external regret is as follows:
\begin{align}
    \reg_T = \max_{M^* \in \mathcal{A}} \sum_{t=1}^T R_t \cdot M^* - \mathbb{E}\left[ \sum_{t=1}^T R_t \cdot M_t \right] .
\end{align}
Although the bound on external regret is crucial, it may be less appealing in highly dynamic environments where no single action consistently performs well throughout the entire history of rewards. 

In this paper, we study \textit{swap regret}, a notion of regret stronger than the external. Swap regret compares the decision maker's policy $p_t$ against all strategies that can be derived from $p_t$ 
by applying a swap function $\phi: \mathcal{A} \rightarrow \mathcal{A}$ to actions sampled from $p_t$. Formally, we define $\Phi$ to be the set containing all swap functions that map from $\mathcal{A}$ to $\mathcal{A}$. The swap regret is defined as follows:
\begin{align}\label{swap_regret}
\swap_T = \max_{\phi \in \Phi} \mathbb{E}\left[\sum_{t=1}^T \bigg( R_t \cdot \phi(M_t) -  R_t \cdot M_t \bigg) \right].
\end{align}

\section{\textsc{Swap-ComBCP}: A No-Swap-Regret Combinatorial Bandit Algorithm}\label{sec: multi-scale}

In this section, we present our algorithmic approach.
Typically, to bound swap regret, one first reduces it to external regret and then minimizes the latter (e.g., \cite{blum2007externalMansour, daskalakis_swap}).
In this work, for the purposes of the reduction from swap to external regret, we use a backbone algorithmic framework for combinatorial bandits (see Algorithm \ref{alg:multiscale_algo}) which achieves a decomposition of swap regret into a sum of external regret terms, as we show in Lemma \ref{lem: swap_regret_}.
Then, to derive a vanishing swap regret guarantee, we need to control these external regret terms.
Typically, in a swap-to-external-regret reduction, bounding the resulting external regret terms is a straightforward task; however, in our setting it poses significant challenges, which will be discussed later in detail.
We combine Algorithm \ref{alg:multiscale_algo} with an efficient combinatorial bandit algorithm, namely \textsc{Lazy-ComBCP} (Algorithm \ref{alg:lazy_combcp}), and bound the external regret of the latter in Theorem \ref{thm:combcp_no_regret_}.
With this bound at hand, we get no-swap-regret guarantees for the overall algorithm (which we denote by \textsc{Swap-ComBCP}), stated in our main result (Theorem \ref{thm: swap_lazy_comband_combcp}).

\textbf{Multi-Scalar Laziness Against Swap Regret.}
Swap regret is a difficult solution concept to handle, because the reference swap function $\phi$ may exploit patterns that arise due to the variability of the decision maker's policy over time. 
Thus, one way to attack swap regret is by reducing the variability of the policy.
In the extreme case, where the policy remains fixed during the whole horizon of play, swap deviations can achieve no higher utility than deviations to a fixed action, that is swap regret is upper bounded by the external regret. 
We state this formally in the following proposition.
\begin{proposition}[Swap Regret is upper bounded by External Regret for time-invariant policies]\label{property_laziness}
    For any policy $p\in\Delta(\mathcal{A})$ fixed in an interval $t=1, \ldots, T$, any sequence of rewards $R_t$ and any swap function $\phi: \mathcal{A}\rightarrow \mathcal{A} $ it holds that
    \begin{align}
    \sum_{t=1}^T\sum_{M \in \mathcal{A}}p(M) R_t\cdot \phi(M)\leq \max_{M \in \mathcal{A}} \left\{ \sum_{t=1}^T R_t\cdot M \right\}.
    \end{align}
\end{proposition}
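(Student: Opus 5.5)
The plan is to swap the order of summation and then bound what remains by a convex combination of external benchmarks. Because the policy $p$ does not depend on $t$, the left-hand side can be rewritten as
\begin{align*}
\sum_{t=1}^T\sum_{M \in \mathcal{A}}p(M)\, R_t\cdot \phi(M) = \sum_{M\in\mathcal{A}} p(M) \left( \sum_{t=1}^T R_t \cdot \phi(M)\right),
\end{align*}
since both sums are finite. This step is exactly where time-invariance is used: with a time-varying $p_t$ the weights could not be pulled outside the sum over $t$, and the swap function could exploit correlations between $p_t(M)$ and $R_t$.

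Next, for every fixed $M\in\mathcal{A}$, the image $\phi(M)$ is itself an element of $\mathcal{A}$. Hence
\begin{align*}
\sum_{t=1}^T R_t\cdot\phi(M) \le \max_{M'\in\mathcal{A}} \sum_{t=1}^T R_t\cdot M'.
\end{align*}
The right-hand side does not depend on $M$.

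Finally, $p$ is a probability distribution, so its weights are nonnegative and sum to $1$. Averaging the previous bound against $p$ therefore gives
\begin{align*}
\sum_{M} p(M)\, \max_{M'\in\mathcal{A}} \sum_{t=1}^T R_t\cdot M' = \max_{M'\in\mathcal{A}}\sum_{t=1}^T R_t\cdot M',
\end{align*}
which is the claimed inequality.

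There is no real obstacle here. The only point needing care is the first step, making explicit that the policy is fixed over the interval so the interchange of sums produces a $p$-weighted average of cumulative rewards of fixed actions. The maximum exists because $\mathcal{A}$ is finite.
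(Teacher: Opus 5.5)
Your proof is correct and is essentially the paper's argument. The paper also uses time-invariance to factor the left-hand side as $\left(\sum_{t=1}^T R_t\right)\cdot V$ with $V=\sum_{M}p(M)\phi(M)\in \mathrm{Conv}(\mathcal{A})$, then invokes that a linear function over $\mathrm{Conv}(\mathcal{A})$ is maximized at a point of $\mathcal{A}$; your termwise bound followed by averaging against $p$ is just the elementary proof of that last fact, so your version is slightly more self-contained.
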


Of course, keeping the policy fixed during the whole horizon would potentially result in extremely poor performance due to the lack of adaptivity. Thus, to exploit the above property we have to keep a balance between stability and adaptivity. For this purpose, as in \cite{daskalakis_swap, aviad_swap_regret}, we use the concept of a \textit{lazy} online learner, that is a learner that does not update its policy at every time-step but instead it only updates periodically, and keeps its policy freezed in the time period between two consecutive updates.

But again, a single lazy external regret learner is not enough to achieve vanishing swap regret against any adversary. To keep the right \textit{balance in laziness}, one would need to use multiple learners, each with a different scale of laziness, and achieve the necessary balance by creating an \textit{ensemble} of them. Lazy learners that freeze longer observe the learning task at a higher scale, thus we refer to them as ScaleLearners and the ensemble algorithm as the multi-scale master learner.


\begin{algorithm*}[t]
\caption{Master Multi-scale Algorithm}\label{alg:multiscale_algo}
\begin{algorithmic}[1]
\STATE \textbf{Require}: $T$, sequence of rewards $R_1, \dots, R_T$ \quad $\mid$ \quad \textbf{Parameters}: $K$, $H$ such that $T \in [H^{K-1}, H^{K}]$ 
\FOR{$t=1,2,...,T$}
\STATE $\text{Let } p_{k, t} \in \Delta^{|\mathcal{A}|} \text { be the policy of the}$ $k$-th {\color{MyGreen}{\textsc{ScaleLearner}$_k$}}  ($k \in\left[K\right])$
\STATE $\text{Let } \widehat{p}_{t} \in \Delta^{|\mathcal{A}|}$ \text { be the policy of the master learner that plays uniformly over } \\ ScaleLearners; that is: 
$\quad \widehat{p}_{t}=\frac{1}{K} \sum\limits_{k \in\left[K\right]} p_{k, t}$
\STATE \emph{Sampling:} Play action $M_t \sim \widehat{p}_{t}$ and receive reward $r_t=R_t \cdot M_t$ {\color{blue} \qquad  /$^*$ $R_t$ is unobserved $^*$/ }
\STATE \emph{Estimation:} Let $\Sigma_{t}=\mathbb{E}\left[ MM^{\top}\right]$, where $M$ has law $\widehat{p}_{t}$. Set $\widetilde{R}_t = r_t\Sigma_{t}^{+}M_t$
\STATE \textit{Broadcast
} $(t, \widetilde{R}_t)$ to all \textsc{ScaleLearner}$_k$, for all $k \in [K]$ 
\ENDFOR

\STATE{\text{}}
\\
\STATE {\textbf{Procedure}} {\color{MyGreen}{\textsc{ScaleLearner}$_k$}} \\
\STATE \textbf{Require}: \textsc{Lazy-CombAlg}  {\color{blue} \qquad  /$^*$ A lazy combinatorial bandit algorithm (e.g., Algorithm \ref{alg:lazy_combcp}) $^*$/}
\FOR{$l=1,2, \ldots, \ceil*{{T}/{H^k}}$} 
\STATE {\textit{Restart}}: initialize the parameters of \textsc{Lazy-CombAlg}$_{k,l}$  
\STATE \textit{Run} \textsc{Lazy-CombAlg}$_{k,l}$ {\color{blue} \qquad  /$^*$ Play for $H^k$ days, policy updated every $H^{k-1}$ days $^*$/}
\ENDFOR
\end{algorithmic}
\end{algorithm*}

\subsection{The algorithmic framework}
Our framework (Algorithm \ref{alg:multiscale_algo}) consists of the master learner $(\text{with policy }\hat{p}_t)$ which coordinates \textit{parallel} thread learners, referred to as ScaleLearners, all of which share the global variable $T$. 
Each ScaleLearner thread (dubbed \textsc{ScaleLearner}$_k$) is parameterized by a different scale $k$ of laziness. 
For each $k \in [K]$, the corresponding \textsc{ScaleLearner}$_k$ divides the online learning horizon $T$ into $\ceil*{\frac{T}{H^k}}$ equal intervals and restarts its policy at the beginning of each interval (Step 13). 
At each interval $(k,l)$, where $k\in [K]$ and $l \in \left\{1,\dots,\ceil*{\frac{T}{H^k}}\right\}$, \textsc{ScaleLearner}$_k$ deploys \textsc{Lazy-CombAlg$_{k,l}$}, which -- for now -- serves as an abstract combinatorial bandit learner. We describe it in detail in Section \ref{sec: combcp}. 
The common feature among all \textsc{Lazy-CombAlg$_{k,l}$} learners is that they each perform $H$ updates. 
What distinguishes them is how many days they parse between consecutive updates.
In particular, \textsc{Lazy-CombAlg}$_{k,l}$ parses $H^{k-1}$ days between two consecutive updates staying lazy in between. We denote this period by the term \textit{meta-day}.
Based on this, parameter $k$ is used to control the level of laziness of the ScaleLearners, with the master policy being a uniform mixture of the policies of the ScaleLearners (Step 4), aiming to balance between laziness and no-regret learning.   

The above steps were also used in the reduction of \cite{daskalakis_swap, aviad_swap_regret} for the full information setting. The key difference here is that we need to deal with the stochasticity of the combinatorial bandit setting. 
In our framework, the master samples an action $M_t \sim \hat{p}_t$ and observes a bandit reward $r_t$ (Step 5). 
In contrast to the full information setting where the reward feedback is deterministic and given for all available actions, in our setting it is not straightforward what reward feedback the ScaleLearners should utilize.
Following a well-established technique from combinatorial bandits (e.g., \cite{cesa2012combinatorial, dani2007price}), the master computes the reward estimate $r_t(\mathbb{E}_{\hat{p}_t}[MM^T])^{+}M_t$ (Step 6), which is unbiased w.r.t. to its own policy $\hat{p}_t$.
Then, the master broadcasts this estimate to the ScaleLearners of the ensemble (Step 7).
Notably, the above stand in stark contrast to the approach  in the bandit algorithm of \cite{daskalakis_swap}, where the reward estimator is constructed based on the policy of the corresponding ScaleLearner. 


Most importantly, our algorithmic framework entails an \textit{idiomatic bandit learning protocol}:
In contrast to the standard online learning protocol, ScaleLearners do not actually play any action, they only have a policy distribution over actions at each time-step. 
Each ScaleLearner updates its policy using the received reward estimators, coming from the master, which are \textit{unbiased} w.r.t. the master's policy but \textit{biased} w.r.t. the ScaleLearner's policy. 
This protocol allows us to derive a decomposition of the master's swap regret into the external regrets of the lazy learners through the following lemma.


\begin{lemma}[Swap Regret Decomposition under Bandit Feedback]\label{lem: swap_regret_}
    For any $\phi \in \Phi$, the swap regret of Algorithm \ref{alg:multiscale_algo} can be bounded as follows:
    \begin{align}
    \swap_{T}({\phi})  \leq & \frac{1}{K}\sum_{k=1}^{K-1}\sum_{l=1}^{\ceil*{\frac{T}{H^{k}}}}\reg(\textsc{Lazy-CombAlg}_{k,l}) +\frac{T}{K} . \nonumber
    \end{align}
\end{lemma}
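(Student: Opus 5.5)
We follow the full-information argument of \cite{daskalakis_swap,aviad_swap_regret}, the new ingredient being to pass from true rewards to the broadcast estimates $\widetilde{R}_t$. Fix $\phi\in\Phi$. Since the adversary is oblivious and $\widehat p_t = \frac1K\sum_k p_{k,t}$, we condition on the history $\mathcal{F}_{t-1}$ and write
\begin{align}
\swap_T(\phi) = \mathbb{E}\Big[\sum_{t=1}^T\sum_{M\in\mathcal{A}} \widehat p_t(M)\, R_t\cdot(\phi(M)-M)\Big] = \frac1K\sum_{k=1}^K \mathbb{E}\Big[\sum_{t=1}^T\sum_{M} p_{k,t}(M)\,R_t\cdot\phi(M) - \sum_t\sum_M p_{k-1,t}(M)\, R_t\cdot M\Big] + \text{(boundary)},\nonumber
\end{align}
where we re-index the subtracted part by $k\mapsto k-1$. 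Concretely, we pair the \emph{deviation} term of ScaleLearner$_{k}$ with the \emph{realized} term of ScaleLearner$_{k-1}$, so that the sum telescopes up to one unpaired piece. The unpaired deviation term of the top scale $K$ contributes at most $\frac1K\sum_t R_t\cdot\phi(\cdot)\le T/K$, since rewards lie in $[0,1]$. The unpaired realized term of scale $1$ has a nonnegative sign in our favor and is simply dropped. This is what produces the additive $T/K$ and restricts the sum to $k\le K-1$. In the paper's indexing the pairing may be $k$ with $k+1$; we will align it so that the surviving range is $k=1,\dots,K-1$.

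The key step is to bound, for each $k\le K-1$, the quantity $\sum_t\sum_M p_{k+1,t}(M)R_t\cdot\phi(M)$ by a comparator term against $p_{k,t}$. The policy $p_{k+1,t}$ is constant on each meta-day of length $H^{k}$, and these meta-days coincide exactly with the intervals $(k,l)$ on which \textsc{Lazy-CombAlg}$_{k,l}$ runs; this uses $T\in[H^{K-1},H^K]$ and the restart schedule. On each such interval $I_{k,l}$, Proposition \ref{property_laziness} applied to the frozen policy $p_{k+1}$ gives
\[
\sum_{t\in I_{k,l}}\sum_M p_{k+1,t}(M)R_t\cdot\phi(M)\le \max_{M^*}\sum_{t\in I_{k,l}}R_t\cdot M^*.
\]
Subtracting $\sum_{t\in I_{k,l}}\sum_M p_{k,t}(M)R_t\cdot M$ then yields exactly the external regret of \textsc{Lazy-CombAlg}$_{k,l}$ measured against the true rewards. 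Summing over $l$ and $k$ gives the claimed bound.

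The main obstacle is matching this with the definition of $\reg(\textsc{Lazy-CombAlg}_{k,l})$. That learner is trained on the estimates $\widetilde R_t$, and its policy $p_{k,t}$ is random, depending on past $M_s\sim\widehat p_s$. The saving point is that $p_{k,t}$ is $\mathcal{F}_{t-1}$-measurable and $\mathbb{E}[\widetilde R_t\mid\mathcal{F}_{t-1}]=R_t$ on the span of $\mathcal{A}$, because $\widetilde R_t$ is unbiased with respect to the master's law $\widehat p_t$. Hence $\mathbb{E}[p_{k,t}\cdot\widetilde R_t]$ and $\mathbb{E}[p_{k,t}\cdot R_t]$ agree term by term, and the expected regret against the estimated rewards coincides with the expected regret against the true ones. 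The maximum over $M^*$ is taken with the adversary fixed in advance, so we place it outside the expectation, giving a pseudo-regret. We will define $\reg(\cdot)$ in exactly that form, so that no Jensen-type gap appears. We must also check the pseudo-inverse issue: unbiasedness holds only for $M\in\operatorname{span}\mathcal{A}$, which is enough because every comparator and every policy lives on $\mathcal{A}$.
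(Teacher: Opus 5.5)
Your proposal is correct and is essentially the paper's proof: condition on the history to replace $M_t$ by $\widehat p_t=\frac1K\sum_k p_{k,t}$, apply Proposition~\ref{property_laziness} on each meta-day where $p_{k+1,\cdot}$ is frozen, use that these length-$H^{k}$ meta-days are exactly the intervals $(k,l)$, and pair scale $k+1$ with scale $k$. The paper's $\reg(\textsc{Lazy-CombAlg}_{k,l})$ is already the pseudo-regret on the true rewards $X_{k,l,h}$, so your unbiasedness paragraph is not needed here and belongs to the later external-regret analysis. One slip to fix: with the pairing you actually use, the leftover pieces are the deviation term of scale $1$ (bounded by $T/K$) and the realized term of scale $K$ (dropped), not the other way around as written in your first paragraph; your bounds apply verbatim once the labels are swapped.
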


\begin{remark}
    The role of the restarting period, $H^k$, is pivotal for the analysis in order to decompose the swap regret of Algorithm \ref{alg:multiscale_algo} into the individual external regrets of each learner. 
    For the interested reader, we provide an intuitive proof sketch illustration of Lemma \ref{lem: swap_regret_} in Fig. \ref{fig_tree_swap} (Appendix \ref{appendix: proof_tree_swap}).
\end{remark}

\begin{remark}
It is worth noting that beyond the laziness part, a key challenge in bounding the external regret of \textsc{Lazy-CombAlg}$_{k,l}$ is that the stochasticity of the expectation in the external regret is under the law of the master policy $\hat{p}_t$, rather than the \textsc{Lazy-CombAlg}$_{k,l}$'s policy $p_{k,t}$ as would be the standard. We discuss the above in more detail in Section \ref{sec: combcp}.
\end{remark}


Based on Lemma \ref{lem: swap_regret_}, our ultimate goal will be to design \textsc{Lazy-CombAlg} such that the external regret terms appearing in Lemma \ref{lem: swap_regret_} are vanishing, thereby achieving a no-swap-regret guarantee for Algorithm \ref{alg:multiscale_algo}. 
A key design goal is also to ensure that the per-iteration complexity of the overall framework scales polynomially with respect to $d$ and $m$.

\subsection{Instantiating \textsc{Lazy-CombAlg}}\label{sec: combcp}

\begin{algorithm*}[t]
\caption{\textsc{Lazy-ComBCP$_{k,l}$}}\label{alg:lazy_combcp}
\begin{algorithmic}[1]
\STATE Compute a $2$-\textit{approximate barycentric spanner}, $S$, of $\mathcal{A}$, and let $\mu = \frac{1}{d}  \mathds{1} 
\{M \in S\}$
\STATE \textit{Initialize} $q_{k,l,1} = [1/d, ..., 1/d] \in \Delta^d$, $\gamma=H^{-1/3}$, $\eta=\frac{1}{d^3 \sqrt{m} H^{k-1/3}}$
\STATE 
    $H' = \begin{cases}\qquad H &\text{ if }  l\leq\frac{T}{H^k} \quad \quad \text{{\color{blue} /$^*$Play for $H$ meta-days,}} \\
    \ceil*{\frac{T-\floor*{{T}/{H^k}}H^k}{H^{k-1}}}  &\text{ otherwise} \qquad \text{{\color{blue}or until the time limit $T$ is reached$^*$/}}
    \end{cases}$ 
\FOR{$h=1,2, \ldots, H $}
\STATE \emph{Carathéodory Decomposition:} 
$mq_{k,l,h} = \sum_{M \in \mathcal{M}} \widetilde{p}_{k,l,h}(M) M$
\STATE \emph{Mixing:} Let $p_{k,l,h}=(1-\gamma)\widetilde{p}_{k,l,h}+\gamma\mu$ 
\STATE {\textit{Meta-day}:} $\text{Fix } p_{k,l,h} \text { for } H^{k-1} \text { days and aggregate the rewards of these days}$ 
$$\widetilde{X}_{k,l,h}(i)=\sum_{\tau=(\ell-1) H^k+(h-1) H^{k-1}+1}^{\min((\ell-1) H^k+h H^{k-1},T)} \widetilde{R}_{\tau}(i), \quad {\forall i \in[d]}$$
\STATE \emph{Update:}  
$\widetilde{q}_{k,l,h+1}(i) = q_{k,l,h}(i) \exp(\eta \widetilde{X}_{k,l,h}(h)),\; \forall i\in[d]$
\STATE \emph{Projection:} 
$q_{k,l,h+1} = \argmin_{q \in \mathcal{P}} \text{KL}(q, \widetilde{q}_{k,l,h+1}) $
\ENDFOR
\end{algorithmic}
\end{algorithm*}

In this section, we present our no-external-regret combinatorial bandit algorithm, namely \textsc{Lazy-ComBCP} (see Algorithm \ref{alg:lazy_combcp}), which can efficiently implement the $\textsc{Lazy-CombAlg}$ learner in Algorithm \ref{alg:multiscale_algo}.
We denote the corresponding overall framework (i.e., Algorithm \ref{alg:multiscale_algo} combined with Algorithm \ref{alg:lazy_combcp}) by \textbf{\textsc{Swap-ComBCP}}.

Before discussing the \textsc{Lazy-ComBCP} algorithm, we will introduce the following useful notation. Let $p_{k,l,h}$ be the policy of $\textsc{Lazy-ComBCP}_{k,l}$ at the $h$-th meta-day of its interval of play, that is from $t=(\ell-1) H^k+(h-1) H^{k-1}+1$ to $t=\min((\ell-1) H^k+h H^{k-1},T)$. 
Moreover, let $M_{k,l}^* \in \argmax_{M \in \mathcal{A}} \sum_{h=1}^{H} X_{k,l,h} \cdot M$; i.e. $M_{k,l}^*$ is an optimal action in hindsight for $\textsc{Lazy-ComBCP}_{k,l}$, within the interval $l$, for $H$ meta-days and bandit feedback $X_{k,l,h}$ at meta-day $h$. Due to space constraints, when the indices $k,l$ are fixed, that is when we analyze the external regret of \textsc{Lazy-ComBCP}$_{k,l}$ learner, we sometimes abuse the notation, and we write $\widetilde{p}_h=\widetilde{p}_{k,l,h}$, $\widetilde{X}_{h} = \widetilde{X}_{k,l,h}$, $M_{\tau} = M_{t}$, $\Sigma_{\tau} = \Sigma_{t}$ and $\mathbb{E}_{k,l,h}[\cdot]=\mathbb{E}_t[\cdot] =\mathbb{E}[\cdot|\mathcal{F}_t]$ where $t=(\ell-1) H^k+(h-1) H^{k-1}+\tau$.


Now, we are ready to present $\textsc{Lazy-ComBCP}_{k,l}$ (Algorithm \ref{alg:lazy_combcp}). 
We adopt barycentric spanners (\textbf{\textbf{B}}), Carathéodory decomposition (\textbf{C}) and projection (\textbf{P}) techniques as key ingredients in our algorithm. 
Due to space constraints, background for the notion of barycentric spanners can be found in Appendix \ref{sec: barycentric}. 
Algorithm \ref{alg:lazy_combcp} assumes that $\|M\|_1 = m$, for any $M \in \mathcal{A}$ (see also the discussion in Section \ref{sec: applications} regarding the more general case where $\|M\|_1 \leq m$).

As suggested by our algorithmic  framework (Algorithm \ref{alg:multiscale_algo}), $\textsc{Lazy-ComBCP}_{k,l}$ operates lazily over $H'$ meta-days, where 
$H'$ is equal to $H$ in general, unless \textsc{ScaleLearner}$_k$ is performing its final restart and $H^k$ does not perfectly divide $T$ (Step 3).
Each full meta-day consists of $H^{k-1}$ consecutive days. 
The learner performs updates on the aggregated reward estimates $\widetilde{X}$ (which have been broadcasted from the master) only at the end of each meta-day, staying lazy in between.

More specifically, our algorithm operates in the coordinate space $\mathbb{R}^d$ as follows: 
Let $\mathcal{M}$ be the convex hull of $\mathcal{A}$, and let $\mathcal{P}$ be the scaled convex hull of $\mathcal{A}$; that is, $\mathcal{P}$ is the convex hull of the actions $M \in \mathcal{A}$ divided by $m$. 
The algorithm uses a distribution ${q} \in \Delta^{d}$, which is updated (Step 8) based on the aggregated reward estimates (Step 7). 
In order to guarantee that the updated distribution is in $\mathcal{P}$, we project it onto $\mathcal{P}$ using the KL divergence (Step 9).
Based on the definition of $\mathcal{P}$, it holds that $mq \in \mathcal{M}$, and thus $mq$ can be decomposed (Step 5) into an efficient distribution, $\widetilde{p} \in \Delta^{|\mathcal{A}|}$, with small support (of at most $d$ actions).
To efficiently explore all $d$ coordinates, the learner's policy is a mixture of $\widetilde{p}$ with an exploration policy $\mu$, which is the uniform distribution over the elements of an approximate barycentric spanner over $\mathcal{A}$ (Step 6). 

As the next lemma shows, the use of the barycentric spanner allows us to lower bound the minimum eigenvalue of $\Sigma_{\tau}$, which is of great importance in order to control the boundedness and the variance of the aggregated reward estimates $\widetilde{X}$.


\begin{lemma}\label{lem: 2016_lem_}
    Let $\lambda_{\text{min}}(\mu)$ be the minimum eigenvalue of the co-occurrence matrix, $\Sigma_{\tau}$, under the law of the barycentric spanner exploration policy $\mu$. It holds that
    $ \lambda_{\text{min}}(\mu) \geq \frac{1}{4 d^3}.
    $
\end{lemma}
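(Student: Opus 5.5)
The plan is to write the co-occurrence matrix under $\mu$ as
\[
\Sigma(\mu)=\frac{1}{d}\sum_{M\in S} MM^\top = \frac{1}{d} B B^\top ,
\]
where $B\in\{0,1\}^{d\times d}$ has the $d$ spanner elements as columns. Its smallest eigenvalue is $\lambda_{\min}(\Sigma(\mu))=\frac{1}{d}\sigma_{\min}(B)^2$, so everything reduces to a lower bound on the smallest singular value of $B$. Throughout I will assume, as is standard for this construction (cf.\ \cite{dani2007price} and Appendix \ref{sec: barycentric}), that $\mathcal{A}$ spans $\mathbb{R}^d$; otherwise one works in the span of $\mathcal{A}$ and the same argument applies there.

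First I will use the $2$-approximate barycentric spanner property: every $M\in\mathcal{A}$ can be written as $M=B\alpha$ with $\|\alpha\|_\infty\le 2$, so $\|B^{-1}M\|_2\le 2\sqrt d$. Since $\mathcal{A}$ spans $\mathbb{R}^d$, I want to turn this into a bound on the operator norm of $B^{-1}$. The natural route is to express each standard basis vector $e_i$ in terms of actions, but the coefficients in such a representation could be large. A cleaner route is the following: for any unit vector $x$, there is an action $M$ with $|\langle x, M\rangle|$ not too small relative to $\|x\|$, and duality should then relate $\|B^{-\top}\|$ to $\max_M\|B^{-1}M\|$.

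Concretely, I will bound $\|B^{-1}\|_{op}\le\|B^{-1}\|_F$, and try to show $\|B^{-1}\|_F^2\le 4d^2$. This would give $\sigma_{\min}(B)\ge 1/(2d)$, hence $\sigma_{\min}(B)^2\ge 1/(4d^2)$ and $\lambda_{\min}\ge 1/(4d^3)$, which is exactly the claimed constant. The constant arithmetic matches, which suggests this is the intended chain. To get $\|B^{-1}\|_F\le 2d$ I would check whether the columns of $B^{-1}$, i.e.\ $B^{-1}e_i$, each have norm at most $2\sqrt d$. That bound would follow if each $e_i$ lies in the set $\{B\alpha:\|\alpha\|_\infty\le 2\}$, since then $\|B^{-1}e_i\|_2\le\|\alpha\|_2\le 2\sqrt d$.

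The main obstacle is justifying that step, because the basis vectors $e_i$ need not be actions. I would first try to handle it through the convention adopted in Appendix \ref{sec: barycentric}, presumably citing the corresponding lemma of \cite{dani2007price} or of the 2016 reference suggested by the lemma label, which yields $\|\Sigma(\mu)^{-1}\|\le 4d^3$ for actions in $\{0,1\}^d$. Failing that, I would work with the norm on $\mathbb{R}^d$ induced by the spanner and use that $\lambda_{\min}$ only needs to be controlled in directions that are relevant to $\mathcal{A}$.
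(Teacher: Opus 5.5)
\textbf{There is a genuine gap.} Your reduction to $\lambda_{\min}(\mu)=\frac1d\,\sigma_{\min}(B)^2$ is fine. The step everything rests on is that each $e_i$ lies in $\{B\alpha:\|\alpha\|_\infty\le 2\}$, i.e.\ $\|B^{-1}\|_F\le 2d$. This does not follow from the barycentric-spanner property, and it is false in general. The spanner property only controls $B^{-1}$ on $\mathcal{A}$. It gives $\|B^{-1}M\|_2\le 2\sqrt d$, hence the \emph{local} bound $M^\top\Sigma(\mu)^{-1}M=d\,\|B^{-1}M\|_2^2\le 4d^2$ for $M\in\mathcal{A}$. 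This is the kind of bound spanner-based analyses actually provide, and the paper also computes it in its boundedness argument. It says nothing about directions that $\mathcal{A}$ covers poorly.

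Concretely, let $\mathcal{A}$ be the $d$ columns of the upper-triangular $0/1$ matrix $B$ whose $i$-th row has ones exactly in columns $i,i+1,i+3,i+5,\dots$ (up to $d$). Then $S=\mathcal{A}$ is forced, and it is trivially a $2$-approximate spanner. Yet back-substitution shows that $B^{-1}e_d$ has entries of absolute values $1,F_1,\dots,F_{d-1}$ (Fibonacci numbers). So $\lambda_{\min}(\mu)\le 1/(d\,F_{d-1}^2)$, which is already below $1/(4d^3)$ at $d=10$. Hence neither your Frobenius route, nor controlling $\lambda_{\min}$ only ``in relevant directions'', nor an appeal to an unspecified spanner lemma can produce the stated global eigenvalue bound from the spanner alone.

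\textbf{The missing ingredient and the paper's route.} What is missing is a roundness input guaranteeing that $\mathcal{A}$ covers every direction. This is exactly your heuristic ``for any unit $x$ some action has $|\langle x,M\rangle|$ not too small''. It fails in the example: if $x$ is a left singular vector of $B$ for $\sigma_{\min}(B)$, then $|\langle x,M\rangle|\le\sigma_{\min}(B)$ for all $M\in\mathcal{A}$.

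The paper imports this input as the existence of a distribution $\mu'$ on $\mathcal{A}$ with $\lambda_{\min}(\mathbb{E}_{\mu'}[MM^\top])\ge 1/d$ (John's exploration, citing Bubeck et al.\ and Cesa-Bianchi--Lugosi). It then argues in the Loewner order rather than through $B^{-1}$. Writing $M=B\alpha$ with $\|\alpha\|_2^2\le dC^2$ gives, for every $M\in\mathcal{A}$,
$MM^\top=B\alpha\alpha^\top B^\top\preceq\|\alpha\|_2^2\,BB^\top\preceq dC^2BB^\top=C^2d^2\,\Sigma(\mu)$.
Averaging over $\mu'$ yields $\frac1d I\preceq C^2d^2\,\Sigma(\mu)$, i.e.\ $\lambda_{\min}(\mu)\ge 1/(C^2d^3)=1/(4d^3)$. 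So the $d^3$ is $d^2$ from the spanner times $d$ from the isotropic distribution. The agreement with your Frobenius arithmetic is a coincidence.

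Be aware that in the example every $\mu'$ satisfies $\mathbb{E}_{\mu'}[MM^\top]=B\,\mathrm{diag}(\mu')B^\top\preceq BB^\top$. So the $1/d$ bound is itself a genuine hypothesis on $\mathcal{A}$. It holds, e.g., once $\mathrm{conv}(\mathcal{A})$ is in John's position, but not for arbitrary $\mathcal{A}\subset\{0,1\}^d$ in its given coordinates. A complete proof must state such an assumption explicitly, since the lemma fails without it, rather than try to derive it from the spanner.
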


\begin{remark}[Efficient Sampling]
   The use of the barycentric spanner, along with the decomposition step, is crucial for efficient sampling (Step 4 in Algorithm \ref{alg:multiscale_algo}). Both the exploration policy $\mu$ and the  policy $\tilde{p}_{k,l,h}$ have small support of size at most $\Theta(d)$, thus allowing efficient sampling from the ScaleLearner's policy $p_{k,t}$, and subsequently from the master learner's policy $\hat{p}_{t}$ (see Algorithm \ref{alg:multiscale_algo}, Step 4).  
\end{remark}

\begin{remark}[Mixing step]
    Our mixing step in Algorithm \ref{alg:lazy_combcp} fixes an issue of \textsc{CombExp} \cite{combes2015combinatorial}, which first decomposes the mixture of $q$ and the distribution over coordinates induced by the uniform distribution over $\mathcal{A}$.  
    In particular, in settings of interest where the coordinates are unevenly represented in $\mathcal{A}$ (e.g., paths), the exploration of \cite{combes2015combinatorial} may require a number of steps exponential in $d$ to compete with the optimal action in hindsight. 
    For a more detailed discussion, we refer the reader to Appendix \ref{appendix: combexp}.
\end{remark}

Next, we show two important properties of the aggregated reward estimate $\widetilde{X}$. 
As we show in the following lemma, the aggregated reward estimate is bounded and remains unbiased, importantly, with respect to the master’s policy, since the expectation's randomness stems from the master’s policy rather than the policy of \textsc{Lazy-ComBCP}$_{k,l}$.

\begin{lemma}\label{bcp: properties_}
For all $k \in \{1,...,K-1\}$, $l \in \left[\ceil*{\frac{T}{H^{k}}}\right]$ and $h \in [H]$, $\textsc{Lazy-ComBCP}_{k,l}$ satisfies the following properties:
\begin{enumerate}
    \item  (Unbiasedness): $ \mathbb{E}[q_{h} \cdot\widetilde{X}_{h}] = \mathbb{E}[q_{h} \cdot{X}_{h}] $
    \item (Boundedness): $ \|\widetilde{X}_{h}\|_{2} \leq \frac{4 H^{k-1} d^3 \sqrt{m}}{\gamma}$
\end{enumerate}
\end{lemma}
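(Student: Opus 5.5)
Both properties follow from two facts. First, the estimator $\widetilde{R}_\tau = r_\tau \Sigma_\tau^{+} M_\tau$ is conditionally unbiased on the span of the actions. Second, the exploration component of the master policy keeps $\lambda_{\min}(\Sigma_\tau)$ bounded away from zero.

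\textbf{Unbiasedness.} Fix $k,l,h$ and a day $\tau$ in meta-day $h$, and let $\mathcal{F}_\tau$ be the history before day $\tau$. Since $q_h$ is computed at the start of the meta-day, it is $\mathcal{F}_\tau$-measurable. The adversary is oblivious, so $R_\tau$ is deterministic, and $\Sigma_\tau$ depends only on $\widehat{p}_\tau$, which is also $\mathcal{F}_\tau$-measurable. Therefore
\[
\mathbb{E}_\tau[\widetilde{R}_\tau] = \Sigma_\tau^{+}\,\mathbb{E}_{M\sim\widehat{p}_\tau}[MM^\top]\,R_\tau = \Sigma_\tau^{+}\Sigma_\tau R_\tau .
\]
I will show that $\Sigma_\tau$ is invertible, so that $\Sigma_\tau^{+}\Sigma_\tau = I$ and $\mathbb{E}_\tau[\widetilde{R}_\tau] = R_\tau$. (Without invertibility one would instead need $q_h$ to lie in the span of $\mathcal{A}$, where $\Sigma^{+}\Sigma$ acts as the identity; this holds since $q_h \in \mathcal{P}$.) Invertibility holds because every ScaleLearner policy, for $k\le K-1$, mixes in $\gamma\mu$, so $\widehat{p}_\tau$ carries mass on $\mu$. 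I then take the tower expectation: $\mathbb{E}[q_h\cdot\widetilde{R}_\tau] = \mathbb{E}[q_h\cdot \mathbb{E}_\tau[\widetilde{R}_\tau]] = \mathbb{E}[q_h\cdot R_\tau]$. Summing over the $\tau$ in the meta-day gives the claim, with $X_h = \sum_\tau R_\tau$.

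\textbf{Boundedness.} First bound the estimator norm: $\|\widetilde{R}_\tau\|_2 \le |r_\tau|\,\|\Sigma_\tau^{+}\|_{op}\,\|M_\tau\|_2 \le \sqrt{m}/\lambda_{\min}(\Sigma_\tau)$, using $r_\tau\in[0,1]$ and $\|M\|_2=\sqrt{\|M\|_1}\le\sqrt m$. Next, lower bound $\lambda_{\min}(\Sigma_\tau)$. Write $\widehat{p}_\tau = \frac1K\sum_{k'} p_{k',\tau}$, and note that each $p_{k',\tau}$ contains $\gamma\mu$. Since $\Sigma$ is linear in the distribution and every component is PSD, $\Sigma_\tau \succeq \gamma\,\Sigma(\mu)$. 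Lemma \ref{lem: 2016_lem_} then gives $\lambda_{\min}(\Sigma_\tau)\ge \gamma/(4d^3)$, hence $\|\widetilde{R}_\tau\|_2\le 4d^3\sqrt m/\gamma$. Finally, by the triangle inequality over the at most $H^{k-1}$ days of the meta-day, $\|\widetilde{X}_h\|_2\le 4H^{k-1}d^3\sqrt m/\gamma$.

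The \textbf{main obstacle} is the eigenvalue step. It needs every ScaleLearner $k'\in[K]$, including $k'=K$ if it is present in the master's mixture, to contain the $\gamma\mu$ exploration with the same $\gamma$. If some learner lacked it, the mass of $\mu$ in $\widehat{p}_\tau$ would drop to $\gamma(K-1)/K$ and the constant would change slightly. I would handle this by noting that all instances use Algorithm \ref{alg:lazy_combcp} with $\gamma = H^{-1/3}$, so the full $\gamma$ factor survives. Otherwise I would absorb a factor of $2$ into the constant.
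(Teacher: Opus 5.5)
Your proposal is correct and follows essentially the same route as the paper. Unbiasedness comes from $\mathbb{E}_\tau[\widetilde{R}_\tau]=\Sigma_\tau^{-1}\Sigma_\tau R_\tau=R_\tau$ plus the tower property (using that $q_h$ is frozen during the meta-day). Boundedness comes from $\|\widetilde{R}_\tau\|_2\le \sqrt{m}/\lambda_{\min}(\Sigma_\tau)$ with $\lambda_{\min}(\Sigma_\tau)\ge\gamma\lambda_{\min}(\mu)\ge\gamma/(4d^3)$, which the paper phrases via Weyl's inequality, followed by summing over the at most $H^{k-1}$ days. Your explicit treatment of measurability and of the $\gamma\mu$ component in every ScaleLearner, including $k=K$, spells out points the paper leaves implicit.
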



In the following theorem, we show that Algorithm \ref{alg:lazy_combcp} satisfies the no-external-regret property.

\begin{theorem}[No-External-Regret]\label{thm:combcp_no_regret_}
The external regret of $\textsc{Lazy-ComBCP}_{k,l}$ is at most \\ $3 H^{k-1} H^{2/3} d^3 m^{3/2} \log d$.  
\end{theorem}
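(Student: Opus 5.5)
Fix $k,l$ and analyze the single lazy learner over its $H'\le H$ meta-days, treating each meta-day $h$ as one round with reward vector $X_h=\sum_\tau R_\tau$ (entries in $[0,H^{k-1}]$ after aggregation) and estimate $\widetilde X_h$. The proof follows the standard mirror-descent / exponential-weights analysis of \textsc{CombExp}-type algorithms, carried out in the scaled coordinate space $\mathcal P$. The regret against $M^*_{k,l}$ splits into three parts:
\begin{align*}
\sum_h X_h\cdot M^* - \mathbb E\Big[\sum_h X_h\cdot \sum_M p_h(M)M\Big]
&\le \underbrace{\gamma\, m H\, H^{k-1}}_{\text{mixing}} \\
&\quad + \underbrace{m\,\mathbb E\Big[\sum_h (q^*-q_h)\cdot \widetilde X_h\Big]}_{\text{OMD term}},
\end{align*}
where $q^*=M^*/m\in\mathcal P$. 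The mixing term arises because $p_h=(1-\gamma)\widetilde p_h+\gamma\mu$, the Carath\'eodory step gives $\sum_M\widetilde p_h(M)M=mq_h$, and each meta-day's reward is at most $H^{k-1}$. In the OMD term, $X_h$ is replaced by $\widetilde X_h$ using the unbiasedness of Lemma~\ref{bcp: properties_}. Note that $q^*$ is fixed, and $q_h$ is $\mathcal F$-measurable at the start of meta-day $h$ while $\widetilde R_\tau$ is unbiased given the past under $\hat p_\tau$. Hence $\mathbb E[q^*\cdot\widetilde X_h]=q^*\cdot X_h$ as well.

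For the OMD term we use the usual exponentiated-gradient bound with KL projection onto $\mathcal P$; the projection is a Bregman projection, so the generalized Pythagorean inequality applies. The bound requires $\eta\|\widetilde X_h\|_\infty\le 1$, which follows from the boundedness in Lemma~\ref{bcp: properties_} and the choice $\eta=1/(d^3\sqrt m H^{k-1/3})$, $\gamma=H^{-1/3}$: indeed $\eta\cdot 4H^{k-1}d^3\sqrt m/\gamma = 4H^{k-1}H^{1/3}/H^{k-1/3}=4$. This is off by a constant, which we absorb by using $e^x\le1+x+x^2$ on $|x|\le$ const with an adjusted constant. We obtain
\[
\sum_h (q^*-q_h)\cdot\widetilde X_h\le \frac{\mathrm{KL}(q^*,q_1)}{\eta}+\eta\sum_h\sum_i q_h(i)\widetilde X_h(i)^2,
\]
and $\mathrm{KL}(q^*,q_1)\le\log d$ since $q_1$ is uniform.

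The main obstacle is bounding the variance term $\mathbb E[\sum_i q_h(i)\widetilde X_h(i)^2]$. The reason is that $\widetilde R_\tau=r_\tau\Sigma_\tau^+M_\tau$ is built from the master's $\Sigma_\tau$, not from the learner's own covariance, so the usual trick $\mathbb E[M^\top\Sigma^+M]=\mathrm{rank}\le d$ is not available. Instead we bound crudely, using $q_h(i)\le1$:
\[
\sum_i q_h(i)\widetilde X_h(i)^2\le\|\widetilde X_h\|_2^2 .
\]
Expanding $\widetilde X_h=\sum_\tau\widetilde R_\tau$ over the $H^{k-1}$ days, the squared norm is at most $H^{k-1}\sum_\tau \|\widetilde R_\tau\|_2^2$ by Cauchy--Schwarz. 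Each $\mathbb E_\tau\|\widetilde R_\tau\|^2\le \mathbb E[M^\top\Sigma_\tau^{+2}M]\le\lambda_{\min}(\Sigma_\tau)^{-1}\,\mathrm{tr}(\Sigma_\tau^+\Sigma_\tau)$. Since the master mixes all scale learners uniformly, $\Sigma_\tau\succeq\frac{\gamma}{K}\Sigma(\mu)$; alternatively, one can use Lemma~\ref{lem: 2016_lem_} to get $\lambda_{\min}\gtrsim\gamma/(4d^3)$. If the $1/K$ factor hurts, I would fall back on the deterministic bound of Lemma~\ref{bcp: properties_} for $\|\widetilde X_h\|_2$ instead. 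The quickest route is the latter: $\eta\|\widetilde X_h\|_2^2\le \eta\,(4H^{k-1}d^3\sqrt m/\gamma)^2$. This yields a per-meta-day contribution of order $H^{k-1}\cdot H^{k-1}d^3\sqrt m\,H^{1/3}\cdot$const$/H^{k-1/3}\cdot H^{-?}$, which I would verify reduces to $O(H^{k-1}H^{-1/3}d^3\sqrt m)$ after using $\eta\|\widetilde X\|_\infty=O(1)$ on one factor and the eigenvalue-based expectation on the other.

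Finally we sum the three pieces and multiply by $m$. The KL term gives $m\log d/\eta=d^3m^{3/2}H^{k-1}H^{2/3}\log d$. The mixing term gives $mH^{k-1}H^{2/3}$. The variance term, summed over $H$ meta-days, gives $O(mH\cdot H^{k-1}H^{-1/3}d^3\sqrt m)=O(H^{k-1}H^{2/3}d^3m^{3/2})$. Choosing constants carefully, the total is at most $3H^{k-1}H^{2/3}d^3m^{3/2}\log d$, matching Theorem~\ref{thm:combcp_no_regret_}. The truncated last interval ($H'<H$) only decreases every sum.
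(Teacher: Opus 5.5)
\paragraph{Gap: the variance step.} Your skeleton is the same as the paper's: the mixing term, replacing $X_h$ by $\widetilde X_h$ via unbiasedness, the KL/mirror-descent bound with $\mathrm{KL}(q^*,q_1)\le\log d$, then a variance term. But the variance step, the only nontrivial one, is not closed, and the route you explicitly choose (``the quickest route is the latter'') fails.

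With the worst-case bound of Lemma~\ref{bcp: properties_}, $\eta\|\widetilde X_h\|_2^2\le\eta\,(4H^{k-1}d^3\sqrt m/\gamma)^2=16H^{k-1}d^3\sqrt m$ per meta-day. Summing over $H$ meta-days and multiplying by $m$ gives $16H^{k}d^3m^{3/2}$, which exceeds the trivial bound $H^k$ and yields no regret guarantee. The missing factor $H^{-1/3}$ cannot come from a deterministic bound. Squaring $\|\Sigma_\tau^{-1}M_\tau\|_2$ costs $\lambda_{\min}(\Sigma_\tau)^{-2}\propto\gamma^{-2}$. In expectation, conditionally on the past, $\mathbb E[M_\tau^\top\Sigma_\tau^{+2}M_\tau]=\mathrm{tr}(\Sigma_\tau^{+})\le d/\lambda_{\min}(\Sigma_\tau)$ (Lemma~\ref{lem:key_}), which costs only $\gamma^{-1}$. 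That single factor $\gamma=H^{-1/3}$ is exactly what the rate needs. The hybrid you sketch is left unverified; carried out via Jensen on the second factor, it gives only an $H^{-1/6}$ gain per meta-day, i.e.\ an $H^{k-1}H^{5/6}$ bound.

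\paragraph{The fix.} It is the first route you wrote down and then abandoned over a spurious $1/K$ concern. Every scale learner, including $k=K$, mixes in $\gamma\mu$. Hence $\Sigma_\tau=\frac1K\sum_{k'}\mathbb E_{p_{k',\tau}}[MM^\top]\succeq\gamma\,\mathbb E_{\mu}[MM^\top]$ with no loss in $K$. (With the weaker $\Sigma_\tau\succeq\frac{\gamma}{K}\mathbb E_\mu[MM^\top]$, the variance term and hence the bound would pick up a factor $K$.) Lemma~\ref{lem: 2016_lem_} then gives $\lambda_{\min}(\Sigma_\tau)\ge\gamma/(4d^3)$, and so
$\mathbb E[q_h\cdot\widetilde X_h^2]\le\mathbb E\|\widetilde X_h\|_2^2\le H^{k-1}\sum_\tau\mathbb E[M_\tau^\top\Sigma_\tau^{+2}M_\tau]\le4H^{2k-2}d^4/\gamma$. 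This is Lemma~\ref{lem:combcp_bounded_variance}.

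Summing, $m\eta H\cdot4H^{2k-2}d^4/\gamma=4H^{k-1}H^{2/3}d\sqrt m$. Adding $m\log d/\eta=H^{k-1}H^{2/3}d^3m^{3/2}\log d$ and the mixing term $\gamma H^k=H^{k-1}H^{2/3}$ gives the stated bound. Your extra $m$ in the mixing term is unnecessary, since $X_h\cdot M\le H^{k-1}$ when each daily reward lies in $[0,1]$.

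\paragraph{Comparison and a minor slip.} Your shortcut $\sum_i q_h(i)\widetilde X_h(i)^2\le\|\widetilde X_h\|_2^2$ reaches the paper's intermediate inequality more cleanly than the paper's detour through an independent $\widehat M\sim\widetilde p_h$. That detour bounds $\sum_i\widehat M(i)v_i^2$ by $(\widehat M\cdot v)^2$ with $v=\Sigma_\tau^{-1}M_\tau$, a step that can fail when $v$ has entries of both signs.

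Your check of the mirror-descent condition has an arithmetic slip: $\eta\cdot4H^{k-1}d^3\sqrt m/\gamma=4H^{-1/3}$, not $4$. So $\eta\|\widetilde X_h\|_\infty\le1$ holds as soon as $H\ge64$, and no constant adjustment is needed. In any case, $e^x\le1+x+x^2$ fails at $x=4$.
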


We note that the above bound implies no regret, because the $H^{k-1}$ factor is the maximum reward per meta-day of the $\textsc{Lazy-ComBCP}_{k,l}$ learner.

\subsection{Proof Sketch of Theorem \ref{thm:combcp_no_regret_}}

For simplicity, we assume that $H' = H$. In the case where $H'<H$, the derived upper bound still holds, since the actual horizon of play is smaller. 
Let $p_{k,l}^*$ be an optimal deterministic policy of the adversary, with $p_{k,l}^*(M_{k,l}^*) = 1$, of the learner $k \in [K]$ for the interval indexed by $l \in \left[\ceil*{\frac{T}{H^{k}}}\right]$. 
First, we derive the following proposition.

\begin{proposition}[Decomposition of optimal deterministic policy]\label{prop: opt_}
    Let $q_{k,l}^* = \frac{M^*_{k,l}}{m}$. It holds that 
    $ m q_{k,l}^* =  \sum_{M \in \mathcal{A}} p^*_{k,l}(M) M. $
\end{proposition}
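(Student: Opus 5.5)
The statement is essentially a tautology once we unpack the definition of $p^*_{k,l}$, so the plan is a direct computation rather than any appeal to the earlier lemmas. By definition, $p^*_{k,l}$ is the deterministic policy placing all of its mass on the optimal action in hindsight $M^*_{k,l}$. That is, $p^*_{k,l}(M^*_{k,l}) = 1$ and $p^*_{k,l}(M) = 0$ for every $M \in \mathcal{A}\setminus\{M^*_{k,l}\}$.

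The first step is to expand the right-hand side $\sum_{M\in\mathcal{A}} p^*_{k,l}(M) M$. Every term with $M \neq M^*_{k,l}$ vanishes, so the sum collapses to the single vector $M^*_{k,l}$. The second step is to compute the left-hand side: by the definition $q^*_{k,l} = M^*_{k,l}/m$, we have $m q^*_{k,l} = M^*_{k,l}$. Comparing the two expressions gives the claimed identity.

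It is also worth checking that $q^*_{k,l}$ is a legitimate comparator for the KL-projection analysis, i.e.\ that $q^*_{k,l} \in \mathcal{P} \cap \Delta^d$. Since $M^*_{k,l}\in\mathcal{A}$, the point $M^*_{k,l}/m$ is a vertex of the scaled hull $\mathcal{P}$. Moreover, under the standing assumption $\|M\|_1 = m$ used in Algorithm \ref{alg:lazy_combcp}, its coordinates are nonnegative and sum to one, so it lies in the simplex.

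There is no genuine obstacle here. The only points needing care are to invoke the standing assumption $\|M\|_1=m$ when verifying simplex membership, and to note that the identity holds for every $k,l$ regardless of the possibly truncated horizon $H'$.
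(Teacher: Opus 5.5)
Your proof is correct, and it takes a more direct route than the paper. You evaluate the point mass: since $p^*_{k,l}$ is the point mass on $M^*_{k,l}$, the sum collapses to $M^*_{k,l}$, which equals $mq^*_{k,l}$ by definition.

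The paper instead invokes Carathéodory's theorem to obtain \emph{some} distribution $\widehat{p}_{k,l}$ on $\mathcal{A}$ with $\sum_{M}\widehat{p}_{k,l}(M)M = mq^*_{k,l} = M^*_{k,l}$. It then asserts $\widehat{p}_{k,l}=p^*_{k,l}$, a uniqueness claim it leaves unjustified. The claim does hold: a convex combination of vectors in $\{0,1\}^d$ that equals a $0/1$ vector must be supported on that vector alone. Coordinatewise, wherever $M^*_{k,l}(i)=0$ every supported $M$ has $M(i)=0$, and wherever $M^*_{k,l}(i)=1$ every supported $M$ has $M(i)=1$. Equivalently, $M^*_{k,l}$ is an extreme point of $\mathrm{conv}(\mathcal{A})$ with a unique representation.

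So the paper's route buys a slightly stronger fact: the decomposition of $mq^*_{k,l}$ over $\mathcal{A}$ is unique and equals $p^*_{k,l}$. That is the conceptual reason comparing against $q^*_{k,l}$ in coordinate space is the same as comparing against the deterministic policy $p^*_{k,l}$. However, the regret proof only ever uses $X_{k,l,h}\cdot M^*_{k,l} = mq^*_{k,l}\cdot X_{k,l,h}$, and your one-line computation suffices for that.

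Your extra check that $q^*_{k,l}\in\mathcal{P}\cap\Delta^d$ (using $\|M\|_1=m$) is not part of the statement. It is, though, exactly what the paper uses implicitly when it bounds $\mathrm{KL}(q^*_{k,l},q_{k,l,1})\le\log d$ and applies the projected mirror-descent lemma with $q^*_{k,l}$ as comparator.
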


Using the unbiasedness of the estimator (Lemma \ref{bcp: properties_}) and Proposition \ref{prop: opt_}, we obtain the following proposition, which at first glance provides an upper bound on the external regret.

\begin{proposition}\label{prop: combcp_}
For all $k \in \{1,...,K-1\}$ and $l \in \left[\ceil*{\frac{T}{H^{k}}}\right]$, the external regret of $\textsc{Lazy-ComBCP}_{k,l}$ can be bounded as follows:
    \begin{align}
        \reg_{H} & \leq m\mathbb{E}\left[ \sum_{h=1}^{H}  q_{k,l}^* \cdot \widetilde{X}_{h} - \sum_{h=1}^{H} q_{h} \cdot \widetilde{X}_{h} \right] + \gamma H^{k-1}. \nonumber 
    \end{align}
\end{proposition}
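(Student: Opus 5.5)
The plan is to start from the definition of the external regret of $\textsc{Lazy-ComBCP}_{k,l}$ over its $H$ meta-days, written in terms of the true aggregated rewards $X_h(i)=\sum_{\tau} R_\tau(i)$ of meta-day $h$:
\begin{align*}
\reg_H=\mathbb{E}\Big[\sum_{h=1}^H X_h\cdot M^*_{k,l}-\sum_{h=1}^H\sum_{M\in\mathcal{A}}p_h(M)\,X_h\cdot M\Big].
\end{align*}
I will rewrite both terms in the coordinate space $\mathbb{R}^d$ and then apply Lemma \ref{bcp: properties_} to replace $X_h$ by the broadcast estimator $\widetilde{X}_h$.

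\textbf{Comparator term.} By Proposition \ref{prop: opt_}, $X_h\cdot M^*_{k,l}=m\,q^*_{k,l}\cdot X_h$. Since $q^*_{k,l}$ is a deterministic vector and the adversary is oblivious, unbiasedness of $\widetilde{R}_\tau$ under the master's law $\widehat p_\tau$ gives $\mathbb{E}[q^*_{k,l}\cdot\widetilde X_h]=q^*_{k,l}\cdot X_h$. This holds for the same reason as part~1 of Lemma \ref{bcp: properties_}: one conditions on $\mathcal{F}_\tau$ and uses $\mathbb{E}_\tau[\Sigma_\tau^{+}M_\tau M_\tau^\top]R_\tau=R_\tau$ on the span of $\mathcal{A}$.

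\textbf{Learner term.} Using the mixing step, I split $\sum_M p_h(M)\,X_h\cdot M=(1-\gamma)\sum_M\widetilde p_h(M)\,X_h\cdot M+\gamma\sum_M\mu(M)\,X_h\cdot M$. By the Carathéodory step, $\sum_M\widetilde p_h(M)M=mq_h$, so the first piece equals $(1-\gamma)\,m\,q_h\cdot X_h$. Since rewards are nonnegative, the $\mu$-piece is $\ge 0$. Hence
\begin{align*}
-\sum_M p_h(M)\,X_h\cdot M\le -m\,q_h\cdot X_h+\gamma\big(m\,q_h\cdot X_h-\mu\text{-term}\big).
\end{align*}
By part~1 of Lemma \ref{bcp: properties_}, $\mathbb{E}[q_h\cdot X_h]=\mathbb{E}[q_h\cdot\widetilde X_h]$. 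Because $q_h$ is $\mathcal{F}$-measurable at the start of the meta-day, the tower property applied day by day makes this legitimate. Summing over $h$ yields the main term $m\,\mathbb{E}\big[\sum_h q^*_{k,l}\cdot\widetilde X_h-\sum_h q_h\cdot\widetilde X_h\big]$.

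\textbf{Main obstacle: the exploration cost.} The remaining quantity is $\gamma\,\mathbb{E}\big[\sum_h(m\,q_h\cdot X_h-\sum_M\mu(M)X_h\cdot M)\big]$, and it must be bounded by $\gamma H^{k-1}$. Each summand is at most the maximal reward of one meta-day, which is $H^{k-1}$ since $R_\tau\cdot M\in[0,1]$. A naive sum over the $H$ meta-days loses a factor $H$.

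To avoid that loss, I would first try to telescope the exploration gaps against the comparator. I would write $m\,q_h\cdot X_h\le m\,q^*_{k,l}\cdot X_h$ minus the per-meta-day regret, and fold $\gamma$ times the main term back into the main term via the factor $(1-\gamma)\le 1$. This should leave only a single meta-day's worth of reward, namely the last, partially filled meta-day when $H'<H$, or one boundary term, multiplied by $\gamma$.

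If the telescoping fails to close, the fallback is the crude bound $\gamma$ times the total interval reward, which is $\gamma H^{k}$. That version is the one consistent with the $H^{k-1}H^{2/3}$ scaling of Theorem \ref{thm:combcp_no_regret_} when $\gamma=H^{-1/3}$. I therefore expect this exploration-cost step to be the delicate point where the exact constant in the statement has to be checked.
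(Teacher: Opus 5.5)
Your argument follows the paper's proof step for step: rewrite the comparator via Proposition~\ref{prop: opt_}, split $p_h$ by the mixing step, use $\sum_M \widetilde p_h(M)M = mq_h$, drop the nonnegative $\mu$-part, and replace $X_h$ by $\widetilde X_h$ using unbiasedness. You also check explicitly that the deterministic comparator satisfies $\mathbb{E}[q^*_{k,l}\cdot\widetilde X_h]=q^*_{k,l}\cdot X_h$, a step the paper glosses over.

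Your fallback is exactly how the paper ends. In the appendix proof of Theorem~\ref{thm:combcp_no_regret_}, the exploration term $\gamma\sum_h\mathbb{E}[mq_h\cdot X_h]$ is bounded by $\gamma H^{k}$. It is this $\gamma H^k$ that is carried forward: after factoring out $H^{k-1}$ it becomes the $\gamma H = H^{2/3}$ term. The $\gamma H^{k-1}$ in the displayed statement is a typo for $\gamma H^{k}$.

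Your per-meta-day bound, $mq_h\cdot X_h=\sum_M\widetilde p_h(M)\,X_h\cdot M\le H^{k-1}$, is the right justification. The paper instead appeals to H\"older with $\|X_h\|_\infty\le H^{k-1}$, which by itself only gives $mH^{k-1}$, since $\|mq_h\|_1=m$.

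You should drop the telescoping plan. The inequality with $\gamma H^{k-1}$ is false, so no rearrangement can reach it. By unbiasedness (for both $q^*_{k,l}$ and $q_h$), $\reg_H$ minus the main term equals exactly $\gamma\sum_h\mathbb{E}\bigl[\sum_M\widetilde p_h(M)\,X_h\cdot M-\sum_M\mu(M)\,X_h\cdot M\bigr]$.

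Here is a counterexample. Take $d=2$, $m=1$, $\mathcal{A}=\{e_1,e_2\}$, so $\mu$ is uniform on $\{e_1,e_2\}$ and $\widetilde p_h=q_h$, and set $R_t=e_1$ for all $t$.
\begin{itemize}
\item Then $\widetilde X_h(2)=0$, and $\widetilde X_h(1)\ge 0$ has conditional mean $H^{k-1}$. So the difference above equals $\gamma H^{k-1}\sum_h(\mathbb{E}[q_h(1)]-\tfrac12)$.
\item $q_h(1)$ is nondecreasing, and $\eta H^{k-1}=H^{-2/3}/8$.
\item The per-day estimates are bounded, since $\widehat p_t(e_1)\ge\tfrac12$ throughout. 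Hence $q_h(1)$ concentrates near $1$ after $O(H^{2/3})$ meta-days.
\item The difference is therefore of order $\gamma H^{k}$, which is far above $\gamma H^{k-1}$ for large $H$.
\end{itemize}

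Your proposed step fails for the same reason. Writing $mq_h\cdot X_h$ as $mq^*_{k,l}\cdot X_h$ minus a per-meta-day gap leaves $\gamma\sum_h mq^*_{k,l}\cdot X_h$, which is itself of order $\gamma H^k$; moreover, those per-meta-day gaps need not be nonnegative. State and prove the proposition with $\gamma H^{k}$, which is all the downstream argument uses.
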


Now, we need to bound the first term of the above expression. Using Lemma \ref{bcp: properties_}, we ensure that $\eta \|\widetilde{X}_h\|_{\infty} \leq 1$, so that \textsc{Lazy-ComBCP} satisfies the following Online Mirror Descent lemma.

\begin{lemma}[Online Mirror Descent on $\mathcal{M}$]\label{lem: omd_combcp_}
    If $\eta \|\widetilde{X}_{h}\|_{\infty} \leq 1$, it holds that
    \begin{align}
    \mathbb{E}\left[\sum_{h=1}^H \left(q_{k,l}^* - q_{h} \right)  \cdot \widetilde{X}_{h} \right]
    &\leq \eta \mathbb{E}\left[ \sum_{h=1}^H q_{h} \cdot \widetilde{X}_{h}^2\right] \nonumber  + \frac{\log d}{\eta}, \nonumber 
    \end{align} 
    where $\widetilde{X}_h^2$ is the coordinate-wise square of $\widetilde{X}_h$.
\end{lemma}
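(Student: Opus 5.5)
This is the standard OMD/Hedge analysis with entropic regularizer restricted to $\mathcal{P}$, carried out pathwise and then taking expectations. Fix $k,l$ and write $q^*=q^*_{k,l}$. Use the potential $\Phi_h=\mathrm{KL}(q^*,q_h)$. Since $q_1$ is uniform on $[d]$ and $q^*\in\Delta^d$, we have $\Phi_1=\log d-H(q^*)\le\log d$, and $\Phi_{H+1}\ge0$.

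The per-step comparison comes from the projection step. Because $q_{h+1}$ is the KL projection of $\widetilde q_{h+1}$ onto the convex set $\mathcal P$, and $q^*\in\mathcal P$ by Proposition \ref{prop: opt_}, the generalized Pythagorean inequality gives
\[
\mathrm{KL}(q^*,q_{h+1})\le \mathrm{KL}(q^*,\widetilde q_{h+1}).
\]
Expanding the latter (reading the update as $\widetilde q_{h+1}(i)=q_h(i)e^{\eta\widetilde X_h(i)}$, with the same estimate entering every coordinate) yields
\[
\mathrm{KL}(q^*,\widetilde q_{h+1})=\mathrm{KL}(q^*,q_h)-\eta\, q^*\cdot\widetilde X_h+\log\sum_i q_h(i)e^{\eta\widetilde X_h(i)}.
\]
One subtlety: $\widetilde q_{h+1}$ is unnormalized. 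So either use the generalized KL for nonnegative vectors, or normalize first; the normalization constant is then exactly the log-partition term above. The projection onto $\mathcal P\subset\Delta^d$ is unchanged by this choice, up to that constant.

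Next, bound the log-partition term. Use $e^x\le 1+x+x^2$ for $x\le1$, which is valid since $\eta\|\widetilde X_h\|_\infty\le1$, and then $\log(1+y)\le y$:
\[
\log\sum_i q_h(i)e^{\eta\widetilde X_h(i)}\le \eta\, q_h\cdot\widetilde X_h+\eta^2 q_h\cdot\widetilde X_h^2 .
\]
The estimates $\widetilde X_h$ may be negative, but only the upper bound $x\le1$ matters here, and $|x|\le1$ is guaranteed.

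Combining the three displays gives
\[
\eta\,(q^*-q_h)\cdot\widetilde X_h\le \Phi_h-\Phi_{h+1}+\eta^2 q_h\cdot\widetilde X_h^2 .
\]
Summing over $h=1,\dots,H$ telescopes to $\Phi_1-\Phi_{H+1}\le\log d$. Dividing by $\eta$ and taking expectations yields the claim.

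The only delicate point is the projection step. We need the Pythagorean inequality for KL onto $\mathcal P$, and we need to check that the normalization bookkeeping is consistent whether $\widetilde q$ is normalized or not. For this I would invoke the standard Bregman projection property of the negative-entropy mirror map on the convex set $\mathcal P$, whose relative interior meets the positive orthant.

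Note that the argument is pathwise, so the fact that the randomness is driven by the master's policy rather than the learner's plays no role in this lemma. That issue only enters later, when the variance term $q_h\cdot\widetilde X_h^2$ is bounded.
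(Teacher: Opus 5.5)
Your proposal is correct and takes essentially the same route as the paper. The paper's proof defers to the standard entropic mirror-descent-with-KL-projection argument (Lemma A1 of the cited CombEXP analysis: generalized Pythagorean inequality, $e^x\le 1+x+x^2$ for $x\le 1$, telescoping) and bounds the initial potential by $\mathrm{KL}(q^*_{k,l},q_{k,l,1})=\log d-\log m\le\log d$; you have written that argument out in full, including the normalization and boundary-regularity details the paper leaves implicit.
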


By decomposing $q_h$ on the variance term of the above expression, we obtain the following:  
\begin{align}
&\mathbb{E}_{k,l,h}\left[\sum_{M \in \mathcal{A}} q_{h} \cdot \widetilde{X}_{h}^2 \right] = \frac{1}{m}\mathbb{E}_{k,l,h}\left[\sum_{M \in \mathcal{A}} \widetilde{p}_{h}(M) \left(\sum_{\tau=1}^{H^{k-1}}\widetilde{R}_{\tau}\right)^2 \cdot M\right] \label{eq: variance_terms}
\end{align}

The challenge here is that the above variance term contains the {squared sum of the reward estimates} (which are due to the laziness of the algorithm), all of which are sampled by the \textit{master's policy} $\widehat{p}_{k,l,h,\tau}$, and not by the ScaleLearner's policy $\widetilde{p}_h$, the external regret of which we aim to bound.  
In contrast to the analysis of no-external-regret combinatorial bandit algorithms (e.g., \cite{dani2007price, bartlett2008high, cesa2012combinatorial, combes2015combinatorial}), where bounding the variance involves controlling \( \mathbb{E}_{k,l,h}\left[ M_{\tau}^T \Sigma_{\tau}^{+} M_{\tau} \right] \), it turns out that we need to bound \( \mathbb{E}_{k,l,h}\left[ M_{\tau}^T \Sigma_{\tau}^{+2} M_{\tau} \right] \) instead. The following lemma provides a bound for these expectations.

\begin{lemma}\label{lem:key_}
    For all $k \in \{1,...,K-1\}$, $l \in \left[\ceil*{\frac{T}{H^{k}}}\right]$ and $h \in [H]$, it holds that:
    $$ \mathbb{E}_{k,l,h}\left[ M_{\tau}^T \Sigma_{\tau}^{+2} M_{\tau} \right] \leq d \lambda^{-1}_{\text{min}}(\Sigma_{\tau}), $$
    where $\lambda_{\text{min}}(\Sigma_{\tau})$ is the minimum nonzero eigenvalue of $\Sigma_{\tau} = \mathbb{E}_{\hat{p}_{k,l,h,\tau}}[MM^T]$.
\end{lemma}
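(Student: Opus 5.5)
The plan is the standard trace argument, with one extra factor of $\Sigma_\tau^{+}$ that is paid for in operator norm. Within meta-day $h$ the action $M_\tau$ is drawn from the master policy $\hat p_{k,l,h,\tau}$, and $\Sigma_\tau = \mathbb{E}_{\hat p_{k,l,h,\tau}}[MM^T]$ is by definition the second-moment matrix under exactly this law. Conditioning on $\mathcal{F}_t$ therefore lets us write
\begin{align}
\mathbb{E}_{k,l,h}\left[ M_{\tau}^T \Sigma_{\tau}^{+2} M_{\tau} \right] = \mathbb{E}_{k,l,h}\left[ \mathrm{tr}\left(\Sigma_{\tau}^{+2} M_{\tau}M_{\tau}^T\right) \right] = \mathrm{tr}\left(\Sigma_{\tau}^{+2}\Sigma_{\tau}\right) = \mathrm{tr}\left(\Sigma_{\tau}^{+}\right). \nonumber
\end{align}
The first step uses $x^TAx = \mathrm{tr}(Axx^T)$. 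The second uses linearity of trace and expectation, with $\Sigma_\tau$ being $\mathcal{F}_t$-measurable. The last step is the pseudo-inverse identity $\Sigma^{+}\Sigma^{+}\Sigma = \Sigma^{+}$.

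To justify that identity, I would diagonalize $\Sigma_\tau = U\Lambda U^T$, which is possible because $\Sigma_\tau$ is symmetric positive semidefinite. Then $\Sigma_\tau^{+} = U\Lambda^{+}U^T$, and $\Lambda^{+2}\Lambda = \Lambda^{+}$ holds entrywise on the diagonal, treating zero eigenvalues as mapping to zero.

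It then remains to bound $\mathrm{tr}(\Sigma_\tau^{+})$. This trace is the sum of $1/\lambda_i$ over the nonzero eigenvalues $\lambda_i$ of $\Sigma_\tau$. There are at most $d$ such eigenvalues, and each satisfies $\lambda_i \ge \lambda_{\min}(\Sigma_\tau)$, the smallest nonzero eigenvalue. Hence
\[
\mathrm{tr}(\Sigma_\tau^{+}) \le d\,\lambda_{\min}^{-1}(\Sigma_\tau),
\]
which is the claimed bound. An equivalent route is $M^T\Sigma^{+2}M \le \|\Sigma^{+}\|_{op}\, M^T\Sigma^{+}M$ combined with the classical fact $\mathbb{E}[M^T\Sigma^{+}M] = \mathrm{rank}(\Sigma)\le d$.

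The only delicate point is the expectation step. The statement must be read so that $\mathbb{E}_{k,l,h}$ is the conditional expectation given the history up to day $t$. Under that conditioning, $\Sigma_\tau$ is deterministic and $M_\tau$ has law exactly $\hat p_{k,l,h,\tau}$, the law used to define $\Sigma_\tau$; this matching is what lets the outer product collapse to $\Sigma_\tau$. I would state this measurability explicitly. The lemma itself bounds only the trace; the lower bound on $\lambda_{\min}$ is supplied separately by the mixing with $\mu$ together with Lemma \ref{lem: 2016_lem_}, so that mixing step is not needed here.
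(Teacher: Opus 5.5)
Your proof is correct and is essentially the paper's argument. The paper expands $M^T\Sigma_\tau^{+2}M=\sum_{i\le r}\lambda_{\tau,i}^{-2}(u_{\tau,i}\cdot M)^2$ in the eigenbasis and uses $\sum_M \hat p(M)(u_{\tau,i}\cdot M)^2=\lambda_{\tau,i}$ to reach $\sum_{i\le r}\lambda_{\tau,i}^{-1}$, which is exactly your $\mathrm{tr}(\Sigma_\tau^{+})$, and then bounds it using $r\le d$. Your trace formulation is slightly cleaner (the paper's intermediate ``$\le$'' is really an equality), and you make explicit the conditioning on $\mathcal{F}_t$ that fixes $\Sigma_\tau$, which the paper leaves implicit.
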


Now, using Lemmas \ref{lem:key_} and \ref{lem: 2016_lem_} and a careful analysis, we derive the following final lemma, which concludes the proof.

\begin{lemma}[Bounded variance]\label{lem:combcp_bounded_variance}
For all $k \in \{1,...,K-1\}$, $l \in \left[\ceil*{\frac{T}{H^{k}}}\right]$ and $h \in [H]$, $\textsc{Lazy-ComBCP}_{k,l}$ satisfies the following: 
    \begin{align}
        \mathbb{E}[q_{h} \cdot \widetilde{X}_{h}^2] &
        \leq H^{k-1} \sum_{\tau=1}^{H^{k-1}} \mathbb{E}_{k,l,h}\left[ M_{\tau}^T \Sigma_{\tau}^{-2} M_{\tau} \right] \leq \frac{4H^{2k-2} d^4}{\gamma} \nonumber
    \end{align}
\end{lemma}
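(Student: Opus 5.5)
The plan has three steps: write out the coordinate-wise square of the meta-day sum, control the cross terms with Cauchy--Schwarz, and then compare the ScaleLearner's policy with the master's policy.

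\textbf{Step 1: expand the square.} Fix $k,l,h$ and condition on the history up to the start of meta-day $h$, so that $q_h$ and $\widetilde{p}_h$ are fixed. Each day's estimate is $\widetilde{R}_\tau = r_\tau \Sigma_\tau^{+} M_\tau$ with $r_\tau\in[0,1]$. Coordinate-wise Cauchy--Schwarz gives
$$\widetilde{X}_h^2 = \Big(\sum_{\tau=1}^{H^{k-1}} \widetilde{R}_\tau\Big)^2 \le H^{k-1}\sum_{\tau=1}^{H^{k-1}} \widetilde{R}_\tau^2 .$$
This removes all cross terms between different days, whose dependence through the master's adaptive sampling would otherwise be awkward. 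It therefore suffices to bound $\mathbb{E}_{k,l,h}[q_h\cdot \widetilde{R}_\tau^2]$ for each $\tau$.

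\textbf{Step 2: pass from $q_h$ to a quadratic form.} Since $q_h\in\mathcal{P}\subseteq\Delta^d$, we have $q_h(i)\le 1$ for every coordinate. Hence
$$q_h\cdot \widetilde{R}_\tau^2 \le \|\widetilde{R}_\tau\|_2^2 \le M_\tau^T\Sigma_\tau^{+2}M_\tau .$$
This is the first inequality of the statement, with $\Sigma^{-2}$ read as $\Sigma^{+2}$ (and equal to it once $\Sigma_\tau$ is invertible, as Step 3 shows). Note that this step deliberately discards the $\widetilde{p}_h$-structure in \eqref{eq: variance_terms}. The reason is that $M_\tau$ is drawn from the master's policy $\hat{p}_\tau$, not from $\widetilde{p}_h$, so the usual identity $\mathbb{E}[M^T\Sigma^+M]=\mathrm{rank}$ is unavailable against $q_h$.

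\textbf{Step 3: bound the eigenvalue.} Summing over $\tau$ and using Lemma~\ref{lem:key_}, each term is at most $d\,\lambda_{\min}^{-1}(\Sigma_\tau)$. It remains to lower bound $\lambda_{\min}(\Sigma_\tau)$. The master's policy is $\hat{p}_\tau=\frac1K\sum_{k'}p_{k',\tau}$, and every ScaleLearner's policy contains a $\gamma\mu$ component. Therefore
$$\Sigma_\tau \succeq \tfrac{1}{K}\sum_{k'}\gamma\,\mathbb{E}_\mu[MM^T] = \gamma\,\mathbb{E}_\mu[MM^T].$$
By Lemma~\ref{lem: 2016_lem_}, this yields $\lambda_{\min}(\Sigma_\tau)\ge \gamma/(4d^3)$, and in particular $\Sigma_\tau$ is invertible. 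Combining the three steps,
$$\mathbb{E}[q_h\cdot\widetilde{X}_h^2]\le H^{k-1}\cdot H^{k-1}\cdot d\cdot \frac{4d^3}{\gamma} = \frac{4H^{2k-2}d^4}{\gamma}.$$

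\textbf{Main obstacle.} The delicate step is Step 3: making sure every component of the master mixture carries $\gamma$ exploration, so that $\Sigma_\tau \succeq \gamma\,\mathbb{E}_\mu[MM^T]$ holds. This is where the exploration mixing inside each ScaleLearner matters. The mixture has $K$ components, and the ScaleLearner at $k=K$ does not run \textsc{Lazy-ComBCP}. If that last learner's policy does not include $\mu$, one only gets $(K-1)\gamma/K\ge\gamma/2$, which costs an extra factor of $2$. I would handle this either by absorbing the factor into constants or by noting that $p_{K,t}$ also mixes with $\mu$.
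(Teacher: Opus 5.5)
Your proof is correct and follows the paper's argument: coordinate-wise Cauchy--Schwarz over the meta-day, reduction to $\mathbb{E}_{k,l,h}[M_\tau^T\Sigma_\tau^{+2}M_\tau]$, Lemma~\ref{lem:key_}, and $\lambda_{\min}(\Sigma_\tau)\ge\gamma\lambda_{\min}(\mu)\ge\gamma/(4d^3)$ via Lemma~\ref{lem: 2016_lem_}. Your explicit $\Sigma_\tau\succeq\gamma\,\mathbb{E}_\mu[MM^T]$ is exactly what the paper calls Weyl's inequality, and your $k=K$ worry is moot: every \textsc{ScaleLearner}$_k$, $k\in[K]$, runs \textsc{Lazy-ComBCP} with the same $\gamma$, so the constant $4$ is exact.

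The one difference is your Step 2. You bound $q_h\cdot\widetilde R_\tau^2\le\|\widetilde R_\tau\|_2^2$ directly from $q_h(i)\le 1$. The paper instead routes through the Carath\'eodory decomposition, an independent $\widehat M\sim\widetilde p_h$, and $\lambda_{\max}(\widehat M\widehat M^T)\le m$. Your version is simpler. It also sidesteps the paper's replacement of $\widehat M\cdot\widetilde R_\tau^2$ by $(\widehat M\cdot\widetilde R_\tau)^2$, which is not an upper bound in general since $\widetilde R_\tau$ can have negative entries.
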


\section{Main Result}\label{sec: main_result}

\subsection{Swap Regret Bounds}

Combining the no-external-regret guarantees of Algorithm \ref{alg:lazy_combcp} (Theorem \ref{thm:combcp_no_regret_}) with Lemma \ref{lem: swap_regret_}, we derive our main result, which provides the no-swap-regret guarantees of \textsc{Swap-ComBCP}.

\begin{theorem}[No-Swap-Regret]\label{thm: swap_lazy_comband_combcp}
    Setting $H=27\lfloor \log(T) \rfloor^3 d^9m^{9/2}\log^3 d$ and $K=\lfloor \log_H(T) \rfloor$, \textsc{Swap-ComBCP} satisfies the following swap-regret guarantee:
    $$ \swap \leq \frac{45\log (d\log T)T}{ \log T}.$$ 
\end{theorem}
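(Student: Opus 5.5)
The plan is to plug the per-interval external regret bound of Theorem \ref{thm:combcp_no_regret_} into the decomposition of Lemma \ref{lem: swap_regret_}, and then tune $H$ and $K$. Fix an arbitrary $\phi\in\Phi$. By Lemma \ref{lem: swap_regret_},
\begin{align}
\swap_T(\phi) \le \frac{1}{K}\sum_{k=1}^{K-1}\sum_{l=1}^{\ceil*{T/H^k}} 3H^{k-1}H^{2/3}d^3m^{3/2}\log d + \frac{T}{K}. \nonumber
\end{align}
For each scale $k\le K-1$ we have $H^k\le H^{K-1}\le T$, so $\ceil*{T/H^k}\le 2T/H^k$. The inner sum is therefore at most $6T H^{-1/3} d^3 m^{3/2}\log d$, and this bound is independent of $k$; the laziness factor $H^{k-1}$ cancels against the number of restarts. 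Summing over at most $K$ scales and dividing by $K$ gives
\begin{align}
\swap_T \le 6T H^{-1/3} d^3 m^{3/2}\log d + \frac{T}{K}. \nonumber
\end{align}
The last (possibly shorter) interval is covered by the remark that Theorem \ref{thm:combcp_no_regret_} still holds when $H'<H$, and it is already counted in the ceiling.

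Next I substitute the parameters. With $H=27\lfloor\log T\rfloor^3 d^9 m^{9/2}\log^3 d$ we get $H^{1/3}=3\lfloor\log T\rfloor d^3m^{3/2}\log d$, so the first term equals $2T/\lfloor\log T\rfloor\le 4T/\log T$, using $\lfloor x\rfloor\ge x/2$ for $x\ge1$. For the second term, $K=\lfloor\log_H T\rfloor\ge \log T/(2\log H)$, provided $T\ge H$ so that $\log_H T\ge 1$. The theorem is trivial otherwise: if $T<H^{2}$ or so, then $T/\log T$ times $45\log(d\log T)$ already exceeds $T$, because $\log T\le 2\log H=O(\log(d\log T))$, and the swap regret never exceeds $T$. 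Then $T/K\le 2T\log H/\log T$. Expanding, $\log H=\log 27+3\log\lfloor\log T\rfloor+9\log d+4.5\log m+3\log\log d$. Since $m\le d$, this is at most about $20\log(d\log T)$ once $d\ge2$ and $T$ is large enough that the constant is absorbed. So $T/K\lesssim 40T\log(d\log T)/\log T$. Adding $4T/\log T\le 4T\log(d\log T)/\log T$ gives the constant around $45$.

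One more requirement: the condition $T\in[H^{K-1},H^K]$ in Algorithm \ref{alg:multiscale_algo} is satisfied with $K=\lfloor\log_H T\rfloor$ only up to rounding. I would observe that the decomposition only uses $H^k\le T$ for $k\le K-1$, and that is exactly what the argument above needs.

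The main obstacle is not conceptual but bookkeeping the constants. Getting $\log H\le c\log(d\log T)$ with the stated $45$ requires care with the $\log 27$, $\log m$, and floor terms, together with the case split on small $T$ (roughly $T< H^2$) where the bound is vacuous because it exceeds $T$. I would handle the small-$T$ regime first via $\swap_T\le T$, and then in the large-$T$ regime verify the numeric inequality $2\log H+4\le 45\log(d\log T)$ directly.
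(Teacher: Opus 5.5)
Your proposal is correct and follows essentially the same route as the paper: it plugs the per-interval bound $3H^{k-1}H^{2/3}d^3m^{3/2}\log d$ into the decomposition lemma, lets the $H^{k-1}$ factor cancel against the $\lceil T/H^k\rceil$ restarts, and tunes $H$ and $K$ so that the total is $(4+2\log H)\,T/\log T$, which is at most $45\log(d\log T)\,T/\log T$. The only differences are cosmetic. You use $\lceil T/H^k\rceil\le 2T/H^k$ where the paper uses $T/H^k+1$ together with a geometric sum. Your explicit small-$T$ case via $\swap_T\le T$ is slightly more careful than the paper, whose step $T/(\log_H T-1)\le 2T/\log_H T$ tacitly assumes $T\ge H^2$.
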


To provide anytime guarantees, typically required for online learning, we next show that the Doubling Trick \cite{auer2002adaptive, besson2018doublingTrick} can be applied to our algorithm.

\begin{lemma}[Anytime Guarantees]\label{lem:anytime_}
Assuming that, for given $T$, Algorithm \ref{alg:multiscale_algo} achieves
$$\swap_{T}({\phi}) \leq \mathcal{O}\left(\frac{T\log(d\log T)}{\log T} \right),$$ 
then it achieves the above bound with anytime guarantees, if implemented with the Doubling Trick.
\end{lemma}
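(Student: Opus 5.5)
We use the standard doubling trick: run Algorithm \ref{alg:multiscale_algo} in epochs $e=0,1,2,\dots$, where epoch $e$ has length $T_e=2^e$ (or $2^e T_0$ for a constant $T_0$ large enough that the hypothesis is meaningful). At the start of each epoch we restart the algorithm from scratch with parameters $H,K$ tuned to $T_e$, as in Theorem \ref{thm: swap_lazy_comband_combcp}. For an unknown horizon $T$, let $E=\lceil \log_2(T+1)\rceil$ be the number of epochs touched, the last possibly truncated. The swap regret is $\max_\phi$ of a sum of per-epoch terms. For any fixed $\phi$, the total $\swap_T(\phi)$ splits into the sum over epochs of that epoch's $\swap(\phi)$, and each of these is at most the epoch's swap regret with its own maximizing $\phi$. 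So $\swap_T\le \sum_{e<E}\swap_{T_e}$, with the truncated last epoch included. That last epoch has length at most $T_{E-1}$. Its regret is bounded either by the same bound at length $T_{E-1}$, or by noting that Lemma \ref{lem: swap_regret_} and Theorem \ref{thm:combcp_no_regret_} already handle truncation through $H'$. Failing both, it is bounded trivially by its own length, which only costs a constant factor after the summation below.

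The next step is to sum $\sum_{e=0}^{E-1} C\,T_e\log(d\log T_e)/\log T_e$. Write $f(x)=x\log(d\log x)/\log x$, which is increasing for $x$ beyond a constant. I would split the epochs at $e^\ast=E/2$. For $e\ge e^\ast$ we have $\log T_e\ge \tfrac12\log T$ and $\log(d\log T_e)\le\log(d\log T)$. Hence each such term is at most $2C\,T_e\log(d\log T)/\log T$, and the geometric sum $\sum T_e\le 2T$ gives $\mathcal{O}(T\log(d\log T)/\log T)$. For $e<e^\ast$ the total length is at most $2^{E/2}=\mathcal{O}(\sqrt{T})$, so these epochs contribute at most $\mathcal{O}(\sqrt T)$ even with the trivial bound $\swap\le T_e$. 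Since $\sqrt{T}=o(T/\log T)$, this is absorbed into the main term.

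The main obstacle is bookkeeping rather than anything deep. The hypothesis is only meaningful once $T$ is large relative to $d$ and $m$, because Theorem \ref{thm: swap_lazy_comband_combcp} needs $K=\lfloor\log_H T\rfloor\ge1$, and early epochs may fall below that threshold. The split at $E/2$ handles this: every epoch that is too short to have a good bound is charged the trivial bound, and their total length is $\mathcal{O}(\sqrt T)$. If $T$ itself is below the threshold, then $T\log(d\log T)/\log T$ is already of order $T$ up to constants, and the statement holds trivially.

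I also need to check that restarting does not interfere with the expectation. The adversary is oblivious, so each epoch faces a fixed reward subsequence. The per-epoch guarantee therefore applies conditionally on the previous epochs, and by linearity of expectation it applies unconditionally as well.
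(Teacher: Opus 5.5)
Your argument is essentially the paper's. You restart in epochs of length $2^e$ and split the epoch sum at about $\frac{1}{2}\log T$. For the late epochs you use $\log T_e\ge\frac{1}{2}\log T$, $\log(d\log T_e)\le\log(d\log T)$ and a geometric sum, and you show the early epochs are lower order. The differences are small:
\begin{itemize}
\item The paper applies the hypothesized bound to every epoch and controls the early half by a harmonic sum, getting $\mathcal{O}(\sqrt{T}\log\log T\cdot\log(d\log T))$. You charge the early epochs the trivial bound $T_e$, which is simpler and avoids the degenerate $\log T_e$ for tiny epochs.
\item You make explicit three points the paper leaves implicit: the interchange $\max_\phi\sum_e\le\sum_e\max_\phi$, the truncated final epoch, and independence across restarts under an oblivious adversary.
\end{itemize}

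One step must be corrected. Your last-resort fallback for the truncated final epoch is false: you bound it trivially by its length, claiming this ``only costs a constant factor''. That epoch can last up to $2^{E-1}\approx T/2$ rounds, so the trivial bound contributes $\Theta(T)$, which is not $\mathcal{O}(T\log(d\log T)/\log T)$.

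Drop the hedge and justify your first option directly. The instance tuned for $T_{E-1}$ chooses $M_1,\dots,M_{T'}$ without depending on rewards after round $T'$. So its swap regret over the $T'$ rounds actually played equals its full-horizon swap regret against the oblivious sequence padded with $R_t=0$ for the remaining rounds, since those rounds contribute zero for every $\phi$. That full-horizon swap regret is at most the hypothesized bound at horizon $T_{E-1}\le T$.

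Your threshold paragraph is also slightly off. Epochs with $e\ge E/2$ have length at least $\sqrt{T}$, which can still be shorter than $H$ when $T<H^2$. The concern is moot, though, for two reasons. First, the lemma assumes the bound at every horizon. Second, $T_e<H$ forces $\log T_e=\mathcal{O}(\log(d\log T_e))$ (using $m\le d$), so the bound is then $\Omega(T_e)$ and holds trivially.
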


By applying the lower bound of \cite{daskalakis2024lower} in combinatorial bandits (see also Appendix \ref{sec: lower_bound}), we obtain the following corollary, which states that in the regime of interest, that is $T < \exp(d)$, our swap regret upper bound is tight, in the sense that there exists no $T^p$-no-swap-regret scheme, where $p$ is a constant less than 1.

\begin{corollary}[Tightness]\label{cor: lower_bound}
For $T \leq \exp(\Omega(d^{1/14}))$ there exists a combinatorial bandit setting with action set $\mathcal{A} \subset \{0,1\}^d$ and an oblivious adversary running for $T$ rounds against whom any learner suffers swap regret at least $\Omega\left({T}/{\log^{6} T}\right)$.
\end{corollary}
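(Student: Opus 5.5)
The plan is to reduce the corollary to the lower bound of \cite{daskalakis2024lower} (their Theorem 4.1) by embedding their hard instance into a combinatorial bandit. Their construction is an online learning problem whose action set is a finite set of structured vectors of dimension $d$. The rewards are linear, and an oblivious adversary forces $\Omega(T/\log^{6}T)$ swap regret against any learner whenever $T \le \exp(\Omega(d^{c}))$ for a suitable constant $c$. They prove this even under full information, and a full-information lower bound transfers immediately to bandit feedback, because a bandit learner is a special case of a full-information learner that simply ignores the extra feedback.

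First, I will restate their theorem in the form needed. The learner's actions lie in a set $\mathcal{X}\subseteq\{0,1\}^{d}$, or can be rounded to one. Rewards are linear, $R_t\cdot x$, with values normalized in $[0,1]$. Swap regret is defined exactly as in \eqref{swap_regret}, with the expectation outside the maximum over $\phi$, which is the weaker benchmark and therefore the harder one to lower bound. I will check that their notion of swap regret is at least as weak as ours, so that a lower bound for theirs implies one for ours.

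Second, I will realize their action set as $\mathcal{A}\subset\{0,1\}^d$ with $\|M\|_1\le m$. Their hard instance uses vectors with a prescribed number of ones, which matches the regime $m\ge d^{12/13}$ noted in Table~\ref{table: results}. If their vectors live in dimension $d'$ and have sparsity $m'$, I will set $d=d'$ (padding with dummy coordinates if needed) and choose $m$ accordingly. I will then translate their horizon condition, which is exponential in a power of $d'$, into $T\le\exp(\Omega(d^{1/14}))$ using the polynomial relation between $d'$ and $d$ in their construction. Exponents like $13/14$ typically arise this way.

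Third, I will verify reward normalization. If their rewards satisfy $R_t\cdot M\in[-1,1]$ or a scaled range, an affine rescaling maps them into $[0,1]$. This changes swap regret only by a constant factor, because the swap regret differences are linear in $R_t$ and a constant additive shift $c\cdot\mathbf 1$ cancels when $\|M\|_1$ is the same for every action. If that equal-norm condition fails, I will instead add a dummy coordinate to equalize the norms.

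The main obstacle is this bookkeeping between their parameterization and ours. Concretely, that means confirming that the structured action set can be taken as binary and linear in a $d$-dimensional ground set, and tracking the exponents so that both the constraint $m\ge d^{12/13}$ and the horizon range $\exp(\Omega(d^{1/14}))$ come out as stated. I would first try taking their construction verbatim, with coordinates indexing the pairs used by their adversary, and read the exponents directly from their theorem statement.
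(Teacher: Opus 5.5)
Your overall strategy, importing the full-information lower bound of \cite{daskalakis2024lower} and noting that it transfers to bandit feedback, matches the paper's. However, your first step misstates what that theorem gives, so the one step that actually constitutes the proof is missing. Theorem 4.1 there is not a regret bound of the form $\Omega(T/\log^{6}T)$. It is a round-complexity tradeoff: for every $\epsilon>0$ and every $T \le \exp(c\min\{d^{1/14},\epsilon^{-1/6}\})$, there is an oblivious adversary against which every learner has average swap regret at least $\epsilon$. By assuming the $\Omega(T/\log^{6}T)$ form from the outset, your plan essentially assumes the corollary. What is needed is to turn this tradeoff into a bound as a function of $T$:
\begin{itemize}
\item Given $T \le \exp(c\,d^{1/14})$, set $\epsilon := (c/\ln T)^{6}$, so that $\exp(c\,\epsilon^{-1/6}) = T$.
\item Both branches of the minimum are then satisfied, so the theorem applies.
\item The swap regret is therefore at least $\epsilon T = c^{6}\,T/\ln^{6} T$.
\end{itemize}

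This also shows that your expectation about where the exponents come from is off. The $1/14$ is taken verbatim from the $d$-branch of the cited theorem, and the sixth power of the logarithm comes from the $\epsilon^{-1/6}$ branch. Neither arises from a polynomial relation between the dimensions of an embedding.

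Likewise, the bookkeeping you flag as the main obstacle (binarizing the action set, padding, renormalizing rewards, the regime $m \ge d^{12/13}$) is not where the work lies. The paper takes as given, citing \cite{daskalakis2024lower}, that a player's pure strategies in the extensive-form instance are binary vectors in $\{0,1\}^d$ with linear rewards. The corollary itself makes no claim about $m$.

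One small slip as well: in the paper's definition of swap regret, the maximum over $\phi$ sits outside the expectation, not the other way round. Your conclusion is nevertheless correct: this is the weaker notion, so a lower bound for it is the harder one to prove and implies the one for the stronger notion.
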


\subsection{Per-Iteration Complexity in Well-Studied Combinatorial Settings}\label{sec: applications}

We present some important combinatorial bandit settings where \textsc{Swap-ComBCP} is applicable (i.e., the $L_1$-norm of the action vectors equals a fixed value $m$) and efficiently implementable (i.e., we can efficiently perform decomposition, projection and approximate barycentric spanner calculation). 
The settings we consider are \textit{m-sets}, \textit{Spanning Trees and Graph Matroid Bandits (k-forests), Paths, Permutations} and \textit{Truncated Permutations}. 
For all the above settings, except for paths, efficient decomposition and projection algorithms are proposed in \cite{suehiro2012online}. 
For paths in DAGs, the path polytope has a succinct description which allows efficient decomposition and projection with interior point methods (see \cite{boyd2004convex}, page 545). 
A careful reader might observe that paths do not necessarily satisfy the condition  that $\|M\|_1$ can vary across $\mathcal{A}$.
Fortunately, it turns out that one can always transform the DAG, so that this condition is  satisfied (see \cite{gyorgy2007line}). 
Moreover, for all the above settings, we can efficiently compute an approximate barycentric spanner as there exists an efficient linear minimization oracle over $\mathcal{A}$ (see Proposition \ref{prop:barycentric_algo}).
More details are also provided in Appendix \ref{appendix: apps_combcp}.

Regarding the more general setting, where $\|M\|_1$ varies across actions $M \in \mathcal{A}$, we provide an alternative implementation for \textsc{Lazy-CombAlg}, namely \textsc{Lazy-ComBand}, which is a lazy version of the well-known \textsc{ComBand} algorithm \cite{cesa2012combinatorial}. Due to space constraints, we only include this algorithm in Appendix \ref{appendix: comband}. 
Both \textsc{Swap-ComBCP} and \textsc{Swap-ComBand} achieve the same asymptotic swap regret guarantees.
Although \textsc{Swap-ComBand} is applicable to the more general setting, the applications, where efficient implementations of it are known, are more limited compared to \textsc{Swap-ComBCP}. 
In particular, the main applications of \textsc{Swap-ComBand} are paths, as well as problems that can be modelled as paths, such as m-sets, both of which are also covered by \textsc{Swap-ComBCP}.  



\section{Conclusion}\label{sec: conclusion}

Our paper focused on the problem of no-swap-regret learning in combinatorial bandits, i.e., structured bandit settings where the number of actions is exponentially large (in the settings description). 
Importantly, we provided the first online algorithms for combinatorial bandits which achieve vanishing swap regret and per-iteration complexity that scale polylogarithmically in the number of actions. 
One interesting open question is whether we can obtain similar results in combinatorial bandits for realized swap regret with high probability guarantees. 



\section*{Acknowledgments}

Most of this work was done while all authors were visiting Archimedes Research Unit, Athens, Greece. Ioannis Panageas was supported by NSF grant CCF-2454115.  This work was also supported by the French National Research Agency (ANR) in the framework of the PEPR IA FOUNDRY project (ANR-23-PEIA-0003) and project MIS 5154714 of the National Recovery and Resilience Plan Greece 2.0 funded by the European Union under the NextGenerationEU Program.

\bibliographystyle{alpha}  
\bibliography{main}

\clearpage
\tableofcontents

\newpage
\appendix






\section{Further Related Work}

\paragraph{Other Strong Notions of Regret}

Another well-studied strong notion of regret is the internal regret, which was initially introduced in \cite{foster1999regret} in the context of calibrated forecasting. Internal regret is a notion of regret, weaker than swap and stronger than external regret, which measures the change in performance by substituting every occurrence of a given action $a$ by an alternative action $a'$.  Several algorithms minimizing internal regret have been developed, e.g., \cite{stoltz2005internal, blum2007externalMansour, cesa2003potential, sharma2024noAAAI} to name a few.
Moreover, Mohri and Yang \cite{mohri2014conditional, mohri2017online} have introduced alternative strong notions of  regret, namely {conditional swap regret} and transductive regret, which account for all possible modifications of a learner’s action sequence that depend on some fixed bounded history.

\paragraph{Further related work on adversarial combinatorial bandits.}

\cite{bubeck2012towards} proposed the use of John's ellipsoid method for optimal experiment design, achieving a regret bound of $\widetilde{\mathcal{O}}(\sqrt{mdT})$ for any finite action set, against a lower bound of $\widetilde{\mathcal{O}}(\sqrt{dT})$ \cite{audibert2014regret};
in general however, John's ellipsoid method requires time polynomial in $N$, so it is not always efficient.
An alternative \textendash\ and, in many cases, more efficient \textendash\ exploration method is via approximate barycentric spanners \cite{awerbuch2004adaptive,awerbuch2008spanners,dani2007price,bartlett2008high,2016_combexp_comband}, or the more sophisticated approximate volumetric spanners \cite{hazan2016volumetric}.

\allowdisplaybreaks

\section{Lower Bound}\label{sec: lower_bound}

In \cite{daskalakis2024lower}
a swap regret lower bound is derived for adversarial online learning in the context of extensive-form games (EFGs) with full information. The authors study EFGs from the perspective of a single player as an online learning setting and show that achieving average swap regret $\epsilon$ against an oblivious adversary in EFGs with $d$ decision nodes requires a number of rounds $T$ that is exponential either in $d$ or $\epsilon$. Moreover, as shown in \cite{daskalakis2024lower} the actions of a single player in EFGs can be written as binary vectors of dimensionality $d$ and the rewards are linear in the action vectors. Therefore, the above setting is captured by the combinatorial setting of our paper, and the above lower bound is applicable to combinatorial bandits. 
In particular, Theorem 4.1 in \cite{daskalakis2024lower} states that there exists an  action set $\mathcal{A} \subset \{0,1\}^d$, such that for any $\epsilon>0$ and $T \leq \exp(\Omega(\min\{d^{1/14}, \epsilon^{-1/6}\}))$, there exists an oblivious adversary running for $T$ rounds against whom no learner can achieve average swap regret less than $\epsilon$. Since the lower bound holds for full information feedback, it also carries over to the bandit case. 


In the following corollary we restate the result for the regime of interest, that is $T << \exp(d)$, in terms of the regret as a function of $T$.

\begin{corollary*}
For $T \leq \exp(\Omega(d^{1/14}))$ there exists a combinatorial bandit setting with action set $\mathcal{A} \subset \{0,1\}^d$ and an oblivious adversary running for $T$ rounds against whom any learner suffers swap regret at least $\Omega\left(\frac{T}{\log^{6} T}\right)$.
\end{corollary*}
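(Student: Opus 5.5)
The corollary follows from Theorem 4.1 of \cite{daskalakis2024lower} by a change of variables. I would first check that the theorem's setting embeds into ours. A single player's sequence-form strategies in an extensive-form game with $d$ decision nodes are binary vectors in $\{0,1\}^d$, and utilities are linear in these vectors. So the action set $\mathcal{A}\subset\{0,1\}^d$ they construct, with linear rewards $R_t\cdot M$, is a valid combinatorial instance. After an affine rescaling of the rewards so that $R_t\cdot M\in[0,1]$, which changes the swap regret only by a constant factor, it matches the setting of Section~\ref{sec: prelims}. Swap deviations $\phi:\mathcal{A}\to\mathcal{A}$ there are the same as our $\Phi$, and the adversary is oblivious. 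Finally, a full-information lower bound transfers to bandit feedback: any bandit learner is in particular a full-information learner that ignores part of its feedback, so it cannot do better.

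Next I would invert the quantitative statement. Theorem 4.1 says that for any $\epsilon>0$ and any $T\le \exp(c\min\{d^{1/14},\epsilon^{-1/6}\})$, some oblivious adversary forces average swap regret at least $\epsilon$, so total swap regret is at least $\epsilon T$. Given $T$ with $T\le \exp(c\, d^{1/14})$, I would choose $\epsilon=(c/\log T)^6$. Then $\epsilon^{-1/6}=\log T/c$, hence $\exp(c\,\epsilon^{-1/6})=T$. The $d$-part of the condition is our hypothesis, so $T$ lies in the admissible range and the lower bound applies. This gives $\swap_T\ge \epsilon T=\Omega(T/\log^6 T)$.

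The main obstacle is bookkeeping rather than a new idea. One must track that the constants in the two $\Omega(\cdot)$ exponents can be taken equal (or take the minimum of the two constants). One must also make sure the reward normalization of \cite{daskalakis2024lower}, for example utilities in $[-1,1]$ or scaled by the depth of the game, only costs constant factors. If their regret is normalized by a $\poly(d)$ factor instead, I would absorb it by shrinking the exponent $1/14$, or note that their construction already uses bounded per-round utilities. I would also address the table's caveat $m\ge d^{12/13}$: the maximal $\|M\|_1$ in their instance determines $m$, and this does not affect the statement as written.
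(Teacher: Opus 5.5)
Your proposal is correct and follows the same route as the paper. The paper cites Theorem 4.1 of the extensive-form-game lower bound, observes that sequence-form strategies give an action set $\mathcal{A}\subset\{0,1\}^d$ with linear rewards, and notes that a full-information lower bound carries over to bandit feedback. Your explicit choice $\epsilon=(c/\log T)^6$, which makes $\exp(c\,\epsilon^{-1/6})=T$ and gives swap regret at least $\epsilon T=\Omega(T/\log^6 T)$, is exactly the inversion the paper leaves implicit.
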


Therefore, in the regime of interest (that is, $T << \exp(d)$), our swap regret upper bound is tight, in the sense that there exists no $T^p$-no-swap-regret scheme, where $p$ is a constant less than 1.

\section{Proof of Theorem \ref{lem: swap_regret_}}\label{appendix: proof_tree_swap}
We first prove Proposition \ref{property_laziness} which describes the useful property of laziness in swap regret.

\textbf{Proposition 3.1} (Swap Regret is upper bounded by External Regret for time invariant policies):
    \textit{For any policy $p\in\Delta(\mathcal{A})$ that is fixed in an interval $t=1...T$, any sequence of rewards $R_t$ and any swap function $\phi: \mathcal{A}\rightarrow \mathcal{A} $ it holds that}
    $$\sum_{t=1}^T\sum_{M \in \mathcal{A}}p(M) R_t\cdot \phi(M)\leq \max_{M \in \mathcal{A}} \left\{ \sum_{t=1}^T R_t\cdot M \right\}$$

\begin{proof}
\begin{align*}
    \sum_{t=1}^T\sum_{M \in \mathcal{A}}p(M) R_t\cdot \phi(M) &= \sum_{t=1}^T R_t\cdot \sum_{M \in \mathcal{A}}p(M) \phi(M) \\
    &=\left(\sum_{t=1}^T R_t \right) \cdot \left(\sum_{M \in \mathcal{A}}p(M) \phi(M) \right)\\
    &= \left(\sum_{t=1}^T R_t \right)\cdot V, \text{\qquad for some $V\in Conv(\mathcal{A})$}\\
    &\leq \max_{V \in Conv(\mathcal{A})} \left\{ \left(\sum_{t=1}^T R_t \right)\cdot V \right\}\\
    &\leq \max_{M \in \mathcal{A}} \left\{ \sum_{t=1}^T R_t\cdot M \right\}\\
\end{align*}
In the last step we used the fact that linear functions achieve their maximum at extreme points of the feasible region. In the case of the convex hull, the extreme points are precisely the original set of vectors.
\end{proof}

\begin{figure}[t]
\centering
\includegraphics[scale=0.07]{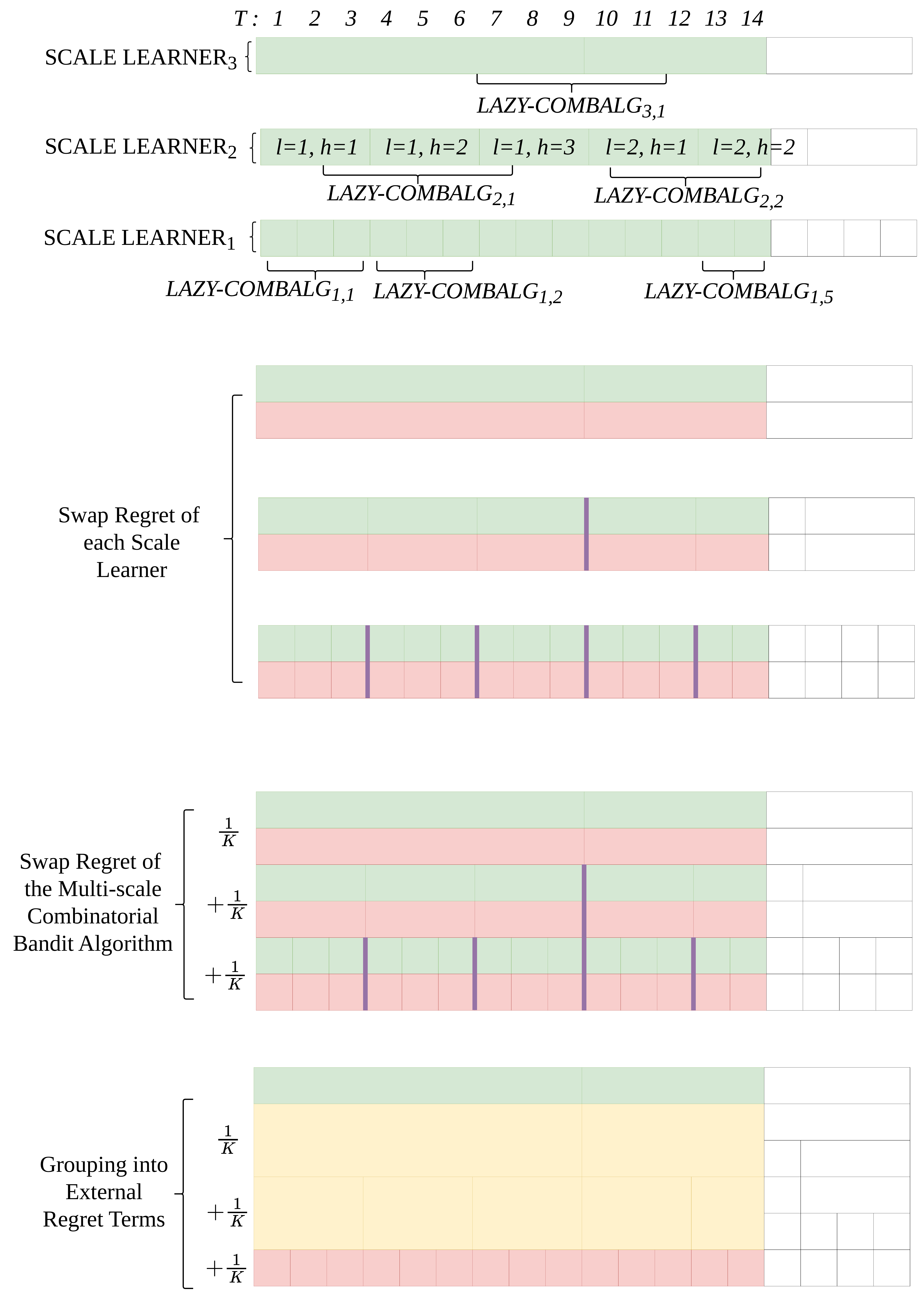}
\caption{Swap Regret Analysis for the
Multi-scale Combinatorial Bandit Algorithm for $T=14, K=3$ and $H=3$. Purple lines represent restarts of the ScaleLearners. Green terms correspond to expected cumulative rewards collected by the learners, while red terms correspond to expected cumulative rewards of a swap policy that is optimal in hindsight. In each interval, where a \textsc{Lazy-CombAlg}'s policy remains fixed, the optimal swap policy is playing a fixed pure action (see Proposition \ref{property_laziness}). Thus, red terms actually  correspond to expected cumulative rewards of fixed actions that are optimal in hindsight. Orange terms represent the external regrets of the lazy no-external regret algorithms.}\label{fig_tree_swap}
\end{figure}

\newpage

\paragraph{Proof of Theorem \ref{lem: swap_regret_}}

\begin{proof}

Due to space constraints, here we use the notation $\text{SR}$ to denote $\swap$.

\begin{align} 
\text{SR}_{T}({\phi})&=\mathbb{E}\left[\sum_{t=1}R_t \cdot (\phi(M_t) - M_t)\right]\\
&=\mathbb{E}\left[\sum_{t=1}\mathbb{E}_{\widehat{p}_t}\left[R_t \cdot (\phi(M_t) - M_t) \;\middle|\; \mathcal{F}_{t} \right] \right] \label{step:1} \\
&=\mathbb{E}\left[\frac{1}{K}\sum_{k=1}^{K}\sum_{t=1}^T\sum_{M \in \mathcal{A}}p_{k,t}(M) R_t\cdot\left(\phi(M) - M\right)\right] \label{step:2} \\
& \leq \mathbb{E}\left[\frac{1}{K}\sum_{k=1}^{K}\sum_{l=1}^{\floor*{\frac{T}{H^k}}}\sum_{h=1}^{H}\sum_{M \in \mathcal{A}}p_{k,l,h}(M) \sum_{\tau=1}^{H^{k-1}} R_{k,l,h,\tau}\cdot\left(\phi(M) - M\right)\right] +\nonumber \\
&+ \mathbb{E}\left[\frac{1}{K}\sum_{k=1}^{K}\sum_{h=1}^{H'_k}\sum_{M \in \mathcal{A}}p_{k,l,h}(M) \sum_{\tau=1}^{D_{h,k}} R_{k,l,h,\tau}\cdot\left(\phi(M) - M\right)\bigg\rvert_{l=\floor*{\frac{T}{H^k}}+1}\right]  \label{step:3} \\
& \leq \mathbb{E}\left[ \frac{1}{K}\sum_{k=1}^{K}\sum_{l=1}^{\floor*{\frac{T}{H^k}}}\sum_{h=1}^{H}\left[\max_{M \in \mathcal{A}}\left\{\sum_{\tau=1}^{H^{k-1}} R_{k,l,h,\tau}\cdot M\right\}  -\sum_{M \in \mathcal{A}}p_{k,l,h}(M) \sum_{\tau=1}^{H^{k-1}} R_{k,l,h,\tau}\cdot M \right] \right] + \nonumber \\
&+\mathbb{E}\left[ \frac{1}{K}\sum_{k=1}^{K}\sum_{h=1}^{H'_k}\left[\max_{M \in \mathcal{A}}\left\{\sum_{\tau=1}^{D_{h,k}} R_{k,l,h,\tau}\cdot M\right\}  -\sum_{M \in \mathcal{A}}p_{k,l,h}(M) \sum_{\tau=1}^{D_{h,k}} R_{k,l,h,\tau}\cdot M \right]\bigg\rvert_{l=\floor*{\frac{T}{H^k}}+1} \right]\label{step:4} \\
& = \mathbb{E}\left[ \frac{1}{K}\sum_{k=1}^{K}\sum_{l=1}^{\floor*{\frac{T}{H^k}}}\sum_{h=1}^{H}\left[\max_{M \in \mathcal{A}}\left\{X_{k,l,h}\cdot M \right\}  -\sum_{M \in \mathcal{A}}p_{k,l,h}(M) X_{k,l,h}\cdot M \right] \right] +\nonumber \\
&+\mathbb{E}\left[ \frac{1}{K}\sum_{k=1}^{K}\sum_{h=1}^{H'_k}\left[\max_{M \in \mathcal{A}}\left\{X_{k,l,h}\cdot M\right\}  -\sum_{M \in \mathcal{A}}p_{k,l,h}(M) X_{k,l,h}\cdot M \right]\bigg\rvert_{l=\floor*{\frac{T}{H^k}}+1} \right]\label{step:5} \\
& = \mathbb{E}\left[\frac{1}{K}\sum_{k=2}^{K}\sum_{l=1}^{\floor*{\frac{T}{H^k}}}\sum_{h=1}^{H}\max_{M \in \mathcal{A}}\left\{X_{k,l,h}\cdot M \right\}  + \frac{1}{K}\sum_{l=1}^{\floor*{\frac{T}{H}}}\sum_{h=1}^{H}\max_{M \in \mathcal{A}}\left\{X_{1,l,h}\cdot M \right\} \right]  +\nonumber  \\ 
&+\mathbb{E}\left[ \frac{1}{K}\sum_{k=2}^{K}\sum_{h=1}^{H'_k}\left[\max_{M \in \mathcal{A}}\left\{X_{k,l,h}\cdot M\right\} \right]\bigg\rvert_{l=\floor*{\frac{T}{H^k}}+1} \right]
+\mathbb{E}\left[ \frac{1}{K}\sum_{h=1}^{H'_1}\left[\max_{M \in \mathcal{A}}\left\{X_{1,l,h}\cdot M\right\} \right]\bigg\rvert_{l=\floor*{\frac{T}{H}}+1} \right] -\nonumber \\
&- \mathbb{E}\left[\frac{1}{K}\sum_{k=1}^{K-1}\sum_{l=1}^{\floor*{\frac{T}{H^k}}}\sum_{h=1}^{H}\sum_{M \in \mathcal{A}}p_{k,l,h}(M) X_{k,l,h}\cdot M+\frac{1}{K}\sum_{l=1}^{\floor*{\frac{T}{H^K}}}\sum_{h=1}^{H}\sum_{M \in \mathcal{A}}p_{K,l,h}(M) X_{K,l,h}\cdot M \right] -\nonumber \\
&-\mathbb{E}\left[ \frac{1}{K}\sum_{k=1}^{K-1}\sum_{h=1}^{H'_k}\left[\sum_{M \in \mathcal{A}}p_{k,l,h}(M) X_{k,l,h}\cdot M \right]\bigg\rvert_{l=\floor*{\frac{T}{H^k}}+1} \right]
-\mathbb{E}\left[ \frac{1}{K}\sum_{h=1}^{H'_K}\left[\sum_{M \in \mathcal{A}}p_{K,l,h}(M) X_{K,l,h}\cdot M \right]\bigg\rvert_{l=\floor*{\frac{T}{H^K}}+1} \right]\label{step:6} \\
& \leq \mathbb{E}\left[\frac{1}{K}\sum_{k=2}^{K}\sum_{l=1}^{\floor*{\frac{T}{H^k}}}\sum_{h=1}^{H}\max_{M \in \mathcal{A}}\left\{X_{k,l,h}\cdot M \right\} \right] 
+\mathbb{E}\left[ \frac{1}{K}\sum_{k=2}^{K}\sum_{h=1}^{H'_k}\left[\max_{M \in \mathcal{A}}\left\{X_{k,l,h}\cdot M\right\} \right]\bigg\rvert_{l=\floor*{\frac{T}{H^k}}+1} \right]+\frac{T}{K} -\nonumber \\
&- \mathbb{E}\left[\frac{1}{K}\sum_{k=1}^{K-1}\sum_{l=1}^{\floor*{\frac{T}{H^k}}}\sum_{h=1}^{H}\sum_{M \in \mathcal{A}}p_{k,l,h}(M) X_{k,l,h}\cdot M\right] -\mathbb{E}\left[ \frac{1}{K}\sum_{k=1}^{K-1}\sum_{h=1}^{H'_k}\left[\sum_{M \in \mathcal{A}}p_{k,l,h}(M) X_{k,l,h}\cdot M \right]\bigg\rvert_{l=\floor*{\frac{T}{H^k}}+1} \right]\label{step:7} \\
&= \frac{1}{K}\sum_{k=1}^{K-1}\sum_{l=1}^{\ceil*{\frac{T}{H^{k}}}}\reg(\textsc{Lazy-CombAlg}_{k,l})+\frac{T}{K} \label{basic_step}
\end{align}

In \ref{step:1} we use the tower property, and in \ref{step:2} we use the definition of the conditional expectation and the definition of the master's distribution $\widehat{p}_t$. When we write $R_{k,l,h,\tau}$ we mean $R_t$ at $t=(l-1)H^k+(h-1)H^{k-1}+\tau$.

One crucial part is step \ref{step:3}, where for each \textsc{ScaleLearner}$_k$ we split $T$ into two parts; the first part contains the segments in which the \textsc{Lazy-CombAlg} instance fully runs the horizon, and one final part in which the \textsc{Lazy-CombAlg} instance may not fully run the horizon, due to the fact that $H^{K}$ may not perfectly divide $T$. In particular, we consider the following: (a) in the first part, for each \textsc{ScaleLearner}$_k$ we have $\floor*{\frac{T}{H^k}}$ segments of size $H \cdot H^{k-1}$ days, where the \textsc{Lazy-CombAlg} instance fully runs a horizon of $H$ meta-days, (b) in the second part, for each learner, there remain $H'_k$ meta-days, where $0 \leq H'_k\leq H$ (that is at most $H^{k}$ days), where the \textsc{Lazy-CombAlg} instance may terminate its run before a horizon of $H$ meta-days is completed. Moreover, in this second part the last meta day of each learner may last less than $H^{k}$ days and we denote this duration by $D_{h,k}$. 

In step \ref{step:4} we use a key property of laziness that allows us to upper bound the performance of the reference swap function $\phi$ by the performance of the best fixed action in an interval where the learner has fixed policy, as we show n Proposition \ref{property_laziness}.

In step \ref{step:5} we define  $$X_{k,l,h}:= \left\{
\begin{array}{ll}
      \sum_{\tau=1}^{H^{k-1}} R_{k,l,h,\tau} & 1\leq k\leq K, \quad  1\leq l\leq\floor*{\frac{T}{H^k}}, \quad 1\leq h \leq H\\
      \sum_{\tau=1}^{D_{h,k}} R_{k,l,h,\tau} & 1\leq k\leq K, \quad  l=\floor*{\frac{T}{H^k}}+1, \quad 1\leq h \leq H'_k\\
\end{array} 
\right. $$

In steps \ref{step:6}, \ref{step:7} and \ref{basic_step} we group the terms corresponding to the cumulative reward of the optimal fixed action in each meta-day and the terms corresponding to the expected cumulative reward of the lazy learners so that the external regrets of the lazy learners are formed. For the swap regret upper bound we ignore the expected cumulative reward of the layer $k=K$, since its contribution only decreases the upper bound. Moreover, we isolate the cumulative reward of the optimal fixed actions in the meta-days at layer $k=1$ (these meta-days are actually days since they have duration $H^{k-1}=1$). These terms can not be paired into any external regret term and they increase the swap regret upper bound, but we can bound their total contribution by $\frac{T}{K}$, which is vanishing on $K$. This analysis can be visualized in Figure \ref{fig_tree_swap}.



\end{proof}

\medskip


\section{Anytime Guarantees of the Multi-Scale Algorithm}

\begin{lemma}[Lemma \ref{lem:anytime_} restated]
Assuming that Algorithm \ref{alg:multiscale_algo} achieves swap regret
$$\swap_{T}({\phi}) \leq \mathcal{O}\left(\frac{T\log(d\log T)}{\log T} \right),$$ 
Then, Algorithm \ref{alg:multiscale_algo} achieves the above swap regret with anytime guarantees, if it is implemented with the Doubling Trick.
\end{lemma}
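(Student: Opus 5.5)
The plan is the standard doubling-trick argument, adapted to swap regret. The key difference from external regret is that swap regret over the whole horizon must be controlled by the sum of swap regrets over the epochs.

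\textbf{Setup.} Run Algorithm \ref{alg:multiscale_algo} in epochs $i=0,1,2,\dots$. Epoch $i$ has length $T_i=2^i T_0$ for a constant $T_0$ large enough that $\log T_0\geq 1$ and the assumed bound applies. In each epoch, restart the algorithm from scratch with horizon parameter $T_i$, which fixes $H$ and $K$ as in Theorem \ref{thm: swap_lazy_comband_combcp}. For an unknown horizon $T$, let $n$ be the index of the last (possibly incomplete) epoch, so that $\sum_{i<n}T_i<T$ and hence $2^{n}T_0\le 2T$.

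\textbf{Step 1 (subadditivity).} Fix any $\phi\in\Phi$. The rewards come from an oblivious adversary, so the expected swap regret splits over epochs by linearity of expectation:
$$\swap_T(\phi)=\sum_{i=0}^{n}\mathbb{E}\Big[\sum_{t\in\text{epoch }i}R_t\cdot(\phi(M_t)-M_t)\Big]\le \sum_{i=0}^n \max_{\phi_i\in\Phi}\swap^{(i)}(\phi_i).$$
Taking the maximum over $\phi$ on the left still gives a bound by the sum of per-epoch swap regrets. Each epoch is a fresh run facing a fixed reward sequence, so the assumed bound applies to it. The last, truncated epoch has actual length at most $T_n$. The algorithm handles truncation through $H'$ (Step 3 of Algorithm \ref{alg:lazy_combcp}), and the bound of Lemma \ref{lem: swap_regret_} and Theorem \ref{thm:combcp_no_regret_} only improves on shorter horizons. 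So the last epoch's swap regret is at most $C\,T_n\log(d\log T_n)/\log T_n$.

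\textbf{Step 2 (summation).} Let $f(x)=x\log(d\log x)/\log x$. Then
$$\sum_{i=0}^n f(T_i)\le \frac{\log(d\log T_n)}{\log T_{\lceil n/2\rceil}}\sum_{i\ge \lceil n/2\rceil}T_i+\sum_{i<\lceil n/2\rceil}T_i.$$
The first term is handled as follows. For $i\ge n/2$, $\log T_i\ge \tfrac12\log T_n$. Also $\log(d\log T_i)\le\log(d\log T_n)$, and $\sum_i T_i\le 2T_n\le 4T$. Together these give a contribution of $O\big(T\log(d\log T)/\log T\big)$. The second term is at most $2^{n/2+1}T_0=O(\sqrt{T})$, which is dominated by $T/\log T$.

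\textbf{Main obstacle.} The main difficulty is the lower-order epochs, because $f(x)/x$ decays only logarithmically. A naive geometric-sum bound would therefore lose nothing but also give no help. The split at $n/2$ above resolves this. If the algorithm is only guaranteed the bound for $T\ge T_{\min}(d,m)$, for example because $K=\lfloor\log_H T\rfloor\ge 1$ requires $T\ge H$, then the early epochs with $T_i<T_{\min}$ contribute at most $2T_{\min}$ in total. This is a constant independent of $T$, so it is absorbed into the $\mathcal{O}$ in the statement.
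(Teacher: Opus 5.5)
Your proposal is correct and follows essentially the same route as the paper: restart via the doubling trick, sum the per-epoch bounds, and split the epochs at half their number, so that the early ones contribute only $O(\sqrt{T})$ up to lower-order factors while the late ones satisfy $\log T_i \geq \tfrac12 \log T_n$ and give $O(T\log(d\log T)/\log T)$. You also make explicit two points the paper leaves tacit: swap regret, with the max outside the expectation, is subadditive over epochs against an oblivious adversary, and the truncated final epoch is covered. One correction: in your display the early-epoch term should come from the trivial per-epoch bound $T_i$ on swap regret, not from $f(T_i)\le T_i$, which fails when $T_i$ is small relative to $d$.
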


\begin{proof}
    
We apply the Doubling Trick \cite{auer2002adaptive, besson2018doublingTrick} to obtain anytime guarantees for Algorithm \ref{alg:multiscale_algo}. For convenience, the analysis below ignores the terms that do not depend on $T$. 

First, we split the sum of the regrets (induced by the Doubling Trick) into two parts:
\[
S = \sum_{i=1}^{\log T} \frac{2^i \log(di)}{i} \leq \log(d\log T) \sum_{i=1}^{\log T} \frac{2^i}{i} = \log(d\log T) \underbrace{\sum_{i=1}^{\lfloor \log T / 2 \rfloor} \frac{2^i}{i}}_{\text{Part 1}} + \log(d\log T) \underbrace{\sum_{i=\lfloor \log T / 2 \rfloor + 1}^{\log T} \frac{2^i}{i}}_{\text{Part 2}}.
\]

$\bullet \quad $ For \( i \leq \log T / 2 \), note that \( 2^i \leq \sqrt{T} \), since \( i \leq \log T / 2 \) implies \( 2^i = \sqrt{2^{\log T}} = \sqrt{T} \). Therefore:
\[
\sum_{i=1}^{\lfloor \log T / 2 \rfloor} \frac{2^i}{i} \leq \sqrt{T} \cdot \sum_{i=1}^{\lfloor \log T / 2 \rfloor} \frac{1}{i}.
\]

The summation \( \sum_{i=1}^n \frac{1}{i} \) is the \( n \)-th harmonic number, \( H_n \), which satisfies:
\[
H_n \leq \ln n + 1.
\]

Substituting \( n = \lfloor \log T / 2 \rfloor \), we get:
\[
\sum_{i=1}^{\lfloor \log T / 2 \rfloor} \frac{2^i}{i} \leq \sqrt{T} \cdot \left( \ln (\log T / 2) + 1 \right).
\]

Thus, Part 1 is:
\[
\mathcal{O}\left(\sqrt{T} \cdot \log \log T\right),
\]
which is asymptotically smaller than \( \mathcal{O}\left(\frac{T}{\log T}\right) \).

\medskip

$\bullet \quad $ For \( i > \log T / 2 \), note that \( i \geq \log T / 2 \) implies \( \frac{1}{i} \leq \frac{2}{\log T} \). Therefore:
\[
\sum_{i=\lfloor \log T / 2 \rfloor + 1}^{\log T} \frac{2^i}{i} \leq \frac{2}{\log T} \cdot \sum_{i=\lfloor \log T / 2 \rfloor + 1}^{\log T} 2^i.
\]

The summation \( \sum_{i=\lfloor \log T / 2 \rfloor + 1}^{\log T} 2^i \) is a geometric sequence, which simplifies to:
\[
\sum_{i=\lfloor \log T / 2 \rfloor + 1}^{\log T} 2^i = 2^{\log T + 1} - 2^{\lfloor \log T / 2 \rfloor + 1}.
\]

Using \( 2^{\log T} = T \), this becomes:
\[
\sum_{i=\lfloor \log T / 2 \rfloor + 1}^{\log T} 2^i = 2T - 2 \cdot \sqrt{T} \leq 2T.
\]

Substituting this back, we have:
\[
\sum_{i=\lfloor \log T / 2 \rfloor + 1}^{\log T} \frac{2^i}{i} \leq \frac{2}{\log T} \cdot 2T = \frac{4T}{\log T}.
\]

Combining the above, we have that the total summation is the sum of Part 1 and Part 2. Since Part 1 is \( \mathcal{O}\left(\sqrt{T} \cdot \log \log T\right) \) and Part 2 is \( \mathcal{O}\left(\frac{T}{\log T}\right) \), the dominant term is from Part 2. Therefore:
\[
S = \sum_{i=1}^{\log T} \frac{2^i}{i} = \mathcal{O}\left(\frac{T}{\log T}\right).
\]

\end{proof}

\newpage

\section{Proof of Lemma \ref{lem:key_}}

\begin{lemma}[Lemma \ref{lem:key_} restated]\label{lem:key}
    For all $k \in \{1,...,K-1\}$, $l \in \left[\ceil*{\frac{T}{H^{k}}}\right]$ and $h \in [H]$, it holds that,

    $$ \mathbb{E}_{k,l,h}\left[ M_{k,l,h,\tau}^T \Sigma_{k,l,h,\tau}^{+2} M_{k,l,h,\tau} \right] \leq d \lambda^{-1}_{\text{min}}(\Sigma_{k,l,h,\tau}), $$

    where $\lambda^{-1}_{\text{min}}(\Sigma_{k,l,h,\tau})$ is the minimum eigenvalue of the co-occurrence matrix $\Sigma_{k,l,h,\tau}$ under the law of $\hat{p}_{k,l,h,\tau}$.
\end{lemma}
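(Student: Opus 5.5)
The plan is to use the trace trick, relying on the fact that $M_\tau$ is drawn from exactly the law $\hat{p}_{k,l,h,\tau}$ that defines $\Sigma_\tau$. Conditionally on $\mathcal{F}_t$, the matrix $\Sigma_\tau = \mathbb{E}_{\hat p_{k,l,h,\tau}}[MM^\top]$ is fixed, and $M_\tau$ has law $\hat p_{k,l,h,\tau}$. So first I would write
\[
\mathbb{E}_{k,l,h}\left[M_\tau^\top \Sigma_\tau^{+2} M_\tau\right] = \mathbb{E}_{k,l,h}\left[\operatorname{tr}\left(\Sigma_\tau^{+2} M_\tau M_\tau^\top\right)\right] = \operatorname{tr}\left(\Sigma_\tau^{+2}\,\mathbb{E}_{k,l,h}[M_\tau M_\tau^\top]\right) = \operatorname{tr}\left(\Sigma_\tau^{+2}\Sigma_\tau\right).
\]
Here I use linearity of trace and expectation, together with the fact that $\Sigma_\tau$ is $\mathcal{F}_t$-measurable, since the master policy at day $\tau$ is determined before $M_\tau$ is sampled.

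Second, I would diagonalize the symmetric PSD matrix $\Sigma_\tau = U\Lambda U^\top$, with nonzero eigenvalues $\lambda_1,\dots,\lambda_r$ and $r\le d$. Then $\Sigma_\tau^{+} = U\Lambda^{+}U^\top$ and $\Sigma_\tau^{+2}\Sigma_\tau = U(\Lambda^{+})^2\Lambda U^\top$, whose trace is $\sum_{i=1}^r \lambda_i^{-1}$. Each term is at most $\lambda_{\min}^{-1}(\Sigma_\tau)$, the reciprocal of the smallest nonzero eigenvalue, so the sum is at most $r\,\lambda_{\min}^{-1}(\Sigma_\tau)\le d\,\lambda_{\min}^{-1}(\Sigma_\tau)$, as claimed.

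The only delicate point is the conditioning. The expectation $\mathbb{E}_{k,l,h}$ is conditional on the filtration at the start of the meta-day, whereas $\Sigma_\tau$ depends on $\hat p_{k,l,h,\tau}$ at day $\tau$, which may vary within the meta-day because other ScaleLearners with smaller scale update in between. I would handle this by first conditioning on $\mathcal{F}$ at day $\tau$ and applying the identity above there. This gives $\operatorname{tr}(\Sigma_\tau^{+})\le d\lambda_{\min}^{-1}(\Sigma_\tau)$ pointwise. Taking the outer conditional expectation via the tower property then preserves the bound. If one wants a deterministic right-hand side, one can note that every $\Sigma_\tau$ dominates $\frac{\gamma}{K}$ times the exploration covariance, which is where Lemma \ref{lem: 2016_lem_} enters. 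That step, however, belongs to Lemma \ref{lem:combcp_bounded_variance}, not to this statement.
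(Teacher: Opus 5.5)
Your proposal is correct and is essentially the paper's argument. The paper also conditions on day $\tau$ via the tower property and computes $\mathbb{E}[M_\tau^\top\Sigma_\tau^{+2}M_\tau]=\sum_{i\le r}\lambda_{\tau,i}^{-1}$; it does so by expanding $M^\top\Sigma_\tau^{+2}M=\sum_{i}\lambda_{\tau,i}^{-2}(u_{\tau,i}\cdot M)^2$ and using $\lambda_{\tau,i}=\sum_M\widehat{p}_{k,l,h,\tau}(M)(u_{\tau,i}\cdot M)^2$, which is just your identity $\mathrm{tr}(\Sigma_\tau^{+2}\Sigma_\tau)=\mathrm{tr}(\Sigma_\tau^{+})$ written in the eigenbasis (your explicit note that the right-hand side is random given the meta-day filtration is a point the paper glosses over).
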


\begin{proof}

Let $\lambda_i(\Sigma_{k,l,h,\tau})$ be the eigenvalue of $\Sigma_{k,l,h,\tau}$ corresponding to the eigenvector $u_i(\Sigma_{k,l,h,\tau})$. For brevity, we abuse notation and instead simply use $\lambda_{\tau,i}$ and $u_{\tau,i}$. Moreover, let $\lambda_{\tau,1}, ..., \lambda_{\tau,r}$ be the $r$ nonzero eigenvalues with corresponding eigenvectors $u_{\tau,1}, ..., u_{\tau,r}$. Based on the above, we have the following:

\begin{align}
\mathbb{E}_{k,l,h}\left[ M_{k,l,h,\tau}^T \Sigma_{k,l,h,\tau}^{+2} M_{k,l,h,\tau} \right] 
& = \mathbb{E}_{k,l,h}\left[ \mathbb{E}_{k,l,h,\tau}\left[ M_{k,l,h,\tau}^T \Sigma_{k,l,h,\tau}^{+2} M_{k,l,h,\tau} \right] \right] \label{variance_8} \\
& = \mathbb{E}_{k,l,h}\left[ \sum_{M\in\mathcal{A}} \widehat{p}_{k,l,h,\tau}(M) M^T \Sigma_{k,l,h,\tau}^{+2} M \right] \\
& \leq \mathbb{E}_{k,l,h}\left[ \sum_{M\in\mathcal{A}} \widehat{p}_{k,l,h,\tau}(M) \sum_{i \in [r]} \lambda_{\tau, i}^{-2} \cdot (u_{\tau,i} \cdot M)^2 \right] \label{variance_9} \\
& = \mathbb{E}_{k,l,h}\left[ \sum_{i \in [r]} \lambda_{\tau, i}^{-2} \sum_{M\in\mathcal{A}} \widehat{p}_{k,l,h,\tau}(M) (u_{\tau,i} \cdot M)^2 \right] \\
& = \mathbb{E}_{k,l,h}\left[ \sum_{i \in [r]} \lambda_{\tau, i}^{-1} \right] \label{variance_10} \\ 
& \leq d \lambda^{-1}_{\text{min}}(\Sigma_{k,l,h,\tau}) \label{variance_9.5}
\end{align}

where in \ref{variance_8} we used the tower property, in \ref{variance_9} we used: (a) the inequality $x^T A x \leq \sum_{i \in [r]} \lambda_i (u_i \cdot x)$ for any vertex $x \in \mathbb{R}^d$ and any symmetric positive semi-definite matrix $A \in \mathbb{R}^{d\times d}$, and (b) the fact that the eigenvalues of the squared inverse matrix $A^{+2}$ equal the squared inverse of the eigenvalues of $A$.

\begin{align}
    \lambda_{\tau,i} = u_i^T \Sigma_{k,l,h,\tau} u_i = \sum_{M \in \mathcal{A}} \widehat{p}_{k,l,h,\tau}(M) (u_i \cdot M)^2. \label{eq:eigenvalue_prop}
\end{align}

Moreover, in \ref{variance_9.5} we used the fact that $r \leq d$.  
\end{proof}

\newpage

\section{\textsc{Swap-ComBCP}}

\subsection{Barycentric Spanners}\label{sec: barycentric}

This section introduces the notion of barycentric spanners \cite{awerbuch2004adaptive}. In the Algorithm \ref{alg:lazy_combcp}, we leverage barycentric spanners to ensure adequate exploration of each coordinate $i \in [d]$ that allows us to guarantee low variance of the reward estimators.

\begin{definition}[$C$-approximate barycentric spanner]
A subset of vectors $S = \left\{b_1, \ldots, b_d\right\} \subseteq \mathcal{A}$ is said to be $C$-approximate barycentric spanner of $\mathcal{A}$, with $C > 1$, if, for all $M \in \mathcal{A}$, there exists $a \in \mathbb{R}^d$ such that
$$ M= Ba \quad \text { and } \quad \|a\|_{\infty} \leq C.
$$
where $B$ is a matrix whose columns are the barycentric spanners $\left\{b_1, \ldots, b_d\right\}$.
\end{definition} 

The following proposition ensures that, if specific conditions hold, there exists an efficient algorithm for computing a $C$-approximate barycentric spanner.

\begin{proposition}[\cite{awerbuch2008spanners}, Proposition 2.5]\label{prop:barycentric_algo}
Suppose $\mathcal{A} \subseteq \mathbb{R}^d$ is a compact set not contained in any proper linear subspace. Given an oracle for optimizing linear functions over $\mathcal{A}$, for any $C>1$ there exists an algorithm that computes a $C$-approximate barycentric spanner for $\mathcal{A}$ in polynomial time, using $\widetilde{\mathcal{O}}\left(d^2\right)$ calls to the optimization oracle.
\end{proposition}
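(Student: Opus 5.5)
The plan is the classical determinant-maximization argument of Awerbuch and Kleinberg. The key observation is that for a matrix $B=[b_1,\dots,b_d]$ with columns in $\mathcal{A}$, the map $x\mapsto \det(B_{i\leftarrow x})$ (column $i$ replaced by $x$) is a linear functional of $x$. So maximizing $|\det(B_{i\leftarrow x})|$ over $x\in\mathcal{A}$ costs two calls to the linear optimization oracle, one for $\ell$ and one for $-\ell$. Moreover, by Cramer's rule, if $B$ is nonsingular and $x=Ba$, then $a_i=\det(B_{i\leftarrow x})/\det B$. Hence $B$ is a $C$-approximate barycentric spanner exactly when no single-column swap increases $|\det B|$ by a factor larger than $C$. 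The algorithm is local search on $|\det B|$ with multiplicative improvement steps.

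\emph{Phase 1 (initial basis).} Start from $B=I$. For $i=1,\dots,d$, replace column $i$ by an element $x\in\mathcal{A}$ maximizing $|\det(B_{i\leftarrow x})|$. Since $\mathcal{A}$ is not contained in any proper linear subspace, the functional $x\mapsto\det(B_{i\leftarrow x})$ is a nonzero linear form (the remaining columns are independent by induction). It therefore cannot vanish on all of $\mathcal{A}$, so the maximum is positive and $B$ stays nonsingular. After $d$ steps all columns lie in $\mathcal{A}$, using $2d$ oracle calls.

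\emph{Phase 2 (improvement).} Repeat the following: for each $i$, use two oracle calls to find $x$ maximizing $|\det(B_{i\leftarrow x})|$. If some $i$ gives $|\det(B_{i\leftarrow x})|>C|\det B|$, perform that swap; otherwise stop. A full scan costs $2d$ calls. At termination, Cramer's rule gives $|a_i|\le C$ for every $x\in\mathcal{A}$, which is the spanner property. Compactness of $\mathcal{A}$ guarantees that the maximizers exist and that $D^*=\max\{|\det[x_1,\dots,x_d]|: x_j\in\mathcal{A}\}$ is finite and positive.

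\emph{Main obstacle: bounding the number of swaps.} Each swap multiplies $|\det B|$ by more than $C$, and $|\det B|\le D^*$ always. So the number of swaps is at most $\log_C(D^*/|\det B_0|)$, where $B_0$ is the output of Phase 1. I need to show $|\det B_0|\ge D^*/\kappa$ with $\log\kappa=O(d\log d)$, for instance $\kappa=d^{O(d)}$. Combined with $2d$ calls per scan, this gives $O(d\cdot d\log d/\log C)=\widetilde{\mathcal{O}}(d^2)$ oracle calls and polynomial arithmetic.

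To get this bound I would fix an optimal tuple $x_1^*,\dots,x_d^*$ achieving $D^*$ and compare it with the greedy choices column by column. At step $i$ of Phase 1, the greedy element dominates $x_i^*$ for the current linear form. I would then expand $\det[x_1^*,\dots,x_d^*]$ in terms of the greedy columns, writing each $x_j^*$ in the current basis with Cramer coefficients controlled by the greedy maximality. The goal is to lose at most a factor polynomial in $d$ per column, such as $d$ or $d!$-type combinatorial factors overall. This step needs the most care. If the direct comparison turns out to be loose, the fallback is to first run Phase 2 with a constant such as $C=2$ and only then refine to the target $C$. Either way, the final count only needs $\log(D^*/|\det B_0|)=O(d\log d)$, which is what yields the $\widetilde{\mathcal{O}}(d^2)$ bound.
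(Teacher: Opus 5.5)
The paper gives no proof of this statement; it is quoted from Awerbuch--Kleinberg, and their algorithm is exactly your two-phase local search on $|\det B|$. Three parts of your proposal are correct:
\begin{itemize}
\item the Phase~1 argument (the form $x\mapsto\det(B_{i\leftarrow x})$ is nonzero, so it cannot vanish on a spanning $\mathcal{A}$);
\item the Cramer's-rule termination criterion;
\item the count of $2d$ oracle calls per scan.
\end{itemize}

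The gap is the step you flag yourself. The $\widetilde{\mathcal{O}}(d^2)$ count rests entirely on $D^*\le d^{O(d)}|\det B_0|$, and you do not prove it. Your fallback is circular: running Phase~2 first with $C=2$ still costs $\log_2(D^*/|\det B_0|)$ swaps starting from $B_0$, which is the very quantity you need to bound.

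The column-by-column plan, as phrased, is also risky. Greedy maximality at step $i$ bounds the $i$-th Cramer coefficient of $x\in\mathcal{A}$ with respect to the intermediate basis $[x_1,\dots,x_i,e_{i+1},\dots,e_d]$, not with respect to $B_0$. The two coefficient vectors are related by a unit upper-triangular matrix with entries in $[-1,1]$, and the inverse of such a matrix can have entries of order $2^{d}$. So bounding the $B_0$-coefficients entrywise and then applying Hadamard would give only $\log\kappa=O(d^2)$, i.e.\ $O(d^3)$ oracle calls.

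The missing idea is to compare determinants through the greedy functionals rather than through coefficients. Let $\ell_i(x)=\det(x_1,\dots,x_{i-1},x,e_{i+1},\dots,e_d)$ be the linear form maximized at step $i$ of Phase~1, and let $L$ be the matrix with rows $\ell_1,\dots,\ell_d$.
\begin{itemize}
\item Since $\ell_i(x_j)=0$ for $j<i$ (repeated column), $LB_0$ is upper triangular with diagonal entries $\ell_i(x_i)$. Hence $|\det L|\,|\det B_0|=\prod_i|\ell_i(x_i)|>0$.
\item For any $Y=[y_1,\dots,y_d]$ with columns in $\mathcal{A}$, greedy maximality gives $|(LY)_{ij}|=|\ell_i(y_j)|\le|\ell_i(x_i)|$.
\end{itemize}
Hadamard's inequality then gives
\[
|\det L|\,|\det Y|=|\det(LY)|\le \prod_{i=1}^d \sqrt{d}\,|\ell_i(x_i)| = d^{d/2}\,|\det L|\,|\det B_0| .
\]
Hence $D^*\le d^{d/2}|\det B_0|$. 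Phase~2 therefore performs at most $\frac d2\log_C d$ swaps, and the total is $O(d^2\log_C d)=\widetilde{\mathcal{O}}(d^2)$ oracle calls. In your coefficient language: the triangular change of basis has determinant $1$, so it cannot change the determinant even though it may inflate individual coefficients.
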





In order to control the minimum eigenvalue of the co-occurrence matrix under the law of $\mu$ (uniform distribution over the barycentric spanner), we will make use of the following lemma. 

\begin{lemma}[Lemma \ref{lem: 2016_lem_} restated]\label{lem: 2016_lem}
    Let $\mu$ be the uniform distribution over the $C$-approximate barycentric spanner, $S$, of $\mathcal{A} \subset \mathbb{R}^d$, Also, let $\lambda_{\text{min}}(\mu)$ be the minimum eigenvalue of the co-occurrence matrix under the law of $\mu$. Then, it holds that,
    $$ \lambda_{\text{min}}(\mu) \geq \frac{1}{C^2 d^3}.
    $$
\end{lemma}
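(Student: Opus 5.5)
The plan is to write the co-occurrence matrix under $\mu$ explicitly and lower bound its Rayleigh quotient using the spanner property. Since $\mu$ is uniform over $S=\{b_1,\dots,b_d\}$, we have
\begin{align*}
\Sigma_\mu = \frac{1}{d}\sum_{i=1}^d b_i b_i^\top = \frac{1}{d} B B^\top .
\end{align*}
Hence for every unit vector $x\in\mathbb{R}^d$ we get $x^\top \Sigma_\mu x = \frac{1}{d}\|B^\top x\|_2^2$. It therefore suffices to show $\|B^\top x\|_2^2 \geq \frac{1}{C^2 d^2}$ for every unit $x$.

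The first key step transfers the question from the spanner to the whole action set. Fix a unit $x$ and any $M\in\mathcal{A}$. By the definition of a $C$-approximate barycentric spanner, $M=Ba$ with $\|a\|_\infty\le C$. Then, using Hölder and Cauchy--Schwarz,
\begin{align*}
|x\cdot M| = |(B^\top x)\cdot a| \le \|a\|_\infty \|B^\top x\|_1 \le C\sqrt{d}\,\|B^\top x\|_2 .
\end{align*}
Rearranging gives $\|B^\top x\|_2^2 \ge \max_{M\in\mathcal{A}} (x\cdot M)^2/(C^2 d)$. The problem is thus reduced to a statement about $\mathcal{A}$ itself, with no reference to the spanner.

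The second step, which I expect to be the main obstacle, is to show that $\max_{M\in\mathcal{A}} (x\cdot M)^2 \ge 1/d$ for every unit $x$. Equivalently, $\mathcal{A}$ must be well-conditioned enough that no direction is nearly orthogonal to all actions. Full-dimensionality of $\mathcal{A}$ is already required for the spanner to exist (Proposition \ref{prop:barycentric_algo}), but it only gives a positive lower bound, not a quantitative one. My first attempt will use that a unit $x$ has some coordinate with $|x_j|\ge 1/\sqrt{d}$. I will then exhibit an action, or a difference or combination of actions from the $\{0,1\}$-structure, whose inner product with $x$ is at least of order $|x_j|$. If the structure of $\mathcal{A}$ alone does not yield this, I will state it as the (mild) standing assumption that the $\{0,1\}$ action set satisfies it. This is the case, for instance, when the standard basis vectors lie in the integer span of $\mathcal{A}$ with bounded coefficients, as in the applications of Section \ref{sec: applications}.

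Combining the two steps gives $\lambda_{\min}(\mu) \ge \frac{1}{d}\cdot\frac{1}{C^2 d}\cdot\frac{1}{d} = \frac{1}{C^2 d^3}$. With $C=2$, as in Algorithm \ref{alg:lazy_combcp}, this yields the bound $\frac{1}{4d^3}$ of Lemma \ref{lem: 2016_lem_}.
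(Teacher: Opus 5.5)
\textbf{Gap: Step 2 is unproven, and cannot be proven without an extra hypothesis on $\mathcal{A}$.}

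Your Steps 0--1 are the paper's argument in scalar form. The paper proves $MM^\top \preceq C^2 d\,BB^\top = C^2 d^2\,\Sigma_\mu$ for every $M\in\mathcal{A}$, which is the matrix form of your H\"older/Cauchy--Schwarz estimate, and then averages over an arbitrary distribution on $\mathcal{A}$. The genuine gap is Step 2, which carries all the content of the lemma and which you leave open. The paper fills it by citing a distribution $\nu$ on $\mathcal{A}$ (``John's exploration'') with $\lambda_{\min}(\mathbb{E}_\nu[MM^\top])\ge 1/d$. This would imply your Step 2, since $\max_{M}(x\cdot M)^2\ge \mathbb{E}_\nu[(x\cdot M)^2]$. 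Your own route to Step 2 cannot work, because the other coordinates of an action can cancel $x_j$. For $\mathcal{A}=\{(1,1),(1,0)\}$ and $x=(1,-2)/\sqrt{5}$, every action has $(x\cdot M)^2=1/5<1/2=1/d$, so the target inequality is false in general.

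In fact no argument can close Step 2 from $\mathcal{A}\subseteq\{0,1\}^d$ alone, because your reduction is tight. For every unit $x$, $\lambda_{\min}(\mu)\le\mathbb{E}_\mu[(x\cdot M)^2]\le\max_{M\in\mathcal{A}}(x\cdot M)^2$. So with $\kappa:=\min_{\|x\|_2=1}\max_{M\in\mathcal{A}}(x\cdot M)^2$ you get $\kappa/(C^2d^2)\le\lambda_{\min}(\mu)\le\kappa$, and $\kappa$ can be exponentially small.

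\emph{Counterexample.} Let $\mathcal{A}$ be the rows of the unit lower-triangular $0/1$ matrix $G$ encoding $u_1=1$, $u_i+v_i=0$, $u_i+w_i=0$, $v_i+w_i+u_{i+1}=0$ for $i\le k$, so $d=3k+1$. The solution has $u_{k+1}=2^k$, hence $\sigma_{\min}(G)\le 2^{-k}$. The only spanner is $S=\mathcal{A}$, valid for every $C\ge 1$. Then $\lambda_{\min}(\mu)=\sigma_{\min}(G)^2/d\le 4^{-k}/d<1/(4d^3)$ already for $k=6$. Appending a few padding coordinates gives a full-dimensional constant-weight example with the same defect.

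So the lemma as stated needs a conditioning hypothesis on $\mathcal{A}$, and the paper's citation does not supply one. John's exploration has covariance $I/d$ only in the coordinates in which John's ellipsoid is the unit ball. What survives in the original coordinates is $\max_M M^\top\Sigma_\nu^{-1}M\le d$, not a bound on $\lambda_{\min}$.

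\emph{Repair.} Your fallback is the right one: assume $\kappa\ge 1/d$, or state $\lambda_{\min}(\mu)\ge\kappa/(C^2d^2)$. This gives a sound conditional proof. Your proposed sufficient condition needs care, however.
\begin{itemize}
\item Writing $e_j=\sum_M c_{j,M}M$ with $\sum_M|c_{j,M}|\le L$ yields only $\kappa\ge 1/(L^2 d)$, hence $1/(C^2L^2d^3)$.
\item The coefficients must be allowed to be real. With constant weight $m\ge 2$, any integer combination of actions has coordinate sum in $m\mathbb{Z}$, so $e_j$ is never in the integer span.
\item For paths, $\mathcal{A}$ lies in the flow-conservation subspace. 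There $\lambda_{\min}=0$, and the whole argument must be run on $\mathrm{span}(\mathcal{A})$.
\end{itemize}
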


\begin{proof}

Let $v_i$, for $i \in [d]$, be the spanner vectors of $S$, and let $B$ be the matrix with $v_i$ as columns. Using the barycentric spanner property, we know that the coefficients $\lambda_{M, i}$ for all $M \in A$ satisfy the following:
$$
M=\sum_i \lambda_{M, i} v_i = B\boldsymbol{\lambda}, \quad\left|\lambda_{M, i}\right| \leq C
$$

where $\boldsymbol{\lambda}$ being the vector containing the coefficients. For all $M \in span(\mathcal{A})$, we will show that $MM^T = B \lambda \lambda^T B^T \preceq nC^2 BB^T$. We have that,

\begin{align}
x^T \left( B \boldsymbol{\lambda} \boldsymbol{\lambda}^T B^T - d C^2 BB^T \right) x 
&= x^T B \left( \boldsymbol{\lambda} \boldsymbol{\lambda}^T - d C^2 I \right) B^T x \nonumber \\
&= y^T \left( \boldsymbol{\lambda} \boldsymbol{\lambda}^T - d C^2 I \right) y, \quad \text{where } y = B^T x \nonumber \\
&= y^T \boldsymbol{\lambda} \boldsymbol{\lambda}^T y - d C^2 y^T y \nonumber \\
&= (y \cdot \boldsymbol{\lambda} \boldsymbol{\lambda})^2 - d C^2 \|y\|^2 \nonumber \\
&\leq \| \boldsymbol{\lambda}\|^2 \|y\|^2 - d C^2 \|y\|^2 \nonumber \\
&= \left( \|\boldsymbol{\lambda}\|^2 - d C^2 \right) \|y\|^2 \nonumber \\
& \leq 0, \quad \text{because } \|\boldsymbol{\lambda}\|^2 \leq d C^2. \nonumber
\end{align}

Therefore, since $MM^T$ and $BB^T$ are symmetric matrices, we have that, 

\begin{align}    
MM^T &\preceq dC^2BB^T \\
&= d^2C^2 \widetilde{B}
\end{align}

where we denote by $\widetilde{B}$ the co-occurrence matrix under the law of $\mu$.

Now, let $\mu'$ be any distribution on $\mathcal{A}$. Then, we have that $\mathbb{E}_{M \sim \mu'}\left[M M^{T}\right] \preceq C^2 d^2 \widetilde{B}$, which implies that, 

$$ \lambda_{\text{min}}(\mu) \geq \frac{\lambda_{\text{min}}(\mu')}{C^2 d^2}.$$

However, from \cite{cesa2012combinatorial, bubeck2012towards}, we know that there exists a distribution with a minimum eigenvalue of the induced co-occurrence matrix being at least $1/d$. This distribution is the John's exploration distribution, which always exists for the action sets of combinatorial bandits and provides a minimum eigenvalue of the co-occurrence matrix being at least $1/d$ (see \cite{bubeck2012towards}, last sentence of the proof of Theorem 4, and in \cite{cesa2003potential} page 9: "$\lambda_{min} = \Omega(1/d)$ is achievable for all classes $S$").  

\end{proof}

\subsection{Analysis of \textsc{Swap-ComBCP}}

\begin{lemma}[Lemma \ref{bcp: properties} and Lemma \ref{lem:combcp_bounded_variance} combined]\label{bcp: properties}
For all $k \in \{1,...,K-1\}$, $l \in \left[\ceil*{\frac{T}{H^{k}}}\right]$ and $h \in [H]$, $\textsc{Lazy-ComBCP}$ satisfies the following properties:

\begin{enumerate}
    \item  (Unbiasedness): $ \mathbb{E}[mq_{k,l,h} \cdot\widetilde{X}_{k,l,h}] = \mathbb{E}[mq_{k,l,h} \cdot{X}_{k,l,h}] $
    \item (Boundedness): $ \|\widetilde{R}_{k,l,h,\tau}\|_{2} \leq \frac{C^2 d^3 \sqrt{m}}{\gamma} $
    \item (Bounded variance): $ \mathbb{E}[q_{k,l,h} \cdot \widetilde{X}_{k,l,h}^2] \leq \frac{H^{2k-2} d^4 C^2}{\gamma} $
\end{enumerate}

\end{lemma}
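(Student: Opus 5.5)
We prove the three properties in order, relying on the structure of the master estimator $\widetilde{R}_t = r_t\Sigma_t^{+}M_t$ with $M_t\sim\widehat{p}_t$, and on the exploration component of each ScaleLearner's policy.

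\textbf{Unbiasedness.} Fix $(k,l,h)$ and condition on $\mathcal{F}_t$ at each day $t$ of the meta-day. Since the meta-day policy $q_{k,l,h}$ is frozen, it is $\mathcal{F}_t$-measurable. We have $\mathbb{E}_t[\widetilde{R}_t] = \Sigma_t^{+}\mathbb{E}_t[M_tM_t^\top]R_t = \Sigma_t^{+}\Sigma_t R_t = \Pi_t R_t$, where $\Pi_t$ is the orthogonal projector onto $\mathrm{range}(\Sigma_t)=\mathrm{span}(\mathrm{supp}\,\widehat{p}_t)$. Every ScaleLearner mixes with $\gamma\mu$, and $\mu$ is supported on a barycentric spanner of $\mathcal{A}$. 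Hence $\mathrm{supp}\,\widehat{p}_t$ spans $\mathrm{span}(\mathcal{A})$, and $\Pi_t M = M$ for every $M\in\mathcal{A}$. Since $mq_{k,l,h}\in\mathcal{M}\subset\mathrm{span}(\mathcal{A})$, we get $\mathbb{E}_t[mq_{k,l,h}\cdot\widetilde{R}_t] = mq_{k,l,h}\cdot R_t$. Summing over the days of the meta-day and applying the tower property gives the claim.

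\textbf{Boundedness.} Write $\|\widetilde{R}_t\|_2\le |r_t|\,\|\Sigma_t^{+}\|_{op}\|M_t\|_2 \le \sqrt{m}/\lambda_{\min}(\Sigma_t)$, using $r_t\in[0,1]$ and $\|M_t\|_2=\sqrt{m}$, with $\lambda_{\min}$ the smallest nonzero eigenvalue. Since $\widehat{p}_t=\frac1K\sum_k p_{k,t}$ and each $p_{k,t}\succeq\gamma\mu$ in the mixing sense, we have $\Sigma_t\succeq \gamma\,\mathbb{E}_\mu[MM^\top]$ (after averaging over $k$ the $\gamma$ factor persists). Lemma \ref{lem: 2016_lem} then gives $\lambda_{\min}(\Sigma_t)\ge \gamma/(C^2d^3)$, on the common range. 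Combining yields $\|\widetilde{R}_t\|_2\le C^2d^3\sqrt{m}/\gamma$.

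\textbf{Bounded variance.} This is the main step. Expand
\[
q\cdot\widetilde{X}^2 = \sum_i q(i)\Bigl(\sum_\tau \widetilde{R}_\tau(i)\Bigr)^2 .
\]
Apply Cauchy--Schwarz over the $H^{k-1}$ days to get $\bigl(\sum_\tau\widetilde{R}_\tau(i)\bigr)^2\le H^{k-1}\sum_\tau\widetilde{R}_\tau(i)^2$. This avoids any need to control cross terms between days, which is the obstacle created by laziness. Then use $q(i)\le 1$ (since $q\in\Delta^d$) to bound $\sum_i q(i)\widetilde{R}_\tau(i)^2\le\|\widetilde{R}_\tau\|_2^2\le M_\tau^\top\Sigma_\tau^{+2}M_\tau$, using $r_\tau\le1$. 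This gives the first inequality of Lemma \ref{lem:combcp_bounded_variance}.

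Next take conditional expectations and invoke Lemma \ref{lem:key}: $\mathbb{E}[M_\tau^\top\Sigma_\tau^{+2}M_\tau]\le d\lambda_{\min}^{-1}(\Sigma_\tau)\le dC^2d^3/\gamma$, using the eigenvalue bound derived above. Summing the $H^{k-1}$ days and multiplying by $H^{k-1}$ gives $H^{2k-2}d^4C^2/\gamma$.

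The delicate point is the one used twice: $\Sigma_t$ inherits the $\gamma$-scaled spanner eigenvalue bound on the correct subspace, and the pseudo-inverse acts as a genuine inverse on $\mathrm{span}(\mathcal{A})$. I would verify this carefully by restricting everything to $\mathrm{span}(\mathcal{A})$.
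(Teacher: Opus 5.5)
Your proposal is correct. For properties 1 and 2 it follows the paper's argument. The only difference there is that you work with the pseudo-inverse and the projector $\Sigma_t^{+}\Sigma_t$, using that the spanner lies in the support of $\widehat p_t$. The paper writes $\Sigma_t^{-1}$ and tacitly assumes $\Sigma_t$ is invertible, so your version of the unbiasedness step is the one that stays valid if $\Sigma_t$ is singular.

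For property 3 the skeleton is shared: Cauchy--Schwarz over the days of the meta-day, then Lemma~\ref{lem:key}, then $\lambda_{\min}(\Sigma_\tau)\ge\gamma/(C^2d^3)$. You reach the intermediate bound $H^{k-1}\sum_\tau\mathbb{E}[M_\tau^\top\Sigma_\tau^{+2}M_\tau]$ differently, though. You simply use $q(i)\le 1$ to get $q\cdot\widetilde R_\tau^2\le\|\widetilde R_\tau\|_2^2$.

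The paper instead Carath\'eodory-decomposes $mq$ and introduces an independent $\widehat M\sim\widetilde p_{k,l,h}$. It then replaces $\widehat M\cdot\widetilde R_\tau^2$ by the quadratic form $M_\tau^\top\Sigma_\tau^{-1}\widehat M\widehat M^\top\Sigma_\tau^{-1}M_\tau$, and cancels the resulting $1/m$ with $\lambda_{\max}(\widehat M\widehat M^\top)\le m$. That replacement amounts to bounding $r_\tau^2\sum_i\widehat M_i v_i^2$ by $(\widehat M\cdot v)^2$ with $v=\Sigma_\tau^{-1}M_\tau$. This fails when $v$ has entries of both signs. Take uniform play on the $2$-subsets of $[3]$: then $v=\tfrac32(1,1,-1)^\top$ for $M_\tau=(1,1,0)^\top$. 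With $r_\tau=1$ and $\widehat M=(1,0,1)^\top$, the two sides are $\tfrac92$ and $0$. After averaging over $\widehat M$ they are still $\tfrac92$ versus $3$.

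So your route is not just shorter; it gives a sound derivation of the same intermediate bound. The paper's detour mirrors its \textsc{Lazy-ComBand} analysis, where the variance term $\sum_M\widetilde p(M)(\widetilde X\cdot M)^2$ genuinely is a quadratic form in the action. Here the term $q\cdot\widetilde X^2$ is coordinate-wise and needs no decomposition. You could also sharpen your bound by a factor $1/m$ via $q(i)\le 1/m$, since $mq\in\mathrm{conv}(\mathcal{A})\subseteq[0,1]^d$, though the stated lemma does not need this.
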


\begin{proof}

$\bullet \quad \quad$ 1.

\medskip

For each $(k,l,h)$, the following hold:

\begin{align}
    \sum_{\tau=1}^{H^{k-1}} \mathbb{E}\left[\widetilde{R}_{k,l,h,\tau} \;\middle|\; \mathcal{F}_{k,l,h,\tau}\right]
    &= \sum_{\tau=1}^{H^{k-1}} \mathbb{E}\left[r_{k,l,h,\tau}\Sigma_{k,l,h,\tau}^{-1}M_{k,l,h,\tau} \;\middle|\; \mathcal{F}_{k,l,h,\tau}\right] \\
    &= \sum_{\tau=1}^{H^{k-1}} \mathbb{E}\left[\Sigma_{k,l,h,\tau}^{-1}M_{k,l,h,\tau} M_{k,l,h,\tau} \cdot {R}_{k,l,h,\tau}\;\middle|\; \mathcal{F}_{k,l,h,\tau}\right] \\
    &= \sum_{\tau=1}^{H^{k-1}} \left( \Sigma_{k,l,h,\tau}^{-1} \Sigma_{k,l,h,\tau} R_{k,l,h,\tau}\right) \\
    &= \sum_{\tau=1}^{H^{k-1}} {R}_{k,l,h,\tau}
\end{align}

Using the above, we have 

\begin{align}
    mq_{k,l,h} \cdot \sum_{\tau=1}^{H^{k-1}} \mathbb{E}\left[ \widetilde{R}_{k,l,h,\tau} \;\middle|\; \mathcal{F}_{k,l,h,\tau}\right] &= mq_{k,l,h} \cdot \sum_{\tau=1}^{H^{k-1}} {R}_{k,l,h,\tau} \\
    &= mq_{k,l,h} \cdot X_{k,l,h}\label{unbiased_mqX_1}
\end{align}

\medskip

\noindent
Taking expectations and using the tower property in \cref{unbiased_mqX_1} we get

\begin{align}
    \mathbb{E}[mq_{k,l,h} \cdot\widetilde{X}_{k,l,h}] = \mathbb{E}[mq_{k,l,h} \cdot{X}_{k,l,h}]\label{unbiased_mqX_2}
\end{align}

$\bullet \quad \quad$ 2.

\medskip

For all $k \in \{2,...,K+1\}$, $l \in \left[\ceil*{\frac{T}{H^{k}}}\right]$ and $h \in [H]$, we have:

\begin{align} \|\widetilde{R}_{k,l,h,\tau}\|_{\infty} 
& \leq \|\widetilde{R}_{k,l,h,\tau}\|_2 \\ 
& =\left\|r_{k,l,h,\tau} \Sigma^{-1}_{k,l,h} M_{k,l,h,\tau}\right\|_2 \\ 
& \leq  \left\|\Sigma_{k,l,h,\tau}^{-1} M_{k,l,h,\tau}\right\|_2 \\
& =  \sqrt{M_{k,l,h,\tau}^{\top} \Sigma_{k,l,h,\tau}^{-1} \Sigma_{k,l,h}^{-1} M_{k,l,h,\tau}} \\ 
& \leq \|M_{k,l,h,\tau}\|_2 \sqrt{\lambda_{\max }\left(\Sigma_{k,l,h,\tau}^{-1} \Sigma_{k,l,h,\tau}^{-1}\right)} \\ 
& =\sqrt{m} \sqrt{\lambda_{\max }\left(\Sigma_{k,l,h,\tau}^{-1} \Sigma_{k,l,h,\tau}^{-1}\right)} \\ 
& =\sqrt{m} \lambda_{\max }\left(\Sigma_{k,l,h,\tau}^{-1}\right) \\ 
& =\frac{\sqrt{m}}{\lambda_{\min }\left(\Sigma_{k,l,h,\tau}\right)} \\
& \leq \frac{\sqrt{m}}{\gamma \lambda_{\min }\left(\widetilde{B}\right)} \label{weil} \\
& \leq \frac{C^2 d^3 \sqrt{m}}{\gamma} \label{uniform_1}
\end{align}

where in \ref{weil} we defined $\widetilde{B}$ to be the co-occurrence matrix under the law of the uniform distribution over the barycentric spanner, and also we made use of the Weyl's inequality. We also define $B$ to be a matrix whose columns are the vectors of the approximate barycentric spanner $S$. In \ref{uniform_1} we used Lemma \ref{lem: 2016_lem}. 

\medskip

We also have that,

\begin{align} 
\sup _{{M} \in \mathcal{A}} {M}^T \widetilde{B}^{-1} {M} & \leq \sup _{\alpha \in[-C,C]^d} {\alpha}^T B^T \widetilde{B}^{-1} B {\alpha} \\ 
& \leq \sup _{\|\alpha\|=\|\beta\|=C\sqrt{d}} {\alpha}^T B^T \widetilde{B}^{-1} B {\alpha} \\
& \leq C^2d \cdot \|B^T \widetilde{B}^{-1} B\|_2 \\ 
&= C^2d \cdot \lambda_{\max }\left(B^T \widetilde{B}^{-1} B\right) \\ 
& = C^2d \cdot \frac{1}{\lambda_{\text{min}}\left(B^{-1} \widetilde{B} B^{-T}\right)}  \\ 
& =C^2\frac{d}{\lambda_{\min }\left(B^{-1}\left(\frac{1}{d} B B^T\right) B^{-T}\right)} \\ 
& \leq C^2 d^2
\end{align}





\bigskip

$\bullet \quad \quad$ 3. 

\medskip

For all $k \in \{1,...,K-1\}$, $l \in \left[\ceil*{\frac{T}{H^{k}}}\right]$ and $h \in [H]$, we have:

\begin{align}
     \mathbb{E}_{k,l,h}\left[q_{k,l,h} \cdot \widetilde{X}_{k,l,h}^2\right]
     & =  \frac{1}{m} \mathbb{E}_{k,l,h}\left[mq_{k,l,h} \cdot \widetilde{X}_{k,l,h}^2\right] \\
     & = \frac{1}{m} \mathbb{E}_{k,l,h}\left[\sum_{M \in \mathcal{A}} \widetilde{p}_{k,l,h}(M) M \cdot \widetilde{X}_{k,l,h}^2\right] \label{bcp: variance_1} \\
     & = \frac{1}{m} \mathbb{E}_{k,l,h}\left[\widehat{M}_{k,l,h} \cdot \widetilde{X}_{k,l,h}^2\right] \label{bcp: variance_2} \\
     & = \frac{1}{m} \mathbb{E}_{k,l,h}\left[\widehat{M}_{k,l,h} \cdot \left(\sum_{\tau=1}^{H^{k-1}} \widetilde{R}_{k,l,h,\tau}\right)^2\right] \\
     & \leq \frac{H^{k-1}}{m} \mathbb{E}_{k,l,h}\left[\widehat{M}_{k,l,h} \cdot \sum_{\tau=1}^{H^{k-1}} \widetilde{R}_{k,l,h,\tau}^2\right] \label{bcp: variance_3}  \\
    & = \frac{H^{k-1}}{m}  \sum_{\tau=1}^{H^{k-1}} \mathbb{E}_{k,l,h}\left[\widehat{M}^2_{k,l,h} \cdot  \widetilde{R}_{k,l,h,\tau}^2 \right]\label{bcp: variance_5}  \\
    & \leq \frac{H^{k-1}}{m} \sum_{\tau=1}^{H^{k-1}} \mathbb{E}_{k,l,h}\left[M_{k,l,h,\tau}^T \Sigma_{k,l,h,\tau}^{-1} \widehat{M}_{k,l,h} \widehat{M}_{k,l,h}^T  \Sigma_{k,l,h,\tau}^{-1} M_{k,l,h,\tau} \right] \label{bcp: variance_33} \\
    & \leq \frac{H^{k-1}}{m}\sum_{\tau=1}^{H^{k-1}} \mathbb{E}_{k,l,h}\left[\lambda_{\text{max}}(\widehat{M}_{k,l,h} \widehat{M}_{k,l,h}^T) M_{k,l,h,\tau}^T \Sigma_{k,l,h,\tau}^{-2} M_{k,l,h,\tau} \right] \label{bcp: variance_6} \\
    & \leq \frac{H^{k-1}}{m}\cdot \max_{M \in \mathcal{A}} \lambda_{\text{max}}(MM^T) \cdot \sum_{\tau=1}^{H^{k-1}} \mathbb{E}_{k,l,h}\left[ M_{k,l,h,\tau}^T \Sigma_{k,l,h,\tau}^{-2} M_{k,l,h,\tau} \right] \\
    & \leq H^{k-1}  \sum_{\tau=1}^{H^{k-1}} \mathbb{E}_{k,l,h}\left[ M_{k,l,h,\tau}^T \Sigma_{k,l,h,\tau}^{-2} M_{k,l,h,\tau} \right] \label{bcp: variance_7} 
\end{align}

\medskip

where in \ref{bcp: variance_1} we used the Carathéodory decomposition step of the algorithm, in \ref{bcp: variance_2} we defined $\widehat{M}_{k,l,h}$ to be a random pure action under the law of $\widetilde{p}_{k,l,h}$, independent of $M_{k,l,h,\tau}$ which is under the law of $\widehat{p}_{k,l,h}$, in \ref{bcp: variance_3} we used the inequality $(a_1 + ... + a_N)^2 \leq N\sum_{i=1}^{N}a_i^2$, in \ref{bcp: variance_5} we used the fact $\widehat{M}_{k,l,h} = \widehat{M}_{k,l,h}^2$ (squares are in an element-wise manner), in \ref{bcp: variance_33} we used the the fact that the bandit feedback reward $r_{k,l,h,\tau} \leq 1$, and also we expand the inner product, in \ref{bcp: variance_6} we used the inequality $x^TABAx^T \leq \lambda_{\text{max}}(B)x^TA^2x$ for any vector $x \in \mathbb{R}^d$ and symmetric matrices $A,B \in \mathbb{R}^{d\times d}$, in \ref{bcp: variance_7} we used the fact that $\max_{M \in \mathcal{A}} \lambda_{\text{max}}(MM^T) \leq m$.

\medskip

The next thing to do is to bound each summand of the above sum. Before doing so, we note that each of these summands is an expectation under the law of the master's distribution $\widehat{p}_{k,l,h,\tau}$.  Using Lemma \ref{lem:key}, we have the following:

\begin{align}
\mathbb{E}_{k,l,h}\left[ M_{k,l,h,\tau}^T \Sigma_{k,l,h,\tau}^{-2} M_{k,l,h,\tau} \right]  
& \leq d \lambda^{-1}_{\text{min}}(\Sigma_{k,l,h,\tau}) \\
& \leq \frac{d}{\gamma \lambda_{\text{min}}(\widetilde{B})} \label{bcp: variance_11} \\
& \leq \frac{C^2 d^4}{\gamma} \label{bcp: variance_12}
\end{align}

where in \ref{bcp: variance_11} we used Weyl's inequality applying to $\lambda_{\text{min}}(\Sigma_{k,l,h,\tau}) \geq \gamma \lambda_{\text{min}}(\widetilde{B})$, and in \ref{bcp: variance_12} we used lemma \ref{lem: 2016_lem}.   

\medskip

Putting the above all together, we have the following:

\begin{align}
\mathbb{E}_{k,l,h}\left[q_{k,l,h} \cdot \widetilde{X}_{k,l,h}^2\right] 
& \leq \frac{H^{2k-2} d^4 C^2}{\gamma}
\end{align}

Finally, taking expectations in both sides we get the desired result:

\begin{align}
\mathbb{E}\left[q_{k,l,h} \cdot \widetilde{X}_{k,l,h}^2\right] 
& \leq \frac{H^{2k-2} d^4 C^2}{\gamma}
\end{align}

\end{proof}

Let $p_{k,l}^*$ be an optimal (deterministic) policy of the adversary, with $p_{k,l}^*(M_{k,l}^*) = 1$. 

\begin{proposition}[Proposition \ref{prop: opt_} restated]\label{prop: opt}
    Let $q_{k,l}^* = \frac{M^*_{k,l}}{m}$. It holds that 
    $$ m q_{k,l}^* =  \sum_{M \in \mathcal{A}} p^*_{k,l}(M) M. $$
\end{proposition}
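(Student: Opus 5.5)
This is a direct unfolding of definitions. Since $p_{k,l}^*$ is the deterministic policy with $p_{k,l}^*(M_{k,l}^*)=1$, it vanishes on every $M\in\mathcal{A}$ with $M\neq M_{k,l}^*$. The plan is to evaluate the right-hand side term by term and compare it with the left-hand side.

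First, split the sum $\sum_{M\in\mathcal{A}} p^*_{k,l}(M) M$ into the single term $M=M^*_{k,l}$ and the remaining terms. Each remaining term has zero coefficient, so the sum collapses to
\begin{align*}
1\cdot M^*_{k,l}=M^*_{k,l}.
\end{align*}

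Second, by the definition $q^*_{k,l}=M^*_{k,l}/m$, we have $m q^*_{k,l}=M^*_{k,l}$. Comparing the two expressions gives the claimed identity.

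For context, it is worth checking that $q^*_{k,l}$ lies in $\mathcal{P}$, as used later in Lemma \ref{lem: omd_combcp_}, although this is not part of the stated identity. By the standing assumption $\|M\|_1=m$ for all $M\in\mathcal{A}$, the vector $q^*_{k,l}$ has nonnegative entries summing to $1$, so $q^*_{k,l}\in\Delta^d$. Moreover, $q^*_{k,l}$ is a vertex of $\mathcal{P}$, the scaled convex hull of $\mathcal{A}$. So $q^*_{k,l}$ is a valid comparator for the mirror-descent analysis, and the identity says that $p^*_{k,l}$ serves as a Carathéodory decomposition of it, mirroring Step 5 of Algorithm \ref{alg:lazy_combcp}.

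No step here is a genuine obstacle. The only point requiring care is notational: the identity must be read with the fixed maximizer $M^*_{k,l}$ chosen in the definition of $p^*_{k,l}$, so that both sides refer to the same action.
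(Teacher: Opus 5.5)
Your proof is correct, and it is more direct than the paper's. The paper never evaluates the sum for $p^*_{k,l}$ directly. Instead it invokes Carathéodory's theorem to obtain \emph{some} distribution $\widehat{p}_{k,l}$ on $\mathcal{A}$ with $\sum_{M}\widehat{p}_{k,l}(M)M = mq^*_{k,l} = M^*_{k,l}$, and then asserts that this forces $\widehat{p}_{k,l}=p^*_{k,l}$.

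That route aims at a stronger statement: that $p^*_{k,l}$ is the \emph{only} distribution on $\mathcal{A}$ with mean $mq^*_{k,l}$. The paper, however, leaves this uniqueness implication unjustified. It holds because of the fact you note in passing: $M^*_{k,l}\in\{0,1\}^d$ is an extreme point of $\mathrm{conv}(\mathcal{A})$, so any convex combination of elements of $\mathcal{A}$ equal to it must be the point mass. Coordinatewise, wherever $M^*_{k,l}(i)=0$ every support element has a $0$ there, and wherever $M^*_{k,l}(i)=1$ every support element has a $1$ there.

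For the identity as stated, only the direction $\sum_M p^*_{k,l}(M)M=M^*_{k,l}$ is needed. The same holds for its downstream use in Proposition \ref{prop: combcp_} and the proof of Theorem \ref{thm:combcp_no_regret_}, where it rewrites $\sum_h X_{k,l,h}\cdot M^*_{k,l}$ as $\sum_h mq^*_{k,l}\cdot X_{k,l,h}$. Your one-line evaluation supplies this without any appeal to Carathéodory.
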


\begin{proof}
Using the fact that $mq_{k,l}^* \in co(\mathcal{M})$, from the Carathéodory theorem there exists a distribution (policy), $\widehat{p}_{k,l}$, over $\mathcal{M}$ such that 

\begin{align}
    & \; \; \; \; \; \; \; \; \;  \sum_{M \in \mathcal{A}} \widehat{p}_{k,l}(M) M = m q_{k,l}^* \\
    &\implies \sum_{M \in \mathcal{A}} \widehat{p}_{k,l}(M) M = M_{k,l}^* \\
    &\implies \widehat{p}_{k,l} = p_{k,l}^* \label{eq_claim}
\end{align}

\end{proof}

Requiring $\eta \leq \frac{\gamma}{H^{k-1}C^2d^3\sqrt{m}}$, by Lemma \ref{bcp: properties}, we have that $\eta \|\widetilde{X}_{k,l,h}\|_{\infty} \leq 1$. This condition will be shown to be satisfied later in the analysis. We have the following lemma.

\medskip

\begin{lemma}[Lemma \ref{lem: omd_combcp_} restated]\label{lem: omd_combcp}
    If $\eta \|\widetilde{X}_{k,l,h}\|_{\infty} \leq 1$, it holds that

    \begin{align}
    \sum_{h=1}^H q_{k,l}^* \cdot \widetilde{X}_{k,l,h} - \sum_{h=1}^H q_{k,l,h} \cdot \widetilde{X}_{k,l,h} 
    &\leq \eta \sum_{h=1}^H q_{k,l,h} \cdot \widetilde{X}_{k,l,h}^2 + \frac{\text{KL}(q_{k,l}^*, q_{k,l,1})}{\eta} \nonumber \\ 
    &\leq \eta \sum_{h=1}^H q_{k,l,h} \cdot \widetilde{X}_{k,l,h}^2 + \frac{\log d}{\eta}, \nonumber 
    \end{align} 

    where $\widetilde{X}_t^2$ is the vector that is the coordinate-wise square of $\widetilde{X}_t$.
    
\end{lemma}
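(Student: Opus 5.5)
The plan is to follow the standard mirror-descent analysis with KL regularizer (exponential weights followed by KL projection onto $\mathcal{P}$). It splits into a one-step bound for the unprojected update, a generalized Pythagorean step for the projection, and a telescoping sum. Throughout, fix $(k,l)$ and write $q_h=q_{k,l,h}$, $\widetilde X_h=\widetilde X_{k,l,h}$, $q^*=q^*_{k,l}$. Since $q^*=M^*_{k,l}/m\in\mathcal{P}\subseteq\Delta^d$, all KL divergences below are between points of the simplex or nonnegative vectors.

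\textbf{Step 1 (one-step inequality).} Use the update $\widetilde q_{h+1}(i)=q_h(i)e^{\eta\widetilde X_h(i)}$. For any $u\in\Delta^d$, expand the unnormalized KL directly:
\[
\text{KL}(u,q_h)-\text{KL}(u,\widetilde q_{h+1})=\eta\,u\cdot\widetilde X_h-\sum_i q_h(i)\big(e^{\eta\widetilde X_h(i)}-1\big).
\]
Here we use the generalized KL $\sum_i u_i\log(u_i/v_i)-u_i+v_i$, so the mass terms appear explicitly.

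\textbf{Step 2 (exponential bound).} Invoke the hypothesis $\eta\|\widetilde X_h\|_\infty\le1$ to apply $e^x\le 1+x+x^2$ for $x\le 1$. This gives $\sum_i q_h(i)(e^{\eta\widetilde X_h(i)}-1)\le \eta\, q_h\cdot\widetilde X_h+\eta^2 q_h\cdot\widetilde X_h^2$. Substituting into Step 1 yields
\[
\eta\,(u-q_h)\cdot\widetilde X_h\le \text{KL}(u,q_h)-\text{KL}(u,\widetilde q_{h+1})+\eta^2\,q_h\cdot\widetilde X_h^2 .
\]

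\textbf{Step 3 (projection).} Since $q_{h+1}$ is the KL projection of $\widetilde q_{h+1}$ onto the convex set $\mathcal{P}$ and $u=q^*\in\mathcal{P}$, the generalized Pythagorean inequality for Bregman projections gives $\text{KL}(q^*,q_{h+1})\le\text{KL}(q^*,\widetilde q_{h+1})$. Replacing $\widetilde q_{h+1}$ by $q_{h+1}$ on the right of Step 2, summing over $h=1,\dots,H$, telescoping, dropping $-\text{KL}(q^*,q_{H+1})\le0$, and dividing by $\eta$ gives the first inequality.

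\textbf{Step 4 (initial divergence).} Bound $\text{KL}(q^*,q_1)$ with $q_1$ uniform: $\text{KL}(q^*,q_1)=\log d-\mathcal H(q^*)\le\log d$. Note $q_1$ should itself lie in $\mathcal{P}$ for the Pythagorean step to be consistent at $h=1$, though only $q_{h+1}$ being a projection is actually needed.

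\textbf{Main obstacle.} The delicate point is Step 2, because the rewards $\widetilde X_h$ may be negative. The condition $\eta\|\widetilde X_h\|_\infty\le1$ is exactly what makes $e^x\le1+x+x^2$ valid on both signs, since it holds for all $x\le1$. A second care point is using the unnormalized KL correctly, because $\widetilde q_{h+1}$ is not normalized; the expansion in Step 1 accounts for this.
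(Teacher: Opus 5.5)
Your proposal is correct and follows the paper's argument: the paper defers to the standard exponential-weights-plus-KL-projection mirror descent analysis (Lemma A1 of the cited work), which is your Steps 1--3 with $e^x\le 1+x+x^2$ for $x\le 1$ and the generalized Pythagorean inequality, and then bounds the initial term by computing $\text{KL}(q^*_{k,l},q_{k,l,1})=\log d-\log m\le\log d$, which is your entropy bound specialized to $q^*_{k,l}=M^*_{k,l}/m$. Your remark that only the projected iterates $q_{h+1}$ need to lie in $\mathcal{P}$ is also right, since the telescoping starts from $\text{KL}(q^*_{k,l},q_{k,l,1})$ and never uses $q_{k,l,1}\in\mathcal{P}$.
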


\begin{proof}

The proof directly follows the proof of Lemma A1 \cite{2016_combexp_comband}, and also by using the fact that

\begin{align}
\text{KL}(q_{k,l}^*, q_{k,l,1}) &= \sum_{i=1}^{d} q_{k,l}^*(i) \log \frac{q_{k,l}^*(i)}{q_{k,l,1}(i)} \\
& = - \frac{1}{m} \sum_{i: M_{k,l}^*(i)=1} \log \frac{m}{d} \\
& = \log d - \log m \label{eq: m} \\ 
& \leq \log d. 
\end{align}

where in \ref{eq: m} we used the fact that $\|M\|_1 = m$.

\end{proof}

\begin{theorem}[Theorem \ref{thm:combcp_no_regret_} restated]
The external regret of $\textsc{Lazy-ComBCP}_{k,l}$ is at most $3 H^{k-1} H^{2/3} d^3 m^{3/2} \log d$.  
\end{theorem}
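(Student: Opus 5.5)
The argument follows the proof sketch and chains the earlier results. I start from Proposition~\ref{prop: combcp_}, which reduces the external regret of $\textsc{Lazy-ComBCP}_{k,l}$ to
\[
\reg_H \le m\,\mathbb{E}\Big[\sum_{h=1}^H (q^*_{k,l}-q_{k,l,h})\cdot \widetilde{X}_{k,l,h}\Big] + \gamma H^{k-1},
\]
where the $\gamma$-term accounts for the exploration mixing with $\mu$. The unbiasedness property (item 1 of Lemma~\ref{bcp: properties}) lets the comparator and learner terms be written against the true aggregated rewards. Proposition~\ref{prop: opt} ensures $mq^*_{k,l}$ corresponds to the point mass on $M^*_{k,l}$.

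Next I verify the step-size condition $\eta\|\widetilde X_{k,l,h}\|_\infty\le 1$. By item 2 of Lemma~\ref{bcp: properties} with $C=2$, we have $\|\widetilde X_{k,l,h}\|_\infty \le H^{k-1}\cdot 4d^3\sqrt m/\gamma$. Take $\gamma=H^{-1/3}$ and $\eta = 1/(d^3\sqrt m H^{k-1/3})$. The product is then at most a constant of order $4$, so the constant in $\eta$ may need adjusting by a factor; I expect the authors absorb this into the final constant. With the condition in hand, I apply Lemma~\ref{lem: omd_combcp} in expectation and then the bounded-variance item 3 with $C=2$:
\[
\mathbb{E}\sum_h (q^*-q_h)\cdot\widetilde X_h \le \eta H\cdot \frac{4H^{2k-2}d^4}{\gamma} + \frac{\log d}{\eta}.
\]

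Finally I substitute the parameters and multiply by $m$. This gives three terms. The first is $\gamma H^{k-1} = H^{k-1}H^{-1/3}$, which is negligible. The second is $m\log d/\eta = H^{k-1}H^{2/3}d^3m^{3/2}\log d$, which is the dominant shape. The third is
\[
m\eta H\cdot 4H^{2k-2}d^4/\gamma = 4H^{k-1}H^{2/3}\,d\sqrt m \cdot H^{-2/3}\cdot H^{1/3}\cdots .
\]
Here I track exponents carefully. We have $\eta H\cdot H^{2k-2}/\gamma = H^{2k-1+1/3}/(d^3\sqrt m H^{k-1/3}) = H^{k-1}H^{2/3}\cdot H^{1/3}\cdots$. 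This reveals the delicate point: the variance bound carries $H^{2k-2}$ (a factor $H^{k-1}$ from Cauchy–Schwarz times $H^{k-1}$ summands). Against one factor of $H^{k-1}$ in $\eta$, this must balance to $H^{k-1}H^{2/3}$.

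Each term should be bounded by $H^{k-1}H^{2/3}d^3m^{3/2}\log d$ using $d\ge 2$, $m\ge 1$. Summing the three then gives the factor $3$. The main obstacle is this exponent and constant bookkeeping. I would first recheck the scaling of $\eta$, possibly taking $\eta\propto 1/(d^3\sqrt m H^{k-1}H^{2/3})$, so that $\log d/\eta$ and $\eta H\cdot H^{2k-2}d^4/\gamma$ both become $H^{k-1}H^{2/3}$ times polynomial factors in $d,m$. I would then verify that the constants $4$ coming from $C^2$ are dominated, using the $d^3m^{3/2}$ slack against $d\sqrt m$.

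The case $H'<H$ is handled as in the sketch. The horizon is shorter, so every sum over $h$ has fewer terms and the bound persists.
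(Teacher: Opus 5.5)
Your chain of results (Proposition~\ref{prop: combcp_}, unbiasedness, the OMD lemma under $\eta\|\widetilde X_{k,l,h}\|_\infty\le 1$, then the bounded-variance estimate with $C=2$) is exactly the paper's. However, the proposal does not actually establish the stated bound. The parameter bookkeeping, which is the only real content of this final step, contains slips, and the doubts you raise are artifacts of them.

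First, with $\gamma=H^{-1/3}$ and $\eta=1/(d^3\sqrt m\,H^{k-1/3})$ one gets $\eta\|\widetilde X_{k,l,h}\|_\infty\le \eta\cdot 4H^{k-1}d^3\sqrt m\,H^{1/3}=4H^{-1/3}$, not a constant of order $4$. So no rescaling of $\eta$ is needed. Shrinking $\eta$ by $4$ would in fact quadruple the dominant $m\log d/\eta$ term, which could not be absorbed into the constant $3$. What is needed is $H\ge 64$, equivalently the condition $\eta'=H^{k-1}\eta\le\gamma/(C^2d^3\sqrt m)$ written in the paper's proof. The paper leaves this unverified, and the choice of $H$ in Theorem~\ref{thm: swap_lazy_comband_combcp} satisfies it.

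Second, the variance contribution carries no stray $H^{1/3}$: $m\eta H\cdot 4H^{2k-2}d^4/\gamma=4d\sqrt m\,H^{1+(2k-2)+1/3-(k-1/3)}=4d\sqrt m\,H^{k-1}H^{2/3}$. Your proposed remedy $\eta\propto 1/(d^3\sqrt m\,H^{k-1}H^{2/3})$ is literally the same step size, since $H^{k-1}H^{2/3}=H^{k-1/3}$.

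Third, the exploration cost is $\gamma H^{k}$, not $\gamma H^{k-1}$. The main-text statement of Proposition~\ref{prop: combcp_} understates it, whereas the paper's proof uses $\gamma\sum_{h}\mathbb{E}[mq_{k,l,h}\cdot X_{k,l,h}]\le\gamma H\cdot H^{k-1}$. This holds because $mq_{k,l,h}$ is a convex combination of actions and $X_{k,l,h}\cdot M\le H^{k-1}$. With $\gamma=H^{-1/3}$ this term equals $H^{k-1}H^{2/3}$, so it is not negligible; it is one of the three terms behind the factor $3$. Putting the pieces together,
\[
\reg\le H^{k-1}H^{2/3}\bigl(4d\sqrt m+d^3m^{3/2}\log d+1\bigr)\le 3H^{k-1}H^{2/3}d^3m^{3/2}\log d
\]
for $d\ge 2$, which is the paper's computation (carried out there through $\eta'=H^{k-1}\eta$). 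You should bound the sum of the three coefficients rather than each one separately. At $d=2$, $m=1$ with the natural logarithm, $4d\sqrt m$ alone exceeds $d^3m^{3/2}\log d$, although the sum is still at most $3d^3m^{3/2}\log d$.
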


\begin{proof}
    
To upper bound $\reg$ of the learner ${k,l}$, we have:

\begin{align}
\reg
&= \max_{M \in \mathcal{A}}\left\{\sum_{h=1}^{H}X_{k,l,h}\cdot M \right\} - \sum_{h=1}^{H}\mathbb{E}\left[\sum_{M \in \mathcal{A}}p_{k,l,h}(M) X_{k,l,h}\cdot M \right] \\ 
&= \sum_{h=1}^{H}X_{k,l,h}\cdot M_{k,l}^* - \sum_{h=1}^{H}\mathbb{E}\left[\sum_{M \in \mathcal{A}} \left[ (1-\gamma) \widetilde{p}_{k,l,h}(M) + \gamma \mu(M) \right] X_{k,l,h}\cdot M \right] \\ 
&= \sum_{h=1}^{H} m q_{k,l}^* \cdot X_{k,l,h} - \sum_{h=1}^{H}\mathbb{E}\left[(1-\gamma) mq_{k,l,h} \cdot X_{k,l,h} + \gamma \sum_{M \in \mathcal{A}} \mu(M) X_{k,l,h}\cdot M \right] \label{eq: apply_prop_opt} \\ 
&\leq \sum_{h=1}^{H} m q_{k,l}^* \cdot X_{k,l,h} - \sum_{h=1}^{H}\mathbb{E}\left[(1-\gamma) mq_{k,l,h} \cdot X_{k,l,h} \right] \\ 
&= \sum_{h=1}^{H} \mathbb{E}\left[m q_{k,l}^* \cdot X_{k,l,h} - mq_{k,l,h} \cdot X_{k,l,h} \right] + \gamma \sum_{h=1}^{H} \mathbb{E}\left[ mq_{k,l,h} \cdot X_{k,l,h} \right] \\ 
&\leq \sum_{h=1}^{H} \mathbb{E}\left[m q_{k,l}^* \cdot X_{k,l,h} - mq_{k,l,h} \cdot X_{k,l,h} \right] + \gamma H^{k} \label{ineq: gamma_bound}\\ 
&= \sum_{h=1}^{H} \mathbb{E}\left[m q_{k,l}^* \cdot \widetilde{X}_{k,l,h} - mq_{k,l,h} \cdot \widetilde{X}_{k,l,h} \right] + \gamma H^{k} \label{eq: unbiased_to_regret}\\ 
&\leq m \eta \sum_{h=1}^H \mathbb{E}\left[q_{k,l,h} \cdot \widetilde{X}_{k,l,h}^2\right] + \frac{m\log d}{\eta} + \gamma H^{k} \label{ineq: apply_omd} \\
&\leq m \eta \sum_{h=1}^H \frac{H^{2k-2} d^4 C^2}{\gamma} + \frac{m\log d}{\eta} + \gamma H^{k} \label{ineq: bcp_apply_bounded_var} \\
&= \frac{\eta H^{2k-1} m d^4 C^2}{\gamma} + \frac{m\log d}{\eta} + \gamma H^{k}  \label{ineq: bcp_apply_bounded_var_final} 
\end{align}

where: in \ref{eq: apply_prop_opt} we used Proposition \ref{prop: opt}, and also the Carathéodory decomposition of $\widetilde{p}_{k,l,h}$, in \ref{ineq: gamma_bound} we used the Holder's inequality utilizing the fact that $\|X_{k,l,h}\|_{\infty} \leq H^{k-1}$, in \ref{eq: unbiased_to_regret} we used Lemma \ref{bcp: properties} (1), in \ref{ineq: apply_omd} we used Lemma \ref{lem: omd_combcp}, and in \ref{ineq: bcp_apply_bounded_var} we used Lemma \ref{bcp: properties} (3).

\medskip

Now, recall that the maximum reward per time step (i.e., a meta-day) is $H^{k-1}$, while the horizon of our online learning setting for each learner is $H$. Therefore, we need each learner to achieve external regret sublinear to $H^{k}$, in order our algorithm to achieve no-swap-regret learning. 

Setting $\eta' = H^{k-1} \eta$, we have:

\begin{align}
\reg 
& \leq  \frac{\eta' H^{k} m d^4 C^2}{\gamma} + \frac{H^{k-1}m\log d}{\eta'} + \gamma H^{k}  \\
& = H^{k-1} \left( \frac{\eta' H m d^4 C^2}{\gamma} + \frac{m\log d}{\eta'} + \gamma H  \right)
\end{align}

Now, the following must hold (required for the Online Mirror Descent Lemma):

$$\eta \leq \frac{\gamma}{H^{k-1}C^2d^3\sqrt{m}}\Rightarrow \eta' \leq \frac{\gamma}{C^2d^3\sqrt{m}}.$$

\medskip






Setting $\eta'=\frac{1}{d^3 \sqrt{m} H^{2/3}}$, $\gamma= H^{-1/3} $, and $C=2$ we have: 

$$\reg\leq  3 H^{k-1} H^{2/3} d^3 m^{3/2} \log d. $$

\end{proof}

\medskip

\begin{theorem}[Theorem \ref{thm: swap_lazy_comband_combcp} restated]
    Setting $H=27\lfloor \log(T) \rfloor^3 d^9m^{9/2}\log^3 d$ and $K=\lfloor \log_H(T) \rfloor$, \textsc{Swap-ComBCP} satisfies the following swap-regret guarantee:
    $$ \swap \leq \frac{45\log (d\log T)T}{ \log T}.$$ 
\end{theorem}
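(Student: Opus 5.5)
The plan is to combine the swap-regret decomposition of Lemma \ref{lem: swap_regret_} with the per-learner external-regret bound of Theorem \ref{thm:combcp_no_regret_}, and then tune $H$ and $K$ so that the two contributions (the averaged external regrets and the residual $T/K$) balance.

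\textbf{Step 1: substitute the per-learner bound.} Each $\textsc{Lazy-ComBCP}_{k,l}$ has regret at most
\[
3H^{k-1}H^{2/3}d^3m^{3/2}\log d = H^k \cdot \frac{3d^3m^{3/2}\log d}{H^{1/3}}.
\]
If the last, possibly truncated interval runs for fewer meta-days, its regret is no larger. For each fixed $k$ there are $\lceil T/H^k\rceil \le T/H^k + 1$ intervals, so the total regret at scale $k$ is at most
\[
\left(T + H^k\right)\frac{3d^3m^{3/2}\log d}{H^{1/3}}.
\]
For $k\le K-1$ we have $H^k\le H^{K-1}\le T$, because $K=\lfloor\log_H T\rfloor$. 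Hence each scale contributes at most $2T\cdot 3d^3m^{3/2}\log d\,H^{-1/3}$.

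\textbf{Step 2: average over scales.} Averaging the at most $K-1$ scales with the factor $1/K$ in Lemma \ref{lem: swap_regret_} gives
\[
\swap \le \frac{6Td^3m^{3/2}\log d}{H^{1/3}} + \frac{T}{K}.
\]

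\textbf{Step 3: plug in the parameters.} The choice $H=27\lfloor\log T\rfloor^3d^9m^{9/2}\log^3 d$ gives $H^{1/3}=3\lfloor\log T\rfloor d^3m^{3/2}\log d$. The first term therefore becomes $2T/\lfloor\log T\rfloor = O(T/\log T)$.

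For the second term, $K=\lfloor\log T/\log H\rfloor \ge \log T/\log H - 1$. Also
\[
\log H = O\big(\log\log T + \log d + \log m + \log\log d\big) = O(\log(d\log T)),
\]
using $m\le d$. Hence $T/K = O\big(T\log(d\log T)/\log T\big)$, which is the dominant term. Adding the two pieces and absorbing constants yields the stated bound with the constant $45$. One would explicitly bound $\log H \le c\log(d\log T)$ for a concrete $c$, e.g.\ $\log 27 + 3\log\log T + 9\log d + 4.5\log m + 3\log\log d \le 20\log(d\log T)$, roughly.

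\textbf{Main obstacle.} The delicate step is the bookkeeping of floors and ceilings and the regime conditions. Specifically:
\begin{itemize}
\item the choice $K=\lfloor\log_H T\rfloor$ must be consistent with the requirement $T\in[H^{K-1},H^K]$ in Algorithm \ref{alg:multiscale_algo};
\item one needs $K\ge 2$, equivalently $T\ge H^2$; otherwise the claimed bound is vacuous, since it exceeds $T$ whenever $45\log(d\log T)\ge \log T$, and swap regret is trivially at most $T$ given rewards in $[0,1]$;
\item the $-1$ in $K\ge \log T/\log H - 1$ must be absorbed, e.g.\ via $K\ge \log T/(2\log H)$ when $K\ge 1$.
\end{itemize}
I would handle these by a case split: if $\log T \le 45\log(d\log T)$, the bound is trivial. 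Otherwise $T$ is large enough that $K\ge \log T/(2\log H)$, and the explicit constants can be tallied to reach $45$.
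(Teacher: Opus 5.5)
Your proposal is correct and follows essentially the same route as the paper: substitute the per-learner bound of Theorem \ref{thm:combcp_no_regret_} into Lemma \ref{lem: swap_regret_}, use $\lceil T/H^k\rceil\le T/H^k+1$, exploit $H^{1/3}=3\lfloor\log T\rfloor d^3m^{3/2}\log d$, and control $T/K$ through $\log H=O(\log(d\log T))$. The differences are cosmetic. You absorb the $+1$ overhead via $H^k\le T$ instead of the paper's geometric sum, and your case split on $\log T\le 45\log(d\log T)$ makes rigorous the small-$T$ regime that the paper glosses over (its step $\frac{T}{\log_H T-1}\le\frac{2T}{\log_H T}$ silently needs $T\ge H^2$); the requirement $T\in[H^{K-1},H^K]$ that you flag is harmless, since the telescoping in Lemma \ref{lem: swap_regret_} never uses it.
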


\begin{proof}
    
Setting $g(m,d) = 3 d^3 m^{3/2} \log d$ and putting everything together in \ref{basic_step}, we have:
\begin{align}
    \swap_{T}({\phi}) 
    & \leq \frac{1}{K}\sum_{k=1}^{K-1}\sum_{l=1}^{\ceil*{\frac{T}{H^{k}}}}\reg(\textsc{Lazy-ComBCP}_{k,l})+\frac{T}{K} \\
    & \leq \frac{1}{K}\sum_{k=1}^{K-1}\sum_{l=1}^{\ceil*{\frac{T}{H^{k}}}}g(m,d) H^{k-1}H^{2/3} +\frac{T}{K} \\
    & \leq \frac{1}{K}\sum_{k=1}^{K-1}\left(\frac{T}{H^{k}}+1\right)g(m,d) H^{k-1}H^{2/3} +\frac{T}{K} \\
    & = \frac{T g(m,d)}{H^{1/3}} +\frac{1}{K}H^{2/3}g(m,d)\sum_{j=0}^{K-2} H^{j}+\frac{T}{K} \\
     & = \frac{T g(m,d)}{H^{1/3}} +\frac{1}{K}H^{2/3}g(m,d) \frac{H^{K-1}-1}{H-1}+\frac{T}{K} \\
     & \leq \frac{T g(m,d)}{H^{1/3}} +\frac{2}{K}g(m,d) \frac{H^{K-1}}{H^{1/3}}+\frac{T}{K} \\
\end{align}

We set $H=\lfloor \log(T) \rfloor^3(g(m,d))^3$ and $K=\lfloor \log_H(T) \rfloor$ and we obtain:

\begin{align}
   \swap_{T}({\phi}) 
    & \leq  \frac{T}{\lfloor \log(T) \rfloor} +2\frac{T}{KH\lfloor \log(T) \rfloor}+\frac{T}{K} \\
    & \leq  2\frac{T}{\lfloor \log(T) \rfloor}+\frac{T}{\lfloor \log_H(T) \rfloor} \\
    & \leq  2\frac{T}{ \log(T)-1}+\frac{T}{\log_H(T)-1} \\
    & \leq  4\frac{T}{ \log(T)}+2\frac{T}{\log_H(T)} \\
    & =  4\frac{T}{ \log(T)}+2\frac{T\log(H)}{\log(T)} \\
    & \leq  \frac{\left(4+6\log(\log(T))+6\log(g(m,d))\right)T}{ \log(T)} \\
    & \leq  \frac{45\log (d\log T)T}{ \log T}
\end{align}

\end{proof}

\newpage

\subsection{Issue with the exploration scheme of \textsc{CombExp}}\label{appendix: combexp}

In settings where the components of the action set appear with highly uneven frequencies—some occurring in far more actions than others—\textsc{CombExp} faces a challenge in its exploration strategy. Infrequently occurring components might be insufficiently explored, which becomes problematic if these components yield high rewards. As a result, the learner risks overlooking them, potentially leading to significantly suboptimal performance and increased regret. Next we construct one such example in the setting of online path planning to demonstrate the issue.

The idea is to construct a DAG where all edges but one participate in an exponential number of paths and one edge in a small number of paths. The latter edge gives high reward while all the other edges give small reward. For simplicity consider that the rewards are time invariant, so the no-regret learning objective reduces to correctly identifying the high-reward edge and selecting paths that include it. We show that, in such a construction, \textsc{CombExp} requires, with high probability, an exponential number of rounds to identify the high-reward edge. Consequently, for any horizon that is only polynomially large, the algorithm incurs regret that grows linearly with the horizon. Next we provide the formal construction of the counterexample.

Fix an integer $n\ge1$. Define the directed acyclic graph $G=(V,E)$ where the set of vertices is
\[
V = \{S,D\}\,\cup\,\{A_i,B_i\mid i=1,2,\dots,n\}.
\]
and the set of edges is defined as:
  \[
  \begin{aligned}
  E \;=&\ \{(S,D),(S,A_1),(S,B_1)\}\\[0.4em]
   &\ \cup\ \{(A_i,A_{i+1}),(A_i,B_{i+1}),(B_i,A_{i+1}),(B_i,B_{i+1}) \mid i=1,\dots,n-1\}\\[0.4em]
   &\ \cup\ \{(A_n,t),(B_n,t)\}.
  \end{aligned}
  \]
In this DAG the number of nodes is $|V|=2n+2$, the number of edges $|E|=4n+1$ and the maximal path length 
is $n+1$. In particular, there are $2^n$ paths of length $n+1$ and one path of length $1$. Thus, in the binary representation of paths in this setting we have $d=4n+1$, $m=n+1$ and $|\mathcal{A}|=2^n+1$.

Regarding the usage of edges in the DAG we have the following:
\begin{itemize}
\item $(S,D)$ is used by exactly $1$ path.
\item Edges $(S,A_1)$, $(S,B_1)$, $(A_n,D)$, $(B_n,D)$ are each used by $2^{\,n-1}$ paths.
\item Internal edges $(X_i\to Y_{i+1})$ for $X,Y \in \{A,B\},\ 1\le i\le n-1$ are each used by $2^{\,n-2}$ paths.
\end{itemize}

We consider an online learning setting where actions are paths from the starting node $S$ to the destination node $D$ in the graph $G$ we described above.
In the edge incidence vectors $M\in \cal A$ of $S-D$ paths in $G$ suppose that the first coordinate $i=1$ corresponds to the (shortcut) edge $(S,D)$. That is, $M(1)=1$ if the path represented by binary vector $M$ contains the edge $(S,D)$, otherwise $M(1)=0$. 

Consider the reward sequence $R_t(i)=\mathbbm{1}[i=1],\ \forall t\in[T]$. We will study how \textsc{CombEXP} performs in this sequence. Obviously, the best fixed action $M^*$ in hindsight is selecting the path that uses the shortcut edge $(S,D)$. $M^*$ gives reward $1$ at each timestep, while all other actions give zero reward. 
The initial distribution assigns exponentially small probability to the optimal edge :
$\mu_{\min}=m\mu^0(1)=\frac{1}{2^n+1}$. Moreover, we observe that in the decomposition $p_0$ of $q'_0=\mu^0$ the optimal action $M^*$ appears with weight $\mu_{\min}$. This means that at $t=1$ \textsc{CombEXP} plays $M^*$ with probability $\frac{1}{2^n+1}$. With probability $1-\frac{1}{2^n+1}$ an action $M \neq M^*$ is played and reward $r_1=0$ is received. In this case, $\tilde{R}_t$ is the zero vector and no update is performed on $q$, that is $q_1=q_0=\mu^0$. This pattern is repeated in the next timesteps (i.e. $q_t=q_{t-1}, r_t=0$) and with probability at least $1-\frac{T}{2^n+1}$ \textsc{CombEXP} has realized regret $\sum_{t=1}^T R_t\cdot M^* - r_t =T$.

Apart from paths other combinatorial structures can create this issue as well. For example, consider partitions of $n$ indistinguishable elements to $k$ labeled groups under the binary representation of \cite{kontogiannis2025efficient} and suppose that the learner receives non zero reward only if it assigns all $n$ elements to one particular group. Then, the realized regret of \textsc{CombEXP} is $\Theta(T)$ with probability at least $1-\frac{T}{\binom{n+k-1}{k-1}}$.

\begin{algorithm}[tb]
\small
   \caption{\textsc{CombEXP} \cite{combes2015combinatorial}}
   \label{alg:CombEXP}
\begin{algorithmic}
   \STATE {\bf Initialization:} Let 
   $\mu^0(i)=\frac{1}{m|\cal A|}\sum_{M\in \cal A}M(i),\ \forall i\in[d]$,\quad $\mu_{\min}=\min_{i\in[d]} m\mu^0(i)$\  and  $\underline{\lambda} = \lambda_{>0,\min}(\mathbb{E}_{M\sim \mathcal{U}(\cal A)}[MM^\top])$ \STATE \qquad\qquad\qquad\quad\ Set $q_0=\mu^0$, $\gamma=\frac{\sqrt{m\log \mu_{\min}^{-1}}} {\sqrt{m\log \mu_{\min}^{-1}}+\sqrt{C(Cm^2d+m)T}}$ and $\eta=\gamma C$, with $C=\frac{\underline{\lambda}}{m^{3/2}}$. 
   \FOR{$ t=1,...T$}
   \STATE \emph{Mixing:} Let $q'_{t-1}=(1-\gamma)q_{t-1}+\gamma\mu^0$. \vspace{.5mm}
   \STATE \emph{Decomposition:} Select a distribution $p_{t-1}$ over $\mathcal{A}$ such that $\sum_{M} p_{t-1}(M) M=mq'_{t-1}$. \vspace{.5mm}
   \STATE \emph{Sampling:} Select a random arm $M_t$ with distribution $p_{t-1}$ and incur a reward $r_t = R_t \cdot M_t$.  
   \STATE \emph{Estimation:} Let $\Sigma_{t-1}=\underset{M\sim p_{t-1}}{\mathbb{E}}\left[ MM^{\top}\right]$. Set $\tilde{R}_t = r_t\Sigma_{t-1}^{+}M_t$, where  $\Sigma_{t-1}^{+}$ is the pseudo-inverse of $\Sigma_{t-1}$. \vspace{.5mm}
   \STATE \emph{Update:} Set $\tilde{q}_t(i) \propto q_{t-1}(i) \exp(\eta \tilde{R}_{t}(i)),\; \forall i\in[d]$. \vspace{.5mm}
   \STATE \emph{Projection:} Set $q_t$ to be the projection of $\tilde{q}_t$ onto the set $\mathcal{P}$ using the KL divergence. 
   \ENDFOR
\end{algorithmic}
\normalsize
\end{algorithm}

\newpage

\section{\textsc{Swap-ComBand}} \label{appendix: comband}

\begin{algorithm}[h]
\caption{\textsc{Lazy-ComBand$_{k,l}$}}\label{alg:lazy_comband}
\begin{algorithmic}[1]
\STATE \textit{Initialize} $\widetilde{p}_{k,l,1} = [1/|\mathcal{A}|, ..., 1/|\mathcal{A}|] \in \Delta^{|\mathcal{A}|}$, $\mu \in \Delta^{|\mathcal{A}|}$, $\gamma=\frac{1}{H^{1/3}}$, $\eta=\frac{\lambda_{\text{min}}}{H^{k - 1/3} m}$
\STATE 
    $H' = \begin{cases}\qquad H &\text{ if }  l\leq\frac{T}{H^k} \\
    \ceil*{\frac{T-\floor*{\frac{T}{H^k}}H^k}{H^{k-1}}}  &\text{ othw. }
    \end{cases}$ {\color{blue} /$^*$Play for $H$ meta-days or until the time limit $T$ is reached$^*$/}\FOR{$h=1,2, \ldots, H' $}
\STATE \emph{Mixing:} Let $p_{k,l,h}=(1-\gamma)\widetilde{p}_{k,l,h}+\gamma\mu$ 
\STATE {\textit{Meta-day}:} $\text{Fix } p_{k,l,h} \text { for } H^{k-1} \text { days and aggregate the rewards of these days:}$  
\STATE $$\widetilde{X}_{k,l,h}(i)=\sum_{\tau=(\ell-1) H^k+(h-1) H^{k-1}+1}^{\min((\ell-1) H^k+h H^{k-1},T)} \widetilde{R}_{\tau}(i), \quad {\forall i \in[d]}$$
\STATE \emph{Update:} $\widetilde{p}_{k,l,h+1}(M) = \widetilde{p}_{k,l,h}(M) \exp(\eta \widetilde{X}_{k,l,h}),\; \forall M\in \mathcal{A}$
\ENDFOR
\end{algorithmic}
\end{algorithm}

\begin{lemma}(Corollary 12, in \cite{cesa2012combinatorial})\label{lem:projection}
    For all $t$ and $x \in \mathbb{R}^d$, it holds that $\Sigma_t \Sigma_t^{+} x = {x}^{\parallel}$, where ${x}^{\parallel}$ is the orthogonal projection of $x$ on the linear span of $\mathcal{A}$.
\end{lemma}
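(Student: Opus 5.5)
This is the standard statement (Corollary 12 of Cesa-Bianchi and Lugosi), and I will prove it with elementary linear algebra on the symmetric PSD matrix $\Sigma_t = \mathbb{E}_{M\sim \widehat p_t}[MM^\top]$. The plan is to identify the range of $\Sigma_t$ with the linear span of the support of $\widehat p_t$. Then I will argue that this support spans the same space as $\mathcal{A}$, and finally use the fact that $\Sigma_t\Sigma_t^{+}$ is the orthogonal projector onto the range of $\Sigma_t$.

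\textbf{Step 1: $\Sigma_t\Sigma_t^{+}$ is an orthogonal projector.} Take the spectral decomposition $\Sigma_t=\sum_{i\le r}\lambda_i u_iu_i^\top$ with $\lambda_i>0$, as in the proof of Lemma \ref{lem:key}. Then $\Sigma_t^{+}=\sum_{i\le r}\lambda_i^{-1}u_iu_i^\top$, so $\Sigma_t\Sigma_t^{+}=\sum_{i\le r}u_iu_i^\top$. This is exactly the orthogonal projector onto $\mathrm{range}(\Sigma_t)=\mathrm{span}\{u_1,\dots,u_r\}$.

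\textbf{Step 2: the range equals the span of the support.} For any $v$,
\[
v^\top\Sigma_t v=\sum_{M}\widehat p_t(M)(v\cdot M)^2 .
\]
Hence $v\in\ker\Sigma_t$ if and only if $v\perp M$ for every $M$ with $\widehat p_t(M)>0$. Taking orthogonal complements, $\mathrm{range}(\Sigma_t)=\ker(\Sigma_t)^\perp=\mathrm{span}\{M:\widehat p_t(M)>0\}$.

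\textbf{Step 3: the support spans the span of $\mathcal{A}$.} This is the only step that uses the structure of the algorithm, and I expect it to be the main obstacle. The master policy $\widehat p_t=\frac1K\sum_k p_{k,t}$ mixes each $p_{k,t}$ with weight $\gamma>0$ on the exploration distribution $\mu$. Therefore every element of the barycentric spanner $S$ lies in the support of $\widehat p_t$. By the definition of a $C$-approximate barycentric spanner, every $M\in\mathcal{A}$ is a linear combination of the elements of $S$. So $\mathrm{span}(S)=\mathrm{span}(\mathcal{A})$.

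The support of $\widehat p_t$ sits between $S$ and $\mathcal{A}$, so its span is also $\mathrm{span}(\mathcal{A})$. (For \textsc{Lazy-ComBand}, the same argument applies provided the exploration distribution $\mu$ there is chosen with $\mathrm{span}(\mathrm{supp}\,\mu)=\mathrm{span}(\mathcal{A})$, e.g.\ uniform over $\mathcal{A}$ or over a spanner. Otherwise the claim holds with $\mathrm{span}(\mathcal{A})$ replaced by the span of the support, which is what the original corollary really asserts.)

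\textbf{Conclusion.} Combining Steps 1–3, $\Sigma_t\Sigma_t^{+}x$ is the orthogonal projection of $x$ onto $\mathrm{span}(\mathcal{A})$, i.e.\ $x^{\parallel}$.
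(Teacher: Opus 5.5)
Your proof is correct. The paper gives no argument of its own: the statement carries a citation to Cesa-Bianchi and Lugosi and no proof, and it is used only once, in the unbiasedness part of the \textsc{Lazy-ComBand} properties lemma.

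Your three steps make self-contained what the paper outsources:
\begin{enumerate}
\item $\Sigma_t\Sigma_t^{+}=\sum_{i\le r}u_iu_i^\top$ is the orthogonal projector onto $\mathrm{range}(\Sigma_t)$.
\item $\mathrm{range}(\Sigma_t)=\ker(\Sigma_t)^\perp=\mathrm{span}(\mathrm{supp}\,\widehat p_t)$, by the quadratic form.
\item The support of $\widehat p_t$ spans $\mathrm{span}(\mathcal{A})$.
\end{enumerate}
The same spectral computation also gives $\Sigma_t^{+}\Sigma_t=\Sigma_t\Sigma_t^{+}$, which is the order in which the paper actually applies the lemma.

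One adjustment to Step 3. The lemma is only used for \textsc{Swap-ComBand}. Your spanner argument is the \textsc{Swap-ComBCP} mechanism, and there the paper works with $\Sigma_t^{-1}$ directly and never invokes this lemma. For \textsc{Swap-ComBand}, your proviso on $\mu$ is unnecessary. In \textsc{Lazy-ComBand} the exponential-weights component $\widetilde p_{k,l,h}$ is re-initialized uniform on $\mathcal{A}$ at every restart, and multiplicative updates keep all weights strictly positive. So with $\gamma<1$, every $p_{k,l,h}$, and hence $\widehat p_t$, has full support on $\mathcal{A}$, and Step 3 holds with no assumption on $\mu$.

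The same full-support property is what lets the original \textsc{ComBand} corollary be stated with $\mathrm{span}(\mathcal{A})$. Your fallback to the ``span of the support'' version is therefore not needed. The spanning condition on $\mu$ does matter elsewhere: comparing smallest nonzero eigenvalues via $\Sigma_t\succeq\gamma\,\mathbb{E}_{M\sim\mu}[MM^\top]$ requires $\mathrm{range}$ of the $\mu$-co-occurrence matrix to equal $\mathrm{span}(\mathcal{A})$. It plays no role in this lemma.
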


\begin{lemma}\label{comband: properties}
Let $\lambda_{\min}$ be the smallest nonzero eigenvalue of the co-occurrence matrix $\Sigma$ under the law of $\mu$. For all $k \in \{2,...,K+1\}$, $l \in \left[\ceil*{\frac{T}{H^{k}}}\right]$ and $h \in [H]$, $\textsc{Lazy-ComBand}$ satisfies the following properties:

\begin{enumerate}
    \item (Unbiasedness): $ \mathbb{E}\left[\sum_{M \in \mathcal{A}} \widetilde{p}_{k,l,h}(M) {\widetilde{X}}_{k,l,h}\cdot M\right] = \mathbb{E}\left[\sum_{M \in \mathcal{A}} \widetilde{p}_{k,l,h}(M) X_{k,l,h}\cdot M\right]$
    \item (Boundedness): $ \left|\widetilde{X}_{k,l,h} \cdot M\right| \leq \frac{H^{k-1} m}{\gamma \lambda_{min}} $
    \item (Bounded variance): $ \mathbb{E}\left[\sum_{M \in \mathcal{A}} \widetilde{p}_{k,l,h}(M) \widetilde{X}_{k,l,h}^2 \cdot M\right] \leq \frac{H^{2k-2} m^2 d}{\gamma \lambda_{\text{min}}} $
\end{enumerate}

\end{lemma}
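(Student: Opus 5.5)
We prove the three properties of Lemma~\ref{comband: properties} by the same route as Lemma~\ref{bcp: properties}. The two new ingredients are the pseudo-inverse projection identity (Lemma~\ref{lem:projection}) and a minimum-eigenvalue bound for the master's co-occurrence matrix obtained from the mixing step. Throughout we fix $(k,l,h)$ and write $\Sigma_\tau$ for the co-occurrence matrix under the master's policy $\widehat p_\tau$ at day $\tau$ of meta-day $h$.

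\textbf{Eigenvalue bound.} First we record the bound that drives both (2) and (3). Since $\widehat p_\tau=\frac1K\sum_{k'}p_{k',\tau}$ and each $p_{k',\tau}\succeq\gamma\mu$ pointwise, we have
\[
\Sigma_\tau\succeq \frac{\gamma}{K}\,\mathbb{E}_{\mu}[MM^\top]
\quad\text{on } \mathrm{span}(\mathcal{A}).
\]
Here the factor $1/K$ must either be absorbed into the constants or avoided by also assuming all scale learners mix with $\mu$. Consequently the smallest nonzero eigenvalue of $\Sigma_\tau$ is at least of order $\gamma\lambda_{\min}$. This mirrors the Weyl-inequality step used in Lemma~\ref{bcp: properties}, and I will treat it as the same (possibly $K$-dependent) constant. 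I expect this step to be the main obstacle.

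\textbf{Unbiasedness (1).} Condition on $\mathcal F_\tau$. Then
\[
\mathbb{E}[\widetilde R_\tau\mid\mathcal F_\tau]=\Sigma_\tau^+\,\mathbb{E}[M_\tau M_\tau^\top]\,R_\tau=\Sigma_\tau^+\Sigma_\tau R_\tau=R_\tau^{\parallel}
\]
by Lemma~\ref{lem:projection}. For every $M\in\mathcal A$ we have $M\cdot R^{\parallel}=M\cdot R$. The weights $\widetilde p_{k,l,h}$ are $\mathcal F$-measurable at the start of the meta-day, since they are frozen during it. Summing over $\tau$ in the meta-day and applying the tower property then gives (1).

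\textbf{Boundedness (2).} For each day $\tau$,
\[
|\widetilde R_\tau\cdot M|=|r_\tau|\,|M^\top\Sigma_\tau^+M_\tau|\le \|M\|_2\|M_\tau\|_2\,\lambda_{\max}(\Sigma_\tau^+)\le \frac{m}{\gamma\lambda_{\min}},
\]
using $r_\tau\le1$, $\|M\|_2^2\le m$, and the eigenvalue bound above. Summing over at most $H^{k-1}$ days gives (2).

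\textbf{Bounded variance (3).} Write the left side as $\mathbb{E}[\widehat M\cdot\widetilde X^2]$ with $\widehat M\sim\widetilde p_{k,l,h}$ drawn independently; note that $\widetilde X^2\cdot M=\sum_i M(i)\widetilde X(i)^2$. Apply Cauchy--Schwarz over the $H^{k-1}$ days to get
\[
\widetilde X^2\le H^{k-1}\sum_\tau\widetilde R_\tau^2 \quad\text{coordinatewise}.
\]
Next bound $\widehat M\cdot\widetilde R_\tau^2\le\|\widetilde R_\tau\|_2^2\le M_\tau^\top\Sigma_\tau^{+2}M_\tau$. Lemma~\ref{lem:key_} bounds the conditional expectation of this by $d/\lambda_{\min}(\Sigma_\tau)\le d/(\gamma\lambda_{\min})$. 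This yields
\[
H^{2k-2}\,\frac{d}{\gamma\lambda_{\min}}.
\]
The claimed $m^2$ factor gives slack: if a sharper coordinate treatment is needed, bound instead $\widehat M\cdot\widetilde R_\tau^2\le m\|\widetilde R_\tau\|_\infty^2$, which stays within the stated bound.
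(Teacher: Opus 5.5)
Your arguments for (1) and (2) coincide with the paper's. For (1) you use $\Sigma_\tau^+\Sigma_\tau R_\tau=R_\tau^{\parallel}$ with $R_\tau^{\parallel}\cdot M=R_\tau\cdot M$, plus the tower property. For (2) you use $|M^\top\Sigma_\tau^+M_\tau|\le m/\lambda_{\min}(\Sigma_\tau)$, summed over at most $H^{k-1}$ days.

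The step you call the main obstacle is immediate, but your version of it loses a factor the lemma cannot afford. Every scale learner runs \textsc{Lazy-ComBand} with the same $\gamma$ and the same $\mu$, so each $p_{k',\tau}\ge\gamma\mu$ pointwise. Therefore so is their average: $\widehat p_\tau(M)=\frac{1}{K}\sum_{k'}p_{k',\tau}(M)\ge\gamma\mu(M)$. Equivalently, $\widehat p_\tau=\gamma\mu+(1-\gamma)\frac{1}{K}\sum_{k'}\widetilde p_{k',\tau}$, whence $\Sigma_\tau\succeq\gamma\,\mathbb{E}_\mu[MM^\top]$ with no $1/K$. This is exactly the Weyl step the paper uses.

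Your $\gamma/K$ is an averaging slip, since your own premise already gives $\gamma$. It cannot be absorbed into constants: the lemma states explicit bounds, (2) has no slack, and the $m^2$ slack in (3) does not cover a factor $K$ in general. (To pass from $\Sigma_\tau\succeq\gamma\,\mathbb{E}_\mu[MM^\top]$ to a bound on the smallest \emph{nonzero} eigenvalue, you also need the support of $\mu$ to span $\mathrm{span}(\mathcal{A})$, so that both matrices have the same range. This holds for the spanner or uniform choice of $\mu$, and the paper assumes it tacitly as well.)

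For (3) you take a genuinely different and better route. The paper's chain actually starts from the second moment $\mathbb{E}[\sum_M\widetilde p_{k,l,h}(M)(\widetilde X_{k,l,h}\cdot M)^2]$ and pays a factor $m$ to reach $\widehat M\cdot\widetilde X^2$. It then replaces $\widehat M\cdot\widetilde R_\tau^2$ by $M_\tau^\top\Sigma_\tau^+\widehat M\widehat M^\top\Sigma_\tau^+M_\tau$ and pays another $m$ through $\lambda_{\max}(\widehat M\widehat M^\top)\le m$; that is where the $m^2$ comes from.

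Your pointwise bound $\widehat M\cdot\widetilde R_\tau^2\le\|\widetilde R_\tau\|_2^2\le M_\tau^\top\Sigma_\tau^{+2}M_\tau$ uses only $\widehat M\in\{0,1\}^d$ and $r_\tau\le 1$. It gives $H^{2k-2}d/(\gamma\lambda_{\min})$. Combined with $(\widetilde X\cdot M)^2\le m\,\widetilde X^2\cdot M$, it also bounds the second moment that the OMD lemma actually consumes by $H^{2k-2}md/(\gamma\lambda_{\min})$.

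Your route also sidesteps the paper's intermediate comparison. That comparison amounts to $\sum_{i\in S}v_i^2\le(\sum_{i\in S}v_i)^2$, with $v=\Sigma_\tau^+M_\tau$ and $S$ the support of $\widehat M$. It fails when $v$ has entries of both signs. The paper's end result survives only because the correct bound $\|v\|_2^2$ lies below the $m\|v\|_2^2$ it arrives at. Your closing remark about $m\|\widetilde R_\tau\|_\infty^2$ is not needed.
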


\begin{proof}

$\bullet \quad \quad$ 1.

We have that,

\begin{align}
    \sum_{\tau=1}^{H^{k-1}} \mathbb{E}_{k,l,h,\tau}\left[\widetilde{R}_{k,l,h,\tau}\right]
    &= \sum_{\tau=1}^{H^{k-1}} \mathbb{E}_{k,l,h,\tau}\left[r_{k,l,h,\tau}\Sigma_{k,l,h,\tau}^{+}M_{k,l,h,\tau}\right] \\
    &= \sum_{\tau=1}^{H^{k-1}} \mathbb{E}_{k,l,h,\tau}\left[\Sigma_{k,l,h,\tau}^{+}M_{k,l,h,\tau} M_{k,l,h,\tau} \cdot {R}_{k,l,h,\tau}\right] \\
    &= \sum_{\tau=1}^{H^{k-1}} \left( \Sigma_{k,l,h,\tau}^{+} \Sigma_{k,l,h,\tau} R_{k,l,h,\tau}\right) \\
    &= \sum_{\tau=1}^{H^{k-1}} {R_{k,l,h,\tau}^{\parallel}} \label{proj_step}
\end{align}

where in \ref{proj_step} we used Lemma \ref{lem:projection}. Now, based on the above, for any $M \in \mathcal{A}$, we have that,

\begin{align}
    \sum_{\tau=1}^{H^{k-1}} \mathbb{E}_{k,l,h,\tau}\left[\widetilde{R}_{k,l,h,\tau}\right] \cdot M
    &= \sum_{\tau=1}^{H^{k-1}} {R_{k,l,h,\tau}^{\parallel}} \cdot M \\
    &= \sum_{\tau=1}^{H^{k-1}} R_{k,l,h,\tau} \cdot M \label{proj_to_normal_step} \\
    &= X_{k,l,h,\tau} \cdot M \label{prefinal_exp_comband}
\end{align}

where in \ref{proj_to_normal_step} we used the fact that $R_{k,l,h,\tau} = {R_{k,l,h,\tau}^{\parallel}} + R^{\perp}_{k,l,h,\tau}$, and also that $R^{\perp}_{k,l,h,\tau} \cdot M = 0$. The above also implies that

\begin{align}
    \mathbb{E}_{k,l,h,\tau}\left[\sum_{M \in \mathcal{A}} \widetilde{p}_{k,l,h}(M) \widetilde{X}_{k,l,h}\cdot M \right] 
    &= \mathbb{E}_{k,l,h,\tau}\left[\sum_{M \in \mathcal{A}} \widetilde{p}_{k,l,h}(M) \sum_{\tau=1}^{H^{k-1}} \widetilde{R}_{k,l,h,\tau}\cdot M \right] \\
    &=  \sum_{M \in \mathcal{A}} \widetilde{p}_{k,l,h}(M) \mathbb{E}_{k,l,h,\tau}\left[\sum_{\tau=1}^{H^{k-1}} \widetilde{R}_{k,l,h,\tau} \cdot M\right] \\
    &=  \sum_{M \in \mathcal{A}} \widetilde{p}_{k,l,h}(M) \mathbb{E}_{k,l,h,\tau} \widetilde{X}_{k,l,h} \cdot M \label{final_step_exp_comband}
\end{align}

where in \ref{final_step_exp_comband} we used \ref{prefinal_exp_comband}. Finally, taking expectations in both sides in \ref{final_step_exp_comband}, and using the tower property, we get the desired result.

$\bullet \quad \quad$ 2.

We have that,

\begin{align}
    \left|\widetilde{X}_{k,l,h} \cdot M\right| &\leq \sum_{\tau=1}^{H^{k-1}} \left|r_{k,l,h,\tau}\right| \left|M^T \Sigma_{k,l,h,\tau}^{+}M_{k,l,h,\tau}\right|  \\
    &\leq H^{k-1}  \left\| \Sigma_{k,l,h,\tau}^{+} \right \|_2 \max_{M \in \mathcal{A}} \|M\|^2 \label{comband: bound1} \\
    &\leq \frac{H^{k-1} m}{\lambda_{\text{min}}(\Sigma_{k,l,h,\tau})} \\
    &\leq \frac{H^{k-1} m}{\gamma\lambda_{\text{min}}} \label{comband: bound2}
\end{align}

where in \ref{comband: bound1} we used the fact that $|r_t| \leq 1$, and in \ref{comband: bound2} we used the Weyl's inequality on $\lambda_{\min }\left(\Sigma_{k,l,h,\tau}\right) \geq \gamma \lambda_{\min }$, where we denote by $\lambda_{\min }\left(\Sigma_{k,l,h,\tau}\right)$ the smallest nonzero eigenvalue of the co-occurrence matrix under the law of $\widehat{p}_{k,l,h,\tau}$, and by $\lambda_{\min}$ the smallest nonzero eigenvalue of the co-occurrence matrix under the law of $\mu$.

\bigskip

$\bullet \quad \quad$ 3. 

\medskip

For all $k \in \{1,...,K-1\}$, $l \in \left[\ceil*{\frac{T}{H^{k}}}\right]$ and $h \in [H]$, we have:

Let $Var = \mathbb{E}_{k,l,h}\left[\sum_{M \in \mathcal{A}} \widetilde{p}_{k,l,h}(M) \left(\widetilde{X}_{k,l,h}\cdot M\right)^2\right]$

\begin{align}
     Var
    & \leq m \mathbb{E}_{k,l,h}\left[\sum_{M \in \mathcal{A}} \widetilde{p}_{k,l,h}(M) \widetilde{X}_{k,l,h}^2\cdot M\right] \label{comband: var__1} \\
    & = m \mathbb{E}_{k,l,h}\left[\widehat{M}_{k,l,h} \cdot \widetilde{X}_{k,l,h}^2\right] \label{comband: var__2} \\
    & = m \mathbb{E}_{k,l,h}\left[\widehat{M}_{k,l,h} \cdot \left(\sum_{\tau=1}^{H^{k-1}} \widetilde{R}_{k,l,h,\tau}\right)^2\right] \\
     & \leq H^{k-1} m \mathbb{E}_{k,l,h}\left[\widehat{M}_{k,l,h} \cdot \sum_{\tau=1}^{H^{k-1}} \widetilde{R}_{k,l,h,\tau}^2\right] \label{comband: var__3} \\
    & = H^{k-1} m  \sum_{\tau=1}^{H^{k-1}} \mathbb{E}_{k,l,h}\left[\widehat{M}^2_{k,l,h} \cdot  \widetilde{R}_{k,l,h,\tau}^2 \right]\label{comband: var__5}  \\
    & \leq H^{k-1} m  \sum_{\tau=1}^{H^{k-1}} \mathbb{E}_{k,l,h}\left[M_{k,l,h,\tau}^T \Sigma_{k,l,h,\tau}^{+} \widehat{M}_{k,l,h} \widehat{M}_{k,l,h}^T  \Sigma_{k,l,h,\tau}^{+} M_{k,l,h,\tau} \right] \label{comband: var__55} \\
    & \leq H^{k-1} m \sum_{\tau=1}^{H^{k-1}} \mathbb{E}_{k,l,h}\left[\lambda_{\text{max}}(\widehat{M}_{k,l,h} \widehat{M}_{k,l,h}^T) M_{k,l,h,\tau}^T \Sigma_{k,l,h,\tau}^{+2} M_{k,l,h,\tau} \right] \label{comband: var___6} \\
    & \leq H^{k-1} m \cdot \max_{M \in \mathcal{A}} \lambda_{\text{max}}(MM^T) \cdot \sum_{\tau=1}^{H^{k-1}} \mathbb{E}_{k,l,h}\left[ M_{k,l,h,\tau}^T \Sigma_{k,l,h,\tau}^{+2} M_{k,l,h,\tau} \right] \\
    & \leq H^{k-1} m^2 \sum_{\tau=1}^{H^{k-1}} \mathbb{E}_{k,l,h}\left[ M_{k,l,h,\tau}^T \Sigma_{k,l,h,\tau}^{+2} M_{k,l,h,\tau} \right] \label{comband: var___7} 
\end{align}

\medskip

where in \ref{comband: var__1} we used the Holder's inequality $(a_1 + ... + a_N)^2 \leq N\sum_{i=1}^{N}a_i^2$, in \ref{comband: var__2} we defined $\widehat{M}_{k,l,h}$ to be a random pure action under the law of $\widetilde{p}_{k,l,h}$, independent of $M_{k,l,h,\tau}$ which is under the law of $\widehat{p}_{k,l,h}$, in \ref{comband: var__3} we used the inequality $(a_1 + ... + a_N)^2 \leq N\sum_{i=1}^{N}a_i^2$, in \ref{comband: var__5} we used the fact $\widehat{M}_{k,l,h} = \widehat{M}_{k,l,h}^2$ (squares are in an element-wise manner), in \ref{comband: var__55} we used the the fact that the bandit feedback reward $r_{k,l,h,\tau} \leq 1$, in \ref{comband: var___6} we used the inequality $x^TABAx^T \leq \lambda_{\text{max}}(B)x^TA^2x$ for any vector $x \in \mathbb{R}^d$ and symmetric matrices $A,B \in \mathbb{R}^{d\times d}$, in \ref{comband: var___7} we used the fact that $\max_{M \in \mathcal{A}} \lambda_{\text{max}}(MM^T) \leq m$.

\medskip

The next thing to do is to bound each summand of the above sum. Before doing so, we note that each of these summands is an expectation under the law of the master's distribution $\widehat{p}_{k,l,h,\tau}$.  Let $\lambda_i(\Sigma_{k,l,h,\tau})$ be the eigenvalue of $\Sigma_{k,l,h,\tau}$ corresponding to the eigenvector $u_i(\Sigma_{k,l,h,\tau})$. For brevity, we abuse notation and instead simply use $\lambda_{\tau,i}$ and $u_{\tau,i}$. Moreover, let $\lambda_{\tau,1}, ..., \lambda_{\tau,r}$ be the $r$ nonzero eigenvalues with corresponding eigenvectors $u_{\tau,1}, ..., u_{\tau,r}$. Based on the above, using Lemma \ref{lem:key}, we have the following:

\begin{align}
\mathbb{E}_{k,l,h}\left[ M_{k,l,h,\tau}^T \Sigma_{k,l,h,\tau}^{+2} M_{k,l,h,\tau} \right] 
& \leq d \lambda^{-1}_{\text{min}}(\Sigma_{k,l,h,\tau}) \label{comband: variance_9.5} \\
& \leq \frac{d}{\gamma \lambda_{\text{min}}} \label{comband: variance_11} 
\end{align}

where in \ref{comband: variance_11} we used the Weyl's inequality applying to $\lambda_{\text{min}}(\Sigma_{k,l,h,\tau}) \geq \gamma\lambda_{\text{min}}$.   

\medskip

Putting the above all together, we have the following:

\begin{align}
\mathbb{E}_{k,l,h}\left[\sum_{M \in \mathcal{A}} \widetilde{p}_{k,l,h}(M) \widetilde{X}_{k,l,h}^2 \cdot M\right] 
& \leq \frac{H^{2k-2} m^2 d}{\gamma \lambda_{\text{min}}}
\end{align}

Finally, taking expectations in both sides we get the desired result:

\begin{align}
\mathbb{E}\left[\sum_{M \in \mathcal{A}} \widetilde{p}_{k,l,h}(M) \widetilde{X}_{k,l,h}^2 \cdot M\right] 
& \leq \frac{H^{2k-2} m^2 d}{\gamma \lambda_{\text{min}}}
\end{align}

\end{proof}

\medskip




Requiring $\eta \leq \frac{\gamma \lambda_{\text{min}}}{H^{k-1}m}$ (this condition will be verified later), \textsc{Lazy-ComBand} satisfies the following Online Mirror Descent lemma:

\begin{lemma}[Online Mirror Descent on $\mathcal{A}$]\label{lem: omd_comband}
    If $\eta |\widetilde{X}_{k,l,h} \cdot M| \leq 1$, it holds that
    \begin{align}
    \mathbb{E}\left[\sum_{h=1}^H M^* \cdot \widetilde{X}_{k,l,h} -\sum_{h=1}^H \sum_{M \in \mathcal{A}} \widetilde{p}_{k,l,h}(M) \widetilde{X}_{k,l,h} \cdot M\right]  
    &\leq \eta \mathbb{E}\left[\sum_{h=1}^H \sum_{M \in \mathcal{A}} \widetilde{p}_{k,l,h}(M) \left(\widetilde{X}_{k,l,h}\cdot M\right)^2\right]  + \frac{\log |\mathcal{A}|}{\eta}. \nonumber 
    \end{align} 
\end{lemma}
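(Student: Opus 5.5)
This is the standard Hedge/exponential-weights regret inequality over the finite action set $\mathcal{A}$, applied to the meta-day reward estimates $\widetilde{X}_{k,l,h}$; the proof is pathwise. We fix $k,l$, drop those indices, and write $W_h=\sum_{M\in\mathcal{A}} w_h(M)$, where $w_1\equiv 1$ and $w_{h+1}(M)=w_h(M)\exp(\eta\,\widetilde{X}_h\cdot M)$. The update step of \textsc{Lazy-ComBand}$_{k,l}$ (read with the implicit normalization, i.e.\ exponent $\eta\widetilde{X}_h\cdot M$) gives $\widetilde{p}_h(M)=w_h(M)/W_h$.

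The potential argument has two sides.

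\emph{Lower bound.} For the comparator $M^*$ we have
\[
\log\frac{W_{H+1}}{W_1}\ \ge\ \eta\sum_{h}\widetilde{X}_h\cdot M^*-\log|\mathcal{A}|,
\]
since $W_{H+1}\ge w_{H+1}(M^*)$ and $W_1=|\mathcal{A}|$.

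\emph{Upper bound.} For each $h$,
\[
\log\frac{W_{h+1}}{W_h}=\log\sum_M \widetilde{p}_h(M)\,e^{\eta\widetilde{X}_h\cdot M}.
\]
Under the hypothesis $\eta|\widetilde{X}_h\cdot M|\le 1$ we use $e^x\le 1+x+x^2$ for $x\le 1$. This holds for negative $x$ too; the hypothesis bounds $|x|$, which is exactly what is needed, because the estimates $\widetilde{X}$ may be negative. Combined with $\log(1+y)\le y$, it yields
\[
\log\frac{W_{h+1}}{W_h}\le \eta\sum_M\widetilde{p}_h(M)\,\widetilde{X}_h\cdot M+\eta^2\sum_M\widetilde{p}_h(M)\,(\widetilde{X}_h\cdot M)^2 .
\]

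Summing the upper bound over $h=1,\dots,H$ (or $H'$), comparing with the lower bound, and dividing by $\eta$ gives the pathwise inequality
\[
\sum_h M^*\cdot\widetilde{X}_h-\sum_h\sum_M\widetilde{p}_h(M)\,\widetilde{X}_h\cdot M\le \eta\sum_h\sum_M\widetilde{p}_h(M)(\widetilde{X}_h\cdot M)^2+\frac{\log|\mathcal{A}|}{\eta}.
\]
Taking expectations of both sides, by linearity, gives the statement. No conditioning is needed here; unbiasedness w.r.t.\ the master's policy is only used afterwards, via Lemma~\ref{comband: properties}.

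The only delicate point is the sign issue in the quadratic bound on the exponential. Because $\widetilde{X}_h$ can have negative entries (the pseudo-inverse estimator), the inequality $e^x\le 1+x+x^2$ must be invoked on the two-sided range $x\le 1$, which is guaranteed by the stated boundedness hypothesis. Everything else is routine bookkeeping, including the case $H'<H$: the argument holds for any number of rounds.
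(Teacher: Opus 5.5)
Your proof is correct and is the standard exponential-weights potential argument: the lower bound comes from $W_{H+1}\ge w_{H+1}(M^*)$, and the upper bound from $e^x\le 1+x+x^2$ for $x\le 1$ together with $\log(1+y)\le y$. This is what the paper implicitly relies on, since it states this lemma without proof and, for the analogous ComBCP lemma, defers to the standard argument of Lemma A1 in the cited work. One small remark is that the argument only uses the one-sided bound $\eta\,\widetilde{X}_h\cdot M\le 1$, because $e^x\le 1+x+x^2$ holds for arbitrarily negative $x$; so negative estimates are harmless, and the two-sided hypothesis is more than you need rather than ``exactly what is needed.''
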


\medskip

\begin{theorem}[No-External-Regret]\label{thm:comband_no_regret_}
The external regret of $\textsc{Lazy-ComBand}_{k,l}$ is at most $H^{k-1} H^{2/3} \left(m d + \frac{m^2 \log d}{\lambda_{\text{min}}} + {2} \right)$.
\end{theorem}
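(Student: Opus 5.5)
The plan mirrors the proof of Theorem \ref{thm:combcp_no_regret_}, with the Hedge-over-$\mathcal{A}$ structure of \textsc{Lazy-ComBand} in place of the projected OMD. Fix $k,l$ and let $M^*$ be the best fixed action in hindsight for the aggregated rewards $X_{k,l,h}$. We may assume $H'=H$; a shorter final interval only shrinks every sum.

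\textbf{Step 1: remove the mixing.} Since $p_{k,l,h}=(1-\gamma)\widetilde{p}_{k,l,h}+\gamma\mu$, and since $X_{k,l,h}\cdot M\ge 0$ and $\sum_M \widetilde{p}_{k,l,h}(M)X_{k,l,h}\cdot M\le H^{k-1}$ (rewards lie in $[0,1]$), we get
\[
\reg\le \sum_{h}\mathbb{E}\Big[M^*\cdot X_{k,l,h}-\sum_M\widetilde{p}_{k,l,h}(M)X_{k,l,h}\cdot M\Big]+\gamma H^{k}.
\]

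\textbf{Step 2: pass to the estimates.} We replace $X_{k,l,h}$ by $\widetilde{X}_{k,l,h}$ in both terms using Lemma \ref{comband: properties}(1). For the $M^*$ term we use \eqref{prefinal_exp_comband} directly, together with the fact that $M^*$ is fixed because the adversary is oblivious. The point to note is that $\widetilde{p}_{k,l,h}$ is $\mathcal{F}$-measurable at the start of meta-day $h$. The estimates $\widetilde{R}_\tau$ inside meta-day $h$ are conditionally unbiased given the past under the master's law, so the tower property applies term by term.

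\textbf{Step 3: apply Hedge and the variance bound.} The step size $\eta=\lambda_{\min}/(H^{k-1/3}m)$ satisfies $\eta\le\gamma\lambda_{\min}/(H^{k-1}m)$ because $\gamma=H^{-1/3}$. Hence Lemma \ref{comband: properties}(2) gives $\eta|\widetilde{X}_{k,l,h}\cdot M|\le 1$, and Lemma \ref{lem: omd_comband} applies. It bounds the estimated regret by
\[
\eta\sum_h\mathbb{E}\Big[\sum_M\widetilde{p}_{k,l,h}(M)(\widetilde{X}_{k,l,h}\cdot M)^2\Big]+\frac{\log|\mathcal{A}|}{\eta}.
\]
The variance term is what the chain \eqref{comband: var__1}--\eqref{comband: var___7} plus Lemma \ref{lem:key} controls. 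Its output is $\mathbb{E}[\sum_M\widetilde{p}(M)(\widetilde{X}\cdot M)^2]\le H^{2k-2}m^2d/(\gamma\lambda_{\min})$. This is the quantity I will use, rather than the coordinate-wise form written in item (3). Summing over $H$ meta-days and multiplying by $\eta$ gives
\[
\frac{\lambda_{\min}}{H^{k-1/3}m}\cdot\frac{H^{2k-1}m^2d}{H^{-1/3}\lambda_{\min}}=H^{k-1}H^{2/3}md.
\]

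\textbf{Step 4: collect the terms.} Using $\log|\mathcal{A}|\le m\log d$ (as $|\mathcal{A}|\le d^m$), the entropy term is $\log|\mathcal{A}|/\eta\le H^{k-1}H^{2/3}m^2\log d/\lambda_{\min}$. The mixing term is $\gamma H^k=H^{k-1}H^{2/3}$. Summing the three pieces gives at most $H^{k-1}H^{2/3}(md+m^2\log d/\lambda_{\min}+2)$, where the constant $2$ absorbs the mixing term with room to spare.

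The main obstacle is keeping the variance bound consistent. Item (3) of Lemma \ref{comband: properties} is stated for $\sum_M\widetilde{p}(M)\widetilde{X}^2\cdot M$, but its derivation actually bounds $\sum_M\widetilde{p}(M)(\widetilde{X}\cdot M)^2$, which is the quantity Lemma \ref{lem: omd_comband} needs. I therefore plan to invoke the derivation's endpoint rather than the stated form of item (3), and to check that the factor $m$ gained in \eqref{comband: var__1} is indeed already included in the $m^2$.
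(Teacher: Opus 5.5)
Your proposal is correct and follows the paper's proof essentially step for step: strip the mixing, pass from $X$ to $\widetilde X$ by unbiasedness, apply Lemma \ref{lem: omd_comband} with the variance chain and Lemma \ref{lem:key}, then tune $\eta,\gamma$. Your use of $X_{k,l,h}\cdot M\ge 0$ gives mixing cost $\gamma H^k$ rather than the paper's $2\gamma H^k$. You are also right that the chain bounds $\sum_M\widetilde p(M)(\widetilde X\cdot M)^2$, which is what the Hedge lemma needs. One caution about relying on that chain: its step replacing $\widehat M^2\cdot\widetilde R_\tau^2$ by $(\widehat M\cdot\widetilde R_\tau)^2$ is not valid in general, but the endpoint you invoke still holds, since $\sum_i\widehat M(i)\,(\Sigma_\tau^{+}M_\tau)(i)^2\le\|\Sigma_\tau^{+}M_\tau\|_2^2=M_\tau^{\top}\Sigma_\tau^{+2}M_\tau$, which is even a factor $m$ better.
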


\begin{proof}

Aiming to bound the external regret of \textsc{Lazy-ComBand}, we get, 

\begin{align}
\reg
&= \max_{M \in \mathcal{A}}\left\{\sum_{h=1}^{H}X_{k,l,h}\cdot M \right\} - \sum_{h=1}^{H}\mathbb{E}\left[\sum_{M \in \mathcal{A}}p_{k,l,h}(M) X_{k,l,h}\cdot M \right] \\
&= \sum_{h=1}^{H}X_{k,l,h}\cdot M_{k,l}^*  - \sum_{h=1}^{H}\mathbb{E}\left[\sum_{M \in \mathcal{A}}p_{k,l,h}(M) X_{k,l,h}\cdot M \right] \\
&= 
\mathbb{E}\left[\sum_{h=1}^{H}X_{k,l,h}\cdot M_{k,l}^*\right]  - \sum_{h=1}^{H}\mathbb{E}\left[\sum_{M \in \mathcal{A}}p_{k,l,h}(M) X_{k,l,h}\cdot M \right] \\
&= \mathbb{E}\left[\sum_{h=1}^{H}{X}_{k,l,h}\cdot M_{k,l}^*\right]  -\sum_{h=1}^{H}\mathbb{E}\left[\sum_{M \in \mathcal{A}}[(1-\gamma) \widetilde{p}_{k,l,h}(M) + \gamma \mu(M)] {X}_{k,l,h}\cdot M \right] \\
&= \mathbb{E}\left[\sum_{h=1}^{H}{X}_{k,l,h}\cdot M_{k,l}^* - \sum_{h=1}^{H}\sum_{M \in \mathcal{A}} \widetilde{p}_{k,l,h}(M) {X}_{k,l,h}\cdot M\right] + \gamma \sum_{h=1}^{H}\mathbb{E}\left[\sum_{M \in \mathcal{A}} \widetilde{p}_{k,l,h}(M) {X}_{k,l,h}\cdot M \right] \nonumber \\ & - \gamma \sum_{h=1}^{H}\mathbb{E}\left[\sum_{M \in \mathcal{A}} \mu(M) {X}_{k,l,h}\cdot M \right] \label{comband: step1} \\
&\leq \mathbb{E}\left[\sum_{h=1}^{H}{X}_{k,l,h}\cdot M_{k,l}^* - \sum_{h=1}^{H}\sum_{M \in \mathcal{A}} \widetilde{p}_{k,l,h}(M) {X}_{k,l,h}\cdot M\right] + 2\gamma  H^{k} \label{comband: step2} \\
&= \mathbb{E}\left[\sum_{h=1}^{H}{\widetilde{X}}_{k,l,h}\cdot M_{k,l}^* - \sum_{h=1}^{H}\sum_{M \in \mathcal{A}} \widetilde{p}_{k,l,h}(M) {\widetilde{X}}_{k,l,h}\cdot M\right] + 2\gamma  H^{k} \label{comband: step3} \\
& \leq \eta \mathbb{E}\left[ \sum_{h=1}^H \sum_{M \in \mathcal{A}} \widetilde{p}_{k,l,h}(M) \left(\widetilde{X}_{k,l,h}\cdot M\right)^2 \cdot M \right] + \frac{\log |\mathcal{A}|}{\eta} + 2\gamma  H^{k} \label{comband: step4} \\
& \leq \frac{\eta H^{2k-1} m^2 d}{\gamma \lambda_{\text{min}}} + \frac{m\log d}{\eta} + 2\gamma  H^{k} \label{comband: step5} 
\end{align}

\medskip

where in \ref{comband: step2} we used the fact that $|{R}_{k,l,h} \cdot M| \leq 1$, in \ref{comband: step3} we used the unbiasedness of the estimator, in \ref{comband: step4} we applied  Lemma \ref{lem: omd_comband}, in \ref{comband: step5} we used Lemma \ref{comband: properties} (2) and that $|\mathcal{A}| \leq d^m$.

\medskip

Setting $\eta' = H^{k-1} \eta$, we have:

\begin{align}
    \reg_{k} 
    & \leq H^{k-1} \left( \frac{\eta' H m^2 d}{\gamma \lambda_{\text{min}}} + \frac{m\log d}{\eta'} + 2\gamma  H \right) 
\end{align}

Now, setting $\gamma = \frac{1}{H^{1/3}}$ and $\eta'=\frac{\gamma \lambda_{\text{min}}}{H^{1/3} m} = \frac{\lambda_{\text{min}}}{H^{2/3} m} \Rightarrow \eta = \frac{\lambda_{\text{min}}}{H^{k - 1/3} m}$, we get,

\begin{align}
    \reg_{k} 
    & \leq H^{k-1} \left( H^{2/3} m d + \frac{H^{2/3} m^2 \log d}{\lambda_{\text{min}}} + {2  H^{2/3} } \right) \\ 
    & = H^{k-1} H^{2/3} \left(m d + \frac{m^2 \log d}{\lambda_{\text{min}}} + {2} \right)
\end{align}

\medskip

Notice that with the above configuration the condition $\eta \leq \frac{\gamma \lambda_{\text{min}}}{H^{k-1}m}$ holds.

\end{proof}

\medskip

\begin{theorem}[No-Swap-Regret]\label{thm: swap_lazy_comband}
    If $H=\lfloor \log(T) \rfloor^3(g(m,d,\lambda_{\text{min}}))^3$ and $K=\lfloor \log_H(T) \rfloor$, \textsc{SwapComBand} achieves swap-regret 
    $$ \swap_{T}({\phi}) \leq \mathcal{O}\left(\frac{T\log(d\log T)}{ \log(T)}\right) ,$$
    where $g(m,d,\lambda_{\text{min}}) = \left(m^2 d + \frac{m^2 \log d}{\lambda_{\text{min}}} + {2 m} \right)$.
\end{theorem}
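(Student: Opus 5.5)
The plan mirrors the proof of Theorem \ref{thm: swap_lazy_comband_combcp}, with \textsc{Lazy-ComBand} in place of \textsc{Lazy-ComBCP}. First I invoke Lemma \ref{lem: swap_regret_}, which holds for any \textsc{Lazy-CombAlg} run inside Algorithm \ref{alg:multiscale_algo}. This gives
\[
\swap_T(\phi)\le \frac1K\sum_{k=1}^{K-1}\sum_{l=1}^{\lceil T/H^k\rceil}\reg(\textsc{Lazy-ComBand}_{k,l})+\frac TK .
\]
I then plug in Theorem \ref{thm:comband_no_regret_}. It bounds each external regret by $H^{k-1}H^{2/3}\,g'$ with $g'=md+\frac{m^2\log d}{\lambda_{\min}}+2$. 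Since $g'\le g(m,d,\lambda_{\min})$ (indeed $md\le m^2d$ and $2\le 2m$), I may replace $g'$ by $g:=g(m,d,\lambda_{\min})$.

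Next I carry out the same summation as before. Using $\lceil T/H^k\rceil\le T/H^k+1$, the double sum splits into two parts. The first is $\frac1K\sum_{k=1}^{K-1}\frac{T}{H^k}H^{k-1}H^{2/3}g\le \frac{Tg}{H^{1/3}}$. The second is $\frac1K H^{2/3}g\sum_{j=0}^{K-2}H^j\le \frac{2}{K}g\,\frac{H^{K-1}}{H^{1/3}}$, which is at most $\frac{2gT}{KH^{4/3}}$ because $H^K\le T$ by the choice $K=\lfloor\log_H T\rfloor$. Substituting $H=\lfloor\log T\rfloor^3g^3$ gives $H^{1/3}=g\lfloor\log T\rfloor$. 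Hence the first part is at most $T/\lfloor\log T\rfloor$ and the second part is of lower order. The remaining term satisfies $T/K\le T/(\log_H T-1)=O\!\big(T\log H/\log T\big)$.

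Finally I bound $\log H=3\log\lfloor\log T\rfloor+3\log g$. This gives $\swap_T(\phi)=O\!\big(T(\log\log T+\log g)/\log T\big)$. To reach the stated form $O(T\log(d\log T)/\log T)$, I need $\log g=O(\log d)$. Because $m\le d$, this holds whenever $1/\lambda_{\min}$ is polynomial in $d$. That is exactly the case when $\mu$ is the uniform distribution over a $2$-approximate barycentric spanner, where Lemma \ref{lem: 2016_lem_} gives $\lambda_{\min}\ge 1/(4d^3)$. For the other exploration distributions considered for ComBand, the usual choices have $\lambda_{\min}=\Omega(1/\mathrm{poly}(d))$.

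I expect the main obstacle to be this last point: the $O(\cdot)$ statement implicitly treats $\log(1/\lambda_{\min})$ as $O(\log d)$, so the choice of $\mu$ must be fixed (spanner or John's distribution) to justify it. A secondary check is the validity side condition of Theorem \ref{thm:comband_no_regret_}, namely $\eta\le\gamma\lambda_{\min}/(H^{k-1}m)$, which already holds for the stated parameters. I also need the assumption $H\ge2$ used in $\frac{H^{K-1}-1}{H-1}\le 2H^{K-2}$, and $T\ge H^2$ so that $K\ge2$; otherwise the bound is trivial because the swap regret is at most $T$.
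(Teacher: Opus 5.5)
Your proposal is correct and follows the paper's own proof essentially step for step: substitute the \textsc{Lazy-ComBand} regret bound into the decomposition of Lemma~\ref{lem: swap_regret_}, bound $\lceil T/H^k\rceil\le T/H^k+1$ and sum the geometric series, then plug in $H$ and $K$ to reach $\bigl(4+6\log\log T+6\log g\bigr)T/\log T$. You also make explicit two points the paper leaves implicit, since its proof stops at $\log g(m,d,\lambda_{\min})$: that $g'\le g$ reconciles the constant from Theorem~\ref{thm:comband_no_regret_} with the one in the statement, and that the final $\log(d\log T)$ form needs $\log(1/\lambda_{\min})=O(\log d)$, which the spanner bound of Lemma~\ref{lem: 2016_lem_} supplies.
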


\begin{proof}

Let $g = \left(m d + \frac{m^2 \ceil*{\log d}}{\lambda_{\text{min}}} + {2} \right)$. Putting the no-regret guarantees in \ref{basic_step}, the following hold:

\begin{align}
    \swap_{T}({\phi}) 
    & \leq \frac{1}{K}\sum_{k=1}^{K-1}\sum_{l=1}^{\ceil*{\frac{T}{H^{k}}}}\reg(\textsc{Lazy-CombAlg}_{k,l})+\frac{T}{K} \\
    & \leq \frac{1}{K}\sum_{k=1}^{K-1}\sum_{l=1}^{\ceil*{\frac{T}{H^{k}}}}g(m,d,\lambda_{\text{min}}) H^{k-1}H^{2/3} +\frac{T}{K} \\
    & \leq \frac{1}{K}\sum_{k=1}^{K-1}\left(\frac{T}{H^{k}}+1\right)g(m,d,\lambda_{\text{min}}) H^{k-1}H^{2/3} +\frac{T}{K} \\
    & = \frac{T g(m,d,\lambda_{\text{min}})}{H^{1/3}} +\frac{1}{K}H^{2/3}g(m,d,\lambda_{\text{min}})\sum_{j=0}^{K-2} H^{j}+\frac{T}{K} \\
     & = \frac{T g(m,d,\lambda_{\text{min}})}{H^{1/3}} +\frac{1}{K}H^{2/3}g(m,d,\lambda_{\text{min}}) \frac{H^{K-1}-1}{H-1}+\frac{T}{K} \\
     & \leq \frac{T g(m,d,\lambda_{\text{min}})}{H^{1/3}} +\frac{2}{K}g(m,d,\lambda_{\text{min}}) \frac{H^{K-1}}{H^{1/3}}+\frac{T}{K} 
\end{align}

We set $H=\lfloor \log(T) \rfloor^3(g(m,d,\lambda_{\text{min}}))^3$ and $K=\lfloor \log_H(T) \rfloor$ and we obtain:

\begin{align}
   \swap_{T}({\phi}) 
    & \leq  \frac{T}{\lfloor \log(T) \rfloor} +2\frac{T}{KH\lfloor \log(T) \rfloor}+\frac{T}{K} \\
    & \leq  2\frac{T}{\lfloor \log(T) \rfloor}+\frac{T}{\lfloor \log_H(T) \rfloor} \\
    & \leq  2\frac{T}{ \log(T)-1}+\frac{T}{\log_H(T)-1} \\
    & \leq  4\frac{T}{ \log(T)}+2\frac{T}{\log_H(T)} \\
    & =  4\frac{T}{ \log(T)}+2\frac{T\log(H)}{\log(T)} \\
    & \leq  \frac{\left(4+6\log(\log(T))+6\log(g(m,d,\lambda_{\text{min}}))\right)T}{ \log(T)}
\end{align}

\end{proof}

\medskip

\begin{remark}
    In \textsc{Swap-ComBand}, under the minimal assumption of an efficient linear minimization oracle for $\mathcal{A}$, $\mu$ can be efficiently constructed as the uniform distribution over a barycentric spanner (see Proposition \ref{prop:barycentric_algo}) and achieve $\lambda_{\text{min}}(\mu) \geq \frac{1}{4 d^3}$ (see Lemma \ref{lem: 2016_lem}). 
    As an alternative, $\mu$ could be set to the uniform distribution over $\mathcal{A}$, which in many cases has $1/\lambda_{\text{min}}(\mu)=\mathcal{O}(\text{poly}(d))$ (see \cite{cesa2012combinatorial}).
\end{remark}


\section{Per-Iteration Complexity in Well-Studied Combinatorial Settings}\label{appendix: apps}

\subsection{Applications of \textsc{Swap-ComBCP}} \label{appendix: apps_combcp}
We present some important combinatorial bandit settings where \textsc{Swap-ComBCP} is applicable (i.e. the $L1$ norm of the action vectors is equal to a fixed value $m$) and efficiently implementable (i.e. there exist efficient algorithms for decomposition, projection and approximate barycentric spanner calculation). For all settings we consider here, except for paths, efficient decomposition and projection algorithms are proposed in \cite{suehiro2012online}. For paths in DAGs the path polytope has a succinct description which allows efficient decomposition and projection with interior point methods (see \cite{boyd2004convex}, page 545]). Efficient calculation of an approximate barycentric spanner can be achieved if one has access to an efficient linear minimization oracle (LMO) over the action set $\mathcal{A}$ (see Proposition \ref{prop:barycentric_algo}), thus for the settings of interest it remains to show that such an LMO exists.
\begin{itemize}
 \item \textbf{m-sets:} The most basic instance of combinatorial bandits are m-sets, where the decision maker at each timestep selects a subset of $m$ out of $d$ items. In binary representation the action space is defined as $\mathcal{A} = \{M \in \{0,1\}^d \mid \sum_{i} M_i = m\}$. In this setting it trivially holds that $|M|_1=m$ and the LMO greedily chooses the top m arms with the lowest cost.
   
 \item \textbf{Spanning Trees and Graph Matroid Bandits (k-forests):} We examine an online decision-making problem where, at each timestep, the decision maker selects a spanning tree from an undirected graph $G=(V,E)$. This type of problem is relevant in certain mobile communication networks, where a minimum-cost subnetwork must be chosen in each timestep to ensure the network remains connected. Formally, the action space is defined as follows: $\mathcal{A} = \{M \in [0,1]^{|E|} \mid \text{the set of edges}\{e \mid M_e = 1\} \text{ forms a spanning tree of } G\}$. We observe that for all $M\in\mathcal{A}$ it holds that $|M|_1=|V|-1$. As for the LMO, it can be efficiently implemented by minimum spanning tree algorithms, such as Kruskal’s algorithm.

Generalizing the previous setting, let $\mathcal{A}$ denote the set of $k$-forests in a graph $G = (V, E)$, that is $\mathcal{A} = \{M \in [0,1]^{|E|} \mid \text{the set of edges}\{e \mid M_e = 1\} \text{ does not contain a cycle in } G\text{ and }|M|_1=k\}$. It is known that $\mathcal{A}$ is  a bases family of a truncation of a graphic matroid. Online learning in such action spaces has been studied in previous works (e.g. in \cite{kveton2014matroid}). For the LMO, Kruskal’s algorithm with slight adaptations works.

\item \textbf{Permutations and Truncated Permutations:} 
Consider the complete bipartite graph $K_{m,m}$, and let $\mathcal{A}$ contain all perfect matchings, that is $\mathcal{A} = \{M \in [0,1]^{m \times m} \mid \sum_{j=1}^m M_{i,j} = 1$\text{ for all }$i = 1, \dots, m$\text{ and }$\sum_{i=1}^m M_{i,j} = 1$\text{ for all }$j = 1, \dots, m\}$. Note that $\mathcal{A}$ is also equivalent to the set of all permutations of $m$ items.  Online optimization over permutations models problems such as ranking of search results or matching workers to tasks. The setting can be generalized to selecting truncated permutations of $k$ out of $m$ elements. Equivalently, a truncated permutation is a maximal matching in the complete bipartite graph between $[k]$ and $[m]$. 
Search query results typically take the form of a truncated permutation, where only the top $k$ results, out of $m$ available ones, are displayed in decreasing order of relevance. In binary representation, truncated permutations of $k$ out of $m$ items satisfy the condition $|M|_1=k$ and for the LMO efficient algorithms for Maximum Bipartite Matching can be used. One simple option is the Ford-Fulkerson algorithm. 

\item \textbf{Paths:} Perhaps the most important setting in combinatorial bandits are path planning problems. We consider a directed acyclic graph $G=(V,E)$ with a source node $s$ and a sink $t$ where at each timestep the decision maker selects a path from $s$ to $t$. In binary representation, each action vector $M$ is the incidence vector of a path. To fit the requirements of \textsc{ComBCP}, all action vectors $M$ should have the same $L1$ norm, that is all paths should have the same length. Of course this property does not hold in an arbitrary DAG, however, as shown in \cite{gyorgy2007line}, it is possible to transform any DAG by adding at most $(K-2)(|V|-2)+1$ vertices and edges (with constant weight zero) and ensure that in the new DAG all paths from $s$ to $t$ have a length of $K$, where $K$ is the maximal path length in the original graph. This way, learning paths in the transformed DAG is equivalent to learning paths in the original DAG, and the condition $|M|_1=K$ is met. The LMO can be implemented with dynamic programming with running time linear in $|V|+|E|$.

\end{itemize}

\subsection{Applications of \textsc{Swap-ComBand}}\label{appendix: apps_comband}

Next we present some applications where \textsc{Swap-ComBand} is efficiently implementable. The implementation of \textsc{Swap-ComBand} requires efficient sampling from the MWU distribution, the exploration policy $\mu$ and the uniform distribution over $\mathcal{A}$ and efficient calculation of the co-occurrence matrix. In many settings these routines can be efficiently implemented and outperform those used in \textsc{Swap-ComBCP}. Moreover, in contrast to \textsc{Swap-ComBCP}, \textsc{Swap-ComBand} does not require that all action vectors $M$ have the same $L1$ norm, which in some applications can be restrictive.

\paragraph{Paths:} The main application we consider is online path planning in DAGs. As we saw in the previous subsection \textsc{Swap-ComBCP} can be applied in this setting (after transforming the DAG to meet the requirement that all s-t paths have the same length). However, in each iteration \textsc{Swap-ComBCP} will need to perform the costly operation of projection on the path polytope. A standard approach for this step would be using interior point methods (see \cite{boyd2004convex}, page 545]), which would take $(|E|+|V|)^3$ time. In \textsc{Swap-ComBand}, the co-occurrence matrix calculation takes $\mathcal{O}(|E|^2$) time using the techniques of \cite{takimoto2003path} and sampling from MWU takes $\mathcal{O}(|E|$) time using weight pushing \cite{takimoto2003path}. Sampling from the uniform distribution over $\mathcal{A}$ takes $\mathcal{O}(|E|+|V|$) time. For the exploration policy $\mu$ we can use a barycentric spanner, similarly to \textsc{Swap-ComBCP}. Thus, in path planning problems over DAGS \textsc{Swap-ComBand} can be efficiently implemented and its per iteration complexity is improved compared to \textsc{Swap-ComBCP}.

Path planning problems over DAGs can also model other classic combinatorial bandit settings, such as \textbf{m-sets} (see \cite{cesa2012combinatorial}).

\end{document}